\documentclass[11pt]{article}
\usepackage{mystyle}
\usepackage{fullpage}

\usepackage{amsmath}
\usepackage{amssymb}
\usepackage{mathtools}
\usepackage{mathrsfs}
\usepackage{amsthm}

\usepackage{color}
\usepackage{algorithmic}
\usepackage{algorithm}
\usepackage{titletoc}
\usepackage{pdfpages}


\usepackage{hyperref,etoolbox}
\hypersetup{hidelinks}
\hypersetup{
colorlinks=true,
linkcolor=blue,
citecolor=blue
}



\def\##1\#{\begin{align}#1\end{align}}
\def\$#1\${\begin{align*}#1\end{align*}}


\usepackage{thm-restate}

\usepackage{xcolor}




\usepackage[utf8]{inputenc} 
\usepackage[T1]{fontenc}    
\usepackage{hyperref}       
\usepackage{url}            
\usepackage{booktabs}       
\usepackage{amsfonts}       
\usepackage{nicefrac} 
\usepackage{enumitem}
\usepackage{microtype}      
\usepackage{xcolor}         
\usepackage{graphicx}
\usepackage{titlesec}
\usepackage{subfigure}

\usepackage{titletoc}
\allowdisplaybreaks[4]
\usepackage{booktabs} 
\usepackage{caption}
\theoremstyle{plain}

\usepackage[textsize=tiny]{todonotes}
\usepackage{ifthen}
\newcommand{\compilehidecomments}{true}
\ifthenelse{ \equal{\compilehidecomments}{false} }{
	\newcommand{\nuoya}[1]{}
	\newcommand{\zhuoran}[1]{}
	\newcommand{\zhaoran}[1]{}
        \newcommand{\zhihan}[1]{}
}{
\newcommand{\nuoya}[1]{{\color{purple}[\text{Nuoya:} #1]}}
\newcommand{\zhuoran}[1]{{\color{red}[\text{zhuoran:} #1]}}
\newcommand{\zhihan}[1]{{\color{yellow} [\text{Zhihan:} #1]}}
\newcommand{\zhaoran}[1]{{\color{blue}[\text{zhaoran:} #1]}}
}

\newcommand{\compilefullversion}{true}
\ifthenelse{\equal{\compilefullversion}{false}}{%
	\newcommand{\OnlyInFull}[1]{}
	\newcommand{\OnlyInShort}[1]{#1}
}{%
	\newcommand{\OnlyInFull}[1]{#1}%
	\newcommand{\OnlyInShort}[1]{}%
}%

\let\hat\widehat
\let\tilde\widetilde

\usepackage{color,xcolor}
\usepackage{colortbl}
\usepackage{bbm}
\usepackage{tcolorbox}

\definecolor{red1}{HTML}{f47983}
\definecolor{blue1}{HTML}{3eede7}
\definecolor{yellow1}{HTML}{f5dd6f}

\title{\LARGE Sample-Efficient Multi-Agent RL: An Optimization Perspective}
\author{Nuoya Xiong\thanks{Equal contribution}~\thanks{IIIS, Tsinghua University. Email: \texttt{xiongny20@mails.tsinghua.edu.cn}} \qquad Zhihan Liu$^*$\thanks{Northwestern University. Email: \texttt{zhihanliu2027@u.northwestern.edu}}\qquad Zhaoran Wang\thanks{Northwestern University. Email: \texttt{zhaoranwang@gmail.com}} \qquad  Zhuoran Yang\thanks{Yale University. Email: \texttt{zhuoran.yang@yale.edu}}  }
\date{}
\begin{document}
\maketitle
\begin{sloppypar}
\begin{abstract}
    We study multi-agent reinforcement learning (MARL) for the general-sum Markov Games (MGs) under the general function approximation. 
    In order to find the minimum assumption for sample-efficient learning, we introduce a novel complexity measure called the Multi-Agent Decoupling Coefficient (MADC) for general-sum MGs. Using this measure, we propose the first unified algorithmic framework that ensures sample efficiency in learning Nash Equilibrium, Coarse Correlated Equilibrium, and Correlated Equilibrium for both model-based and model-free MARL problems with low MADC. We also show that our algorithm provides comparable sublinear regret to the existing works. Moreover, our algorithm combines an equilibrium-solving oracle with a single objective optimization subprocedure that solves for the regularized payoff of each deterministic joint policy, which avoids solving constrained optimization problems within data-dependent constraints \citep{jin2020sample, wang2023breaking} or executing sampling procedures with complex multi-objective optimization problems \citep{foster2023complexity}, thus being more amenable to empirical implementation. 
\end{abstract}
\newpage
\begin{Large} \textbf{Contents}\end{Large}

\begin{large} \startcontents
\printcontents{}{1}{}\end{large}
\newpage


\section{Introduction}
Multi-agent reinforcement learning (MARL) has achieved remarkable empirical successes in solving complicated  games involving sequential and strategic decision-making across multiple agents   
\citep{vinyals2019grandmaster, brown2018superhuman, silver2016mastering}. 
These achievements have catalyzed many research efforts focusing on developing efficient MARL algorithms in a  theoretically principled manner. 
Specifically, a multi-agent system is typically modeled as a  general-sum Markov Game  (MG)  \citep{littman1994markov},
with the primary aim of efficiently discerning a certain equilibrium notion among multiple agents from data collected via online interactions.
Some popular equilibrium notions include Nash equilibrium (NE), correlated equilibrium (CE), and coarse correlated equilibrium (CCE).



However, multi-agent general-sum Markov Games (MGs) bring forth various challenges.
In particular, empirical application suffers from the large state space. 
Such a challenge necessitates the use of the function approximation as an effective way to extract the essential features of RL problems and avoid dealing directly with the large state space. 
Yet, adopting function approximation in a general-sum MG brings about additional complexities not found in single-agent RL or a zero-sum MG. 
Many prevailing studies on single-agent RL or two-agent zero-sum MGs with the function approximation
  leverage the special relationships between the optimal policy and the optimal value function  \citep{jin2021bellman, du2021bilinear, zhong2022gec, jin2022power, huang2021towards}.
In particular, 
in single-agent RL, 
the optimal policy is the greedy policy with respect to the optimal value function. 
Whereas in a two-agent zero-sum MG,
the Nash equilibrium is obtained by solving a minimax estimation problem based on the optimal value function. 
Contrastingly, in a general-sum MG, individual agents possess distinct value functions, and thus there exists no unified optimal value function that characterizes the equilibrium behavior.
Moreover, unlike a zero-sum MG, a general-sum 
MG can admit diverse equilibrium notions, where each corresponds to a set of policies. Consequently, methodologies developed for single-agent RL or zero-sum MGs cannot be directly extended to general-sum MGs.

Recently, several works propose sample-efficient  RL algorithms for general-sum MGs. 
In particular, \cite{chen2022unified, foster2023complexity} 
propose model-based 
algorithms for learning NE/CCE/CE based on  multi-agent extensions  of the Estimation-to-Decision algorithm 
\citep{foster2021statistical}, and they establish regret upper bounds in terms of complexity metrics that extend  Decision-Estimation Coefficient \citep{foster2021statistical} to MGs.
In addition, \cite{wang2023breaking} study model-free RL for general-sum MGs with the general function approximation. 
They focus on developing a  decentralized and no-regret algorithm that finds a CCE. 
Thus, it seems unclear how to design a provably sample-efficient MARL algorithm for NE/CCE/CE for general-sum MGs in a \emph{model-free} manner. 
Furthermore, motivated by the recent development in single-agent RL  \citep{jin2021bellman, du2021bilinear, zhong2022gec, foster2021statistical, liu2023objective}, we aim to develop a unified algorithmic framework for MARL that covers both model-free and model-based approaches. 
Thus, 
we aim to address the following questions:

\begin{center}
Can we design a  unified algorithmic framework for general-sum MGs such that  
(i) it is provably sample-efficient in learning NE/CCE/CE in the context of the function approximation and (ii)  it covers both model-free and model-based MARL approaches?
\end{center}

In this paper, we provide an affirmative answer to the above questions. 
Specifically, we propose a unified algorithmic framework named \textbf{M}ulti-\textbf{A}gent \textbf{M}aximize-to-\textbf{EX}plore (MAMEX) for general-sum MGs with the general function approximation.   
MAMEX 
extends the framework of   Maximize-to-Explore    \citep{liu2023objective} to  
general-sum MGs 
by employing it together with an 
equilibrium solver for general-sum normal-form games defined over the policy space. 

Maximize-to-Explore (MEX) is a class  of  RL algorithms for single-agent MDP and two-agent zero-sum MGs  
where each new policy is updated by solving an optimization problem involving a hypothesis $f$, which can be regarded as the action-value function in the model-free version and the transition model in the model-based version.
The optimization objective of MEX contains two terms ---  (a) the optimal value with respect to the hypothesis $f$ and (b) a loss function computed from data that quantifies how far $f$ is from being the true hypothesis.
Here, the term (a) reflects the planning part of online RL and leverages the fact that the optimal policy is uniquely characterized by the given hypothesis. 
On the other hand, the term (b), which can be the mean-squared Bellman error or log-likelihood function, reflects the estimation part of online RL.
By optimizing the sum of (a) and (b) over the space of hypotheses \emph{without any data-dependent constraints}, MEX balances exploitation with exploration in the context of the function approximation.


However, the first term in MEX's optimization objective leverages the fact that the optimal policy can be uniquely constructed from the optimal value function or the true model, using a greedy step or dynamic programming. 
Such a nice property cannot be extended to general-sum MGs, where the relationship between the equilibrium policies and value function is more complicated, and each agent has its own value function. As a result,  it is impractical to construct a single-objective optimization problem in the style of MEX over the hypothesis space for general-sum MGs.


Instead of optimizing over the spaces of hypotheses, 
MAMEX optimizes over the policy space. 
Specifically, 
in each iteration, MAMEX updates the joint policy of all agents by solving for a desired equilibrium  (NE/CCE/CE) of a \emph{normal-form game}, where the pure strategies are a class of joint policies of the  $n$  agents, e.g., the class of deterministic joint policies.
Besides, for each pure strategy of this normal form game, the corresponding payoff function is obtained by solving a regularized optimization problem over the hypothesis space  \`{a} la  MEX. 
Thus,  policy updates in MAMEX  involve the following  two steps:
\begin{itemize}
    \item [(i)] For each pure strategy $\pi$, construct the payoff function $\overline V_i (\pi)$ for each agent $i$ by solving an  unconstrained and  regularized optimization problem;
    \item [(ii)] Compute the NE/CCE/CE of the normal-form game over the space of pure strategies with payoff functions $\{ \overline {V_i} (\pi) \}_{i=1}^n$, where $n$ is the number of agents. 
\end{itemize}
The implementation of MAMEX only requires an oracle for solving a single-objective and unconstrained optimization problem and an oracle for solving NE/CCE/CE of a normal-form game. 
Compared to existing works that either solve  constrained optimization subproblems within  data-dependent constraints \citep{wang2023breaking},  or complex multi-objective or minimax optimization subproblems \citep{foster2023complexity, chen2022unified}, 
MAMEX is more amenable to practical implementations. 
Furthermore, step (i) of MAMEX resembles MEX,
which enables both model-free and model-based instantiations.  


We prove that MAMEX is provably sample-efficient in a rich class of general-sum MGs. 
To this end, \
we introduce a novel complexity measure named \textbf{M}ulti-\textbf{A}gent \textbf{D}ecoupling \textbf{C}oefficient (MADC) to capture the exploration-exploitation tradeoff in MARL. 
Compared to  the decoupling coefficient and its variants \citep{dann2021provably, agarwal2022model, zhong2022gec} proposed for the single-agent setting, 
  MADC characterize the 
hardness of exploration in MGs in terms of the discrepancy between the out-of-sample \textit{prediction error}  and the in-sample 
\textit{training error} incurred by minimizing a discrepancy function $\ell$ on the historical data. 
MADC is defined based on the intuition that 
if a hypothesis attains a small training error on a well-explored dataset, it would also incur a small prediction error. 
When the MADC of an MG instance  is small, 
achieving a small training error ensures a small prediction error, and thus exploration is relatively easy.  We prove that MAMEX  achieves a   sublinear regret for learning NE/CCE/CE in classes with small MADCs, which includes multi-agent counterparts of models with low  Bellman eluder dimensions \citep{jin2021bellman, jin2022power, huang2021towards},  Bilinear Classes  \citep{du2021bilinear}, and models with low witness ranks \citep{sun2019model, huang2021towards}. 
When specialized to specific members within these classes, MAMEX yields comparable regret upper bounds 
to existing works.

\vspace{5pt}
{\noindent \bf Our Contributions.} In summary, our contributions are two-fold. 

$\bullet$ First,  we provide a unified algorithmic framework named \textbf{M}ulti-\textbf{A}gent \textbf{M}aximize-to-\textbf{EX}plore (MAMEX) for both model-free and model-based MARL, which is sample-efficient in finding the NE/CCE/CE in general-sum MGs with small MADCs. 
    Moreover, MAMEX leverages an equilibrium-solving oracle for normal-form games defined over a class of  joint policies for policy updates, and a 
    single-objective optimization procedure that solves for the payoff functions of these normal-form games. 
     To our best knowledge, the model-free version of  MAMEX is the first model-free algorithm for general-sum MGs that learns all three equilibria NE, CCE, and CE with sample efficiency.

$\bullet$    Second, we introduce a complexity measure,  \textbf{M}ulti-\textbf{A}gent \textbf{D}ecoupling \textbf{C}oefficient (MADC), to quantify the hardness of exploration in a general-sum MG in the context of the function approximation. 
    The class of MGs with  low MADCs  includes  a rich class of MG instances, 
    such as multi-agent counterparts of models with low  Bellman eluder dimensions \citep{jin2021bellman, jin2022power,huang2021towards},  Bilinear Classes \citep{du2021bilinear}, and models with low witness ranks \citep{sun2019model, huang2021towards}. 
    When specialized to specific  MG instances in these classes, we achieve comparable regret upper bounds to existing works.

\section{Related Work}\label{appendix: related work}
\vspace{5pt}
\paragraph{Markov Games}
 Markov Game (MG) \citep{littman1994markov} is a popular model of multi-agent reinforcement learning, which generalizes the Markov decision process to multiple agents.
A series of recent works design the sample-efficient algorithm for two-agent zero-sum games \citep{wei2017online,zhang2020model,xie2020learning,bai2020near,bai2020provable,bai2021sample,zhao2021provably,huang2021towards,jin2022power, chen2022unified,chen2022almost}. 
For instance, \cite{bai2020provable} provide a sample-efficient algorithm in an episodic MG based on optimistic value iteration. \cite{xie2020learning,chen2022almost} 
mainly focus on zero-sum MGs with a linear structure. 
\cite{huang2021towards,jin2022power,chen2022unified} further consider the two-player zero-sum MGs under general function approximation, and provide algorithms with a sublinear regret. 
Another line of research focuses on general-sum MGs with multiple players \citep{jin2020sample,liu2021sharp,tian2021online,jin2021v,song2021can,liu2022sample,daskalakis2022complexity,zhan2022decentralized,cui2023breaking,wang2023breaking}. Some of previous works \citep{liu2021sharp, tian2021online, liu2022sample} 
consider learning all three equilibrium notions ---   NE, CCE, and CE --- and their regret or sample complexity results are exponential in the number of agents. 
To break this exponential curse, some existing works propose decentralized algorithms for learning CCE or CE rather than NE \citep{jin2021v,daskalakis2022complexity,zhan2022decentralized,cui2023breaking,wang2023breaking}. 


\vspace{5pt}\paragraph{MARL with Function Approximation}
There are many papers working on multi-player general-sum MGs with the function approximation \citep{zhan2022decentralized,ni2022representation,chen2022unified,wang2023breaking,cui2023breaking, foster2023complexity} that build upon previous works for function approximation in the single-agent setting \citep{jiang2017contextual,sun2019model,jin2020provably,wang2020reinforcement,dann2021provably,du2021bilinear, jin2021bellman,foster2021statistical,chen2022abc,agarwal2022model, zhong2022gec,liu2023objective}. 
In recent years, \cite{chen2022unified} and  \cite {foster2023complexity}  generalize the complexity measure Decision-Estimation Coefficient (DEC), and learn the equilibria in model-based general-sum MGs. 
\cite{ni2022representation} provide both a model-based algorithm and a model-free algorithm for the low-rank MGs. Some previous works \citep{zhan2022decentralized,wang2023breaking,cui2023breaking}
 provide model-free algorithms that learn CCE and CE with polynomial sample complexity. Compared to their works, this paper provides a unified algorithmic framework for both model-free and model-based MARL problems, which learns NE/CCE/CE  efficiently under general function approximation and provides comparable regret to existing works. 
 In particular, our work seems to provide the first model-free algorithm for learning NE/CCE/CE of general-sum MGs in the context of the general function approximation.

\vspace{-0.5em}
\section{Models and Preliminaries}

\subsection{Notation}
For $n$ sets $\cF_1,\cdots, \cF_n$, we let $\otimes _{i=1}^n \cF_{i} $ denote $ \cF_1\times \cdots\times \cF_n$. For a set $\cA$, we denote $\Delta(\cA)$ as a set of probability distributions over $\cA$.
For a vector $x \in \RR^n$, we denote $\|x\|_1 = \sum_{i=1}^n |x_i|$, $\|x\|_2 = \sqrt{\sum_{i=1}^n x_i^2}$ and $\|x\|_\infty = \max_{i=1}^n |x_i|$. 
For a function $f:\cX \mapsto \cY$, we denote $\|f\|_\infty = \sup_{x \in \cX} |f(x)|$ as the infinity norm. 
For two functions $f,g:\cA \mapsto \RR$, we denote $\langle f, g\rangle_{\cA} = \EE_{a \in \cA}[f(x)g(x)]$ as the inner product with respect to the set $\cA$.  For a Hilbert space $\cV$ and $f, g \in \cV$, we denote $\langle f,g \rangle_{\cV}$  as the inner product defined in the Hilbert space $\cV,$ and $\|f\|_\cV$ is the norm defined in Hilbert space $\cV.$ For two distributions over $P,Q \in \Delta(\cX)$, the Hellinger distance is defined as $D_\text{H}^2(P\|Q)=\frac{1}{2}\EE_{x \sim P}[(\sqrt{dP(x)/dQ(x)}-1)^2].$ For a vector $ x\in \RR^d$, the softmax mapping is denoted by $\text{Softmax}(x) \in \RR^d$ with $\big(\text{Softmax}(x)\big)_i = e^{x_i}/\sum_{i \in [d]}e^{x_i}$. 

\subsection{Markov Games}

\textbf{General-Sum Markov Games}
In this work, we consider  general-sum Markov Games (MGs) in the episodic setting, which is  
denoted by  a tuple $(\cS,H, \cA,\{r_{h}^{(i)}\}_{i \in [n], h \in [H]},$ $ \{\PP_h\}_{h \in [H]}, \rho )$, 
where $n$ is the number of agents, $H$ is the length of one episode,  $\cS$ is the state set, and 
$\cA = \otimes _{i=1}^n \cA_{i} $ is the joint action set.
Here, $\cA_i$ is the action set of the agent $i$. 
Moreover, $r_h^{(i)}:\cS \times \cA \mapsto \RR$ is the known reward function\footnote{Our results can be extended to the unknown stochastic reward case \citep{agarwal2022model,zhong2022gec}. 
Note that learning the transition kernel is more difficult than learning the reward.} 
of the agent $i$ at step $h$,
 $\PP_h \colon \cS \times \cA \rightarrow \Delta(\cS)$ is the transition kernel at the $h$-th step, and $\rho  \in \Delta(\cS)$ is the distribution of the initial  state $s_1$. 
We assume the $n$ agents observe the same state at each step and each agent $i$ chooses an action within its own action set  $\cA_i$ simultaneously. 
In each episode, starting from $s_1 \sim p_0$, for each $h \in [H]$, the agents choose their joint action $a_h \in \cA$ in state $s_h$,
where $a_h = (a_h^{(1)}, \ldots, a_h^{(n)})$. Then, each agent $i$
receives its  own reward $r_h^{(i)}(s_h,a_h)$, and the game move to the next state $s_{h+1} \sim \PP_h(s_{h+1}\mid s_h,a_h) $. 
Moreover,  we assume $\sum_{h=1}^H r_h^{(i)}(s_h,a_h) \in[0,R]$ for any possible state-action sequences for some $1\leq R\leq H$. 


In MGs, the agents' policy can be stochastic and correlated. 
To capture such a property, we introduce the notion of \emph{pure policy} and \emph{joint policy} as follows. 
For each agent $i$, its local (Markov) policy maps a state $s$ to a distribution over the local action space $\cA_i$. 
We let $\Pi_i ^{ \textrm{pur}}\subseteq \{\pi:\cS \mapsto \Delta(\cA_i)\}$ denote a subset of the agent $i$'s local policies, which is called the set of  Markov \textit{pure policies}. 
We assume the agent $i$'s policy is a random variable taking values in $\Pi_i^{\textrm{pur}}$. 
Specifically, let $\omega \in \Omega$ be the random seed. 
The \textit{random} policy $\pi^{(i)} = \{ \pi^{(i)}_{h} \}_{h\in [H]}$ for the agent $i$ contains  $H$ mappings $\pi^{(i)}_{h} : \Omega \mapsto \Pi_i^{\text{pur}}$ such that $\pi^{(i)}_{h} (\omega) \in \Pi_i^{ \textrm{pur}}$ is a pure policy. 
To execute $\pi^{(i)}$, the agent $i$ first samples a random seed $\omega \in \Omega$, and then follows the policy $\pi^{(i)}_{h}(\omega)$ for all $h\in [H]$. 
 The \emph{joint policy} $\pi$ of the $n$ agents is a set of policies $\{\pi^{(i)}\}_{i=1}^n$ that all agents share \emph{the same random seed $\omega$}. 
 In other words, $\{\pi_h^{(i)} (\omega) \}_{i\in[n]}\in \otimes_{i=1}^n \Pi_i^{ \textrm{pur}}$ are  random policies of the $n$ agents whose  randomness is correlated  by the random seed $\omega$.
 Equivalently, we can regard $\pi$ as a random variable over $\otimes_{i=1}^n \Pi_i^{ \textrm{pur}}$. 
 We let $\pi_h(a\mid s)$ denote the probability of taking action $a$ in the state $s$ at step $h$.
 Furthermore, a special class of joint policy is the \textit{product policy}, where each agent executes their own policies independently. In other words, we have $\omega = (\omega_1, \ldots, \omega_n),$ where $\omega_1, \ldots, \omega_n$ are independent, and each $\pi^{(i)}$ depends on $\omega_i$ only. As a result, we have $\pi_h(a\mid s) = \prod_{i=1}^n \pi^{(i)}_h (a^{(i)}\mid s)$ for any product policy $\pi$.

Furthermore, using the notion of pure policy and joint policy, we can equivalently view the MG as a normal form game over $\Pi^{ \textrm{pur}} = \otimes_{i=1}^n \Pi_i^{ \textrm{pur}}$. 
That is, each pure policy can be viewed as a pure strategy of the normal form game, and each joint policy can be viewed as a mixed strategy. 
Such a view is without loss of generality, because we can choose $\Pi_i^{ \textrm{pur}}$ to be the set of all possible deterministic policies of the agent $i$. Meanwhile, using a general $\Pi_i^{ \textrm{pur}}$, we can also incorporate parametric policies as the pure policies, e.g., log-linear policies \citep{xie2021bellman,yuan2022linear,cayci2021linear}.

The value function $V_h^{(i),\pi}$ is the expected cumulative rewards received by  the agent $i$ from step $h$ to step $H$, when all the agents follow a joint policy $\pi$, which is defined as
\begin{align*}
    V_h^{(i),\pi}(s) = \EE_{\pi}\Big[\sum_{h'=h}^H r_{h'}^{(i)}(s_{h'},a_{h'})\Big|\, s_h=s\Big].
\end{align*}
We let $V^{(i), \pi}(\rho) = \EE_{s\sim \rho} [V_1^{(i),\pi}(s)]$ denote the agent $i$'s expected cumulative rewards  within the whole episode. 
Besides, the corresponding $Q$-function (action-value function) can be written as
 \begin{align}\label{eq:def_Qfunc}
     Q_h^{(i),\pi}(s,a) = \EE_{\pi}\Big[\sum_{h'=h}^H r_{h'}^{(i)}(s_{h'},a_{h'})\Big|\,  s_h = s, a_h = a\Big].
 \end{align}

 For a joint policy $\pi$ and any agent $i$,
  we let $\pi^{(-i)}$ denote the joint policy excluding the agent $i$. Given $\pi^{(-i)}$, the \textit{best response} of  the agent $i$ is defined as $\pi^{(i),\dag} = \argmax_{\nu \in \Delta(\Pi_i^{\text{pur}})} V ^{(i),\nu\times \pi^{(-i)}}(\rho )$, which is random policy of the agent $i$ that maximizes its expected rewards when other agents follow $\pi^{(-i)}$. 
  Besides, we denote $\mu^{(i),\pi}=(\pi^{(i),\dag}, \pi^{(-i)})$.


 \vspace{5pt}
\noindent\textbf{Online Learning and Solution Concepts}
We focus on three common  equilibrium notions in the game theory: Nash Equilibrium (NE), Coarse Correlated Equilibrium (CCE) and Correlated Equilibrium (CE).

First, a NE of a game is a \textit{product} policy that no individual player can improve its expected cumulative rewards by unilaterally deviating its local policy.
\begin{definition}[$\varepsilon$-Nash Equilibrium]
   A \textit{product  policy} $\pi$ is an $\varepsilon$-Nash Equilibrium if $V ^{(i), \mu^{(i),\pi}}(\rho ) \le V ^{(i),\pi} (\rho )+ \varepsilon$ for all $i \in [n]$, where  $\mu^{(i),\pi} = (\pi^{(i),\dag}, \pi^{(-i)})$ and $\pi^{(i),\dag}$ is the best response policy  with respect to  $\pi^{(-i)}$.
\end{definition}

In other words, a product policy $\pi$ is an  $\varepsilon$-Nash Equilibrium if and only if 
\begin{align*}
    \max_{i\in [n]}  \Bigl \{ \max_{\nu \in \Delta(\Pi_i^{\text{pur}})} V ^{(i),\nu\times \pi^{(-i)}}(\rho ) -  V ^{(i),\pi} (\rho )  \Bigr\}  \leq \varepsilon. 
\end{align*}

 In this work, we design algorithms for the online and self-play setting.
That is, we control the joint policy all agents,   interact with the environment over $K$ episodes, and aim to learn the desired equilibrium notion from bandit feedbacks. 
To this end, let  $\pi^k$ denote the joint policy that the agents execute in the $k$-th episode, $k \in [K]$.  
We define the Nash-regret as the cumulative suboptimality across all agents with respect to   NE.

\begin{definition}[Nash-Regret]\label{def:N regret}
    For all $k\in [K]$, let $\pi^k$ denote the \textit{product  policy}   deployed in the $k$-th episode, then the Nash-regret is defined as 
    \begin{align*}
\mathrm{Reg}_{\mathrm{NE}}(K) = \sum_{k=1}^K \sum_{i=1}^n \bigl(V ^{(i),\mu^{(i),\pi^k}}(\rho)-V ^{(i),\pi^k}(\rho)\bigr).
    \end{align*}
\end{definition}

A Coarse Correlated Equilibrium is a \textit{joint} policy $\pi$ such that no agent can achieve higher rewards by only changing its local policy.
Compared with a NE, a CCE allows different agents to be correlated, while NE only considers product policies. 

\begin{definition}[$\varepsilon$-Coarse Correlated Equilibrium]
    A \textit{joint  policy} $\pi$ is a $\varepsilon$-Coarse Correlated Equilibrium if $V^{(i),\mu^{(i),\pi}}(\rho) \le V^{(i),\pi}(\rho)+\varepsilon$ for all $i \in [n]$.
\end{definition}

Here, the definition of $\varepsilon$-CCE is similar to that of an $\varepsilon$-NE. But here $\pi$ is a joint policy, i.e., the randomness of the local  policies of the $n$ agents can be coupled together. As a result, CCE is a more general equilibrium notion than NE.
Similarly, we can define the CCE-regret, which represents the cumulative suboptimality across all agents with respect to   CCE.  
\begin{definition}[CCE-Regret]\label{def:CCE regret}
    For all $k\in [K]$, let $\pi^k$ denote the \textit{joint} policy that is  deployed in the $k$-th episode, then the CCE-regret is defined as 
    \begin{align*}
        \mathrm{Reg}_{\mathrm{CCE}}(K) = \sum_{k=1}^K \sum_{i=1}^n \bigl(V ^{(i),\mu^{(i),\pi^k}}(\rho)-V ^{(i),\pi^k}(\rho )\bigr).
    \end{align*}
\end{definition}

Last, the Correlated Equilibrium has been extensively studied in previous works for MARL \citep{jin2020sample,chen2022unified,cui2023breaking, wang2023breaking}. 
To introduce the concept of CE, we need first to introduce the \textit{strategy modification}. A strategy modification for the agent $i$ is a mapping $\phi_i:\Pi_i^{\text{pur}}\to \Pi_i^{\text{pur}}$. Given any random policy $\pi$, the best strategy modification for agent $i$ is defined as  $\argmax_{\phi_i}\EE_{\upsilon\sim \pi}[V ^{\phi_i(\upsilon^{(i)})\times \upsilon^{(-i)}} (\rho)]$. 
A CE is a joint policy $\pi$ such that no agent can achieve higher rewards by only changing its local policy through strategic modification. 
\begin{definition}[$\varepsilon$-Correlated Equilibrium]
    A \textit{joint  policy} $\pi$ is a  $\varepsilon$-Correlated Equilibrium if $\max_{\phi_i} \EE_{\upsilon\sim \pi}[V ^{\phi_i(\upsilon^{(i)}) \times \upsilon^{(-i)}} (\rho) ] \le   V ^{\pi }(\rho)  + \varepsilon$ for any agent $i \in [n]$. 
\end{definition}

We can similarly define  CE-regret as the sum of suboptimality terms with respect to CE. 
\begin{definition}[CE-Regret]
    For any $k\in [K]$, let  $\pi^k$ denote the joint policy that is deployed in the $k$-th episode,   the CE-regret is defined as 
    \begin{align*}
        \mathrm{Reg}_{\mathrm{CE}}(K) = \sum_{k=1}^K \sum_{i=1}^n \left(\max_{\phi_i}\EE_{\upsilon\sim \pi^k}\left(V ^{(i),\phi_i(\upsilon^{(i)})\times \upsilon^{(-i)}} (\rho )\right) - V ^{(i),\pi^k}(\rho)\right).
    \end{align*}
\end{definition}

Compared to the NE/CCE regret, the strategy modification of one agent in CE can be correlated to the policies of other agents. Instead, the best response is independent of the other agents.

We note that the definitions of NE, CCE, and CE align with those defined on the normal form game defined on the space of   pure policies. That is, each agent $i$'s ``pure strategy'' is a pure policy $\pi^{(i)} \in \Pi_i^{\text{pur}}$, and  the   ``payoff'' of the agent $i$ when the ``mixed strategy'' is $\pi$ is given by    $V^{(i),\pi } (\rho)$.



\subsection{Function Approximation}
To handle the large state space in MARL, we assume the access to a hypothesis class $\cF$, which captures the $Q$ function in the model-free setting and the transition kernel in the model-based setting. 

\vspace{5pt}
\noindent\textbf{Model-Based Function Approximation} In the model-based setting, the hypothesis class $\cF$ contains the model (transition kernel) of MGs. Specifically, we let $ \PP_f = \{\PP_{1,f} \cdots, \PP_{H,f}\}$ denote the transition kernel parameterized by $f \in \cF$. 
When the model parameters are $f$ and the   joint policy is $\pi$, we denote the value function  and $Q$-function of the agent $i$  at the $h$-th step as  $V_{h,f}^{(i),\pi}(s)$ and  $Q_{h,f}^{(i),\pi}(s,a)$ respectively. 
We have the Bellman equation $Q_{h,f}^{(i),\pi}(s,a)=r_{h}^{(i)}(s,a) + \EE_{s'\sim \PP_{h,f}(\cdot \mid s,a)}[V_{h+1,f}^{(i),\pi}(s')]$. 


\vspace{5pt}
\noindent\textbf{Model-Free Function Approximation}
In the model-free setting, 
we let 
$\cF = \otimes_{i=1}^n \cF^{(i)}=\otimes_{i=1}^n (\otimes_{h=1}^H \cF_h^{(i)})$ be  a  class of $Q$-functions of the $n$ agents, where  $\cF_h^{(i)}=\{f_h^{(i)}: \cS\times \cA \mapsto \RR\}$ is a class  of $Q$-functions of the agent $i$ at the $h$-th step.
For any $f\in \cF$, we denote 
$Q_{h,f}^{(i)}(s,a)= f_h^{(i)}(s,a)$ for all $i\in[n]$ and $h\in [H]$. 
Meanwhile, for any joint policy $\pi$ and any $f \in \cF$, 
we define 
 $$V_{h,f}^{(i),\pi}(s) = \EE_{a \sim \pi(s)}[f_h^{(i)}(s, a)] = \langle f_h^{(i)}(s,\cdot ), \pi_h(\cdot \mid s)\rangle _{\cA}.
 $$ 
Furthremore, for any joint policy $\pi$,  agent $i$, and step $h$, we define the Bellman operator $\cT_h^{(i),\pi}$ by letting  
\begin{align}\label{eq:define_bellman_operator}
    (\cT_h^{(i),\pi}(f_{h+1}))(s,a) = r_h^{(i)} (s,a) + \EE_{s' \sim \PP_h(s'\mid s,a)}\langle f_{h+1}(s',\cdot), \pi_{h+1}(\cdot \mid s')\rangle_{\cA}, \quad \forall f\in \cF^{(i)}. 
\end{align}
Note that the   Bellman operator depends on the index $i$ of the agent because the reward functions of the agents are different. 
Such a definition is an extension of the Bellman evaluation operator in the single-agent setting \citep{puterman2014markov} to the multi-agent MGs. By definition, $\{Q^{(i),\pi}_{h }\}$ defined in \eqref{eq:def_Qfunc}  is the fixed point of $\cT_h^{(i),\pi}$, i.e., $Q^{(i),\pi}_{h } = \cT_h^{(i),\pi}(Q^{(i),\pi}_{h+1 })$ for all $h\in [H]$.



For both the model-based and the model-free settings, we impose the realizability assumption, which requires that the hypothesis space $\cF$ is sufficiently expressive such that it contains the true transition model or the true $Q$-functions. Besides, for the model-free setting, we also require that the hypothesis classes be closed with respect to the Bellman operator.

\begin{assumption}[Realizability and Completeness]\label{assum: realizability}
For the model-based setting, we assume the true transition model $f^*$ lies in the hypothesis class $\cF$. 
Besides, for the model-free setting, for any \emph{pure policy} $\pi$ and any $i\in [n]$, we assume that $Q^{(i), \pi  } \in \cF^{(i)}$ and  $\cT^{(i),\pi}_h\cF_{h+1}^{(i)}\subseteq \cF_h^{(i)}$ for all $h \in [H]$. 
\end{assumption}

\vspace{5pt}
{\noindent \bf Covering Number and Bracketing Number.}
When a function class $\cF$ is infinite, the $\delta$-covering number and the $\delta$-bracketing number serve as surrogates of the cardinality of $\cF$. 
Such a tool is common in supervised learning.  
\begin{definition}[$\delta$-Covering Number]
    The $\delta$-covering number of  a function class $\cF$ with respect to distance metric $d$, denoted  as $\cN_\cF(\delta, d)$, is the minimum integer $q$ satisfying the following property: there exists a subset $\cF'\subseteq \cF$ with $|\cF'|=q$ such that  for any $f_1 \in \cF$ we can find $f_2 \in \cF'$ with 
    $d(f_1,f_2) \le \delta$. To simplify the notation,  we write  $\cN_\cF(\delta, \|\cdot\|_\infty)$ as $\cN_\cF(\delta).$
\end{definition}

\begin{definition}[$\delta$-Bracketing Number]
    A $\delta$-bracket of size $N$ is a bracket $\{g_1^i,g_2^i\}_{i=1}^N $, where $g_1^i$ and $g_2^i$ are functions mapping any policy $\pi$ and trajectory $\tau$ to $\RR$, such that for all $ i \in [N]$, $\pi \in \Pi$ we have $\|g_1^i(\pi,\cdot) - g_2^i (\pi,\cdot) \| \le \delta$. Also, for any $f \in \cF,$ there must exist an $i \in [N]$ such that  $g_1^i(\pi,\tau_H) \le \PP_f^\pi(\tau_H) \le g_2^i (\pi,\tau_H)$ for all possible $\tau_H$ and $\pi.$ The $\delta$-bracketing number of $\cF$, denoted by $\cB_\cF(\delta)$, is the minimum size of a $\delta$-bracket.  
\end{definition}

\vspace{5pt}
\noindent\textbf{Multi-Agent Decoupling Coefficient} 
Now we introduce a key complexity measure --- multi-agent decoupling coefficient (MADC) --- 
which captures the hardness of exploration in MARL.
Such a notion is an extension of the decoupling coefficient \citep{dann2021provably} to general-sum MGs. 

\begin{definition}[Multi-Agent Decoupling Coefficient] \label{def:decoup_coeff}
The Multi-Agent Decoupling Coefficient of a MG is defined as the smallest constant $d_{\mathrm{MADC}}\ge 1$ such that for any $i \in [n]$, $\mu>0$, $\{f^{k}\}_{k \in [K]}\subseteq \cF^{(i)}$, and $\{\pi^k\}_{k \in [K]}\subseteq   \Pi ^{\mathrm{pur} }$ 
the following inequality holds: 
    \begin{align}\label{eq:MADC}
        \underbrace{\sum_{k=1}^{K}(V_{ f^k}^{(i),\pi^k}(\rho)-V ^{(i),\pi^k}(\rho))}_{\displaystyle \text{prediction error}} \le  \frac{1}{\mu} \underbrace{\sum_{k=1}^K \sum_{s=1}^{k-1}\ell^{(i),s}(f^k,\pi^k)}_{\displaystyle \text{training error}} +\underbrace{\mu \cdot d_{\mathrm{MADC}}+ 6d_{\mathrm{MADC}}H }_{\displaystyle \text{gap}}, 
    \end{align}
    where  we define $V_{ f^k}^{(i),\pi^k}(\rho) = \EE_{s_1\sim \rho } [V_{1,f^k}^{(i),\pi^k}(s_1)]$, and $\ell^{(i),s}(f^k,\pi^k)$ is a discrepancy function that measures the inconsistency between $f^k$ and $\pi^k$, on the historical data. 
    The specific definitions of $\{ \ell^{(i),s}\}_{i\in[ n], s\in[K-1]}$
    under the model-free and model-based settings are given in  \eqref{eq:modelfree discrepancy} and  \eqref{def:modelbased discrepancy}, respectively. 
\end{definition}

Intuitively, for the  model-free setting, $\ell^{(i),s}(f,\pi)$ is defined in \eqref{eq:modelfree discrepancy} and  represents the mean-squared Bellman error of the function $f \in \cF^{(i)}$ for estimating the agent $i$'s value function under   policy $\pi$, 
     where $\{s_h,a_h\}_{h \in [H]} \sim \pi^s$, serving as an inconsistency measure between $f$ and $\pi$ under the previous data. 
For the model-based setting, the definition of $\ell^{(i),s}(f,\pi)$ in  \eqref{def:modelbased discrepancy} represents the expected Hellinger distance between $f^k$ and the true model $f^*$. 
Note that the discrepancy between $ f^k ,\pi^k$ in \eqref{eq:MADC}  is summed over $s\in [k-1]$. Thus, in both the model-free and model-based settings, the training error can be viewed as the in-sample error of $f^k$ on the historical data collected before the $k$-th episode. 
Thus, for an MG with a finite MADC, the prediction error is small whenever the training error is small. Specifically, when the training error is $\cO(K^{\alpha })$ for some $\alpha \in (0,2)$, then by choosing a proper $\mu$, we know that the prediction error grows as $\cO(\sqrt{K^{\alpha  }\cdot d_{\mathrm{MADC}}}) = o(K)$. In other words, as $K$ increases, the average prediction error decays to zero. In single-agent RL, when we adopt an optimistic algorithm, 
the prediction error serves as an upper bound of the regret 
\citep{dann2021provably, zhong2022gec, jin2021bellman}. 
Therefore, by quantifying how the prediction error is related to the training error, the MADC can be used to characterize the hardness of exploration in MARL.

Compared to the decoupling coefficient and its variants for single-agent MDP \cite{dann2021provably, agarwal2022model, zhong2022gec}, MADC selects the policy $\pi^k$ in a different way.
In the single-agent setting, the policy $\pi^k$ is always selected as the greedy policy of $f^k$, hence $V_{1,f^k}^{\pi^k}(\rho)$ is equivalent to the optimal value function. On the contrary, in our definition, the policy $\pi^k$ is not necessarily the greedy policy of $f^k$. In fact, $\{\pi^k\}_{k \in [K]}$ can be any pure policy sequence that is unrelated to $\{f^k\}_{k \in [K]}$.

\begin{assumption}[Finite MADC]\label{assum:decoup coeff}
    We assume that the MADC of the general-sum  MG of interest is finite, denoted by $d_{\mathrm{MADC}}$. As we will show in Section \ref{sec:relation}, the class of MGs with low MADCs include a rich class of MG instances, including multi-agent counterparts of models with low  Bellman eluder dimensions \citep{jin2021bellman, jin2022power,huang2021towards},  bilinear classes \citep{du2021bilinear}, and models with low witness ranks \citep{sun2019model, huang2021towards}. 
\end{assumption}

\section{Algorithm and Results}
In this section, we first introduce a unified algorithmic framework called \textbf{M}ulti-\textbf{A}gent \textbf{M}aximize-to-\textbf{EX}plore (MAMEX). 
Then, we present the  regret and sample complexity upper bounds of MAMEX, showing that both the model-free and model-based versions of MAMEX  are sample-efficient for learning NE/CCE/CE under the general function approximation.

\subsection{Algorithm}
\begin{algorithm}[t]
    \begin{algorithmic}[1]
      \small  
	\caption{\textbf{M}ulti-\textbf{A}gent \textbf{M}aximize-to-\textbf{EX}plore (MAMEX)}
	\label{alg:multi-agent io}
	\STATE \textbf{Input:} Hypothesis class $\cF$, parameter $\eta>0$, and an equilibrium solving oracle $\mathsf{EQ}$.
        \FOR{$k = 1,2,\cdots,K$}
        \STATE Compute  $\overline{V}_i^k(\pi)$  defined in \eqref{def:regularized_opt} for all $\pi \in \Pi^{\mathrm{pur}}$ and all $i\in[n]$.
        \STATE Compute the NE/CCE/CE 
        of the normal-form game defined  on $\Pi^{\mathrm{pur}}$ with payoff functions $  \{\overline{V}_i^k(\pi)\}_{i=1}^n$: 
        $ \pi^k \leftarrow \textsf{EQ}(\overline{V}_1^k , \overline{V}_2^k, \cdots, \overline{V}_n^k ) $. \label{line:equilibrium oracle}
        \STATE   Sample a pure joint policy $\zeta^k \sim \pi^k$, and collect a  trajectory 
        $\{s_h^k, a_h^k \}_{h\in [ H]} $ following $\zeta^k$.

        \STATE Update $\{L^{(i),k}\}_{i=1}^n $ according to \eqref{eq:def of L} (model-free) or  \eqref{eq:model based L} (model-based). 
        \ENDFOR
 \end{algorithmic}
 \end{algorithm}
In this subsection, we provide the  MAMEX algorithm for multi-agent RL under the general function approximation, which extends the MEX algorithm \citep{liu2023objective} to general-sum MGs. 
Recall that the definitions of NE/CCE/CE of general-sum MGs coincide with those defined in the normal-form game with pure strategies being the pure policies in $\Pi^{\mathrm{pur}}$.
Thus, when we know the payoffs $\{ V^{(i), \pi} (\rho)\}_{i\in [n]}$ for all $\pi \in  \Pi^{\mathrm{pur}}$, 
we can directly compute the desired NE/CCE/CE given an equilibrium solving oracle for the normal-form game.
However, each $ V^{(i), \pi} (\rho)$  is unknown and has to be estimated from data via online learning. 
Thus, in a nutshell, MAMEX is an iterative algorithm that consists of the following two steps: 

(a) Policy evaluation: For each $k\in [K]$, construct an estimator $\overline{V}_i^k(\pi)$ of $ V^{(i), \pi} (\rho)$ for each pure policy $\pi\in  \Pi^{\mathrm{pur}}$ and the agent $i\in[n]$ in each episode based on the historical data collected in the previous $k-1$ episodes. Here, the policy evaluation subproblem can be solved in both the model-free and model-based fashion. 

(b) Equilibrium finding: Compute an equilibrium (NE/CCE/CE) for the normal-form game over the space of pure policies with the estimated payoff functions $\{\overline{V}_i^k(\pi)\}_{i=1}^n$. The joint policy returned by the equilibrium finding step is then executed in the next episode to generate a new trajectory.

By the algorithmic design, 
to strike a balance between exploration and exploitation, it is crucial to construct $\{\overline{V}_i^k(\pi)\}_{i=1}^n$ in such a way that promotes exploration. 
To this end, we solve a regularized optimization problem over the hypothesis class $\cF^{(i)}$ to obtain $\overline{V}_i^k(\pi)$, where the objective function balances exploration with exploitation. 
We introduce the details of MAMEX as follows. 

{\noindent \bf Policy Evaluation.} For each $k\in[K]$, before the $k$-th episode, 
we have collected $k-1$ trajectories $\tau^{1:k-1} = \cup_{t=1}^{k-1}\{s_1^t,a_1^t,r_1^t,\cdots,s_H^t,a_H^t,r_H^t\}$. 
For any $i\in [n]$, $\pi \in \Pi^{\mathrm{pur}}$ and $f \in \cF^{(i)}$\footnote{For ease of notation, under the model-based setting, we denote $\cF^{(i)} = \cF $ for all agent $i\in[n]$.}, we can define a data-dependent discrepancy function  $L^{(i),k-1}(f, \pi,\tau^{1:k-1})$.
Such a function measures the in-sample error of the hypothesis $f$ with respect a policy $\pi$, evaluated on the historical data $\tau^{1:k-1}$.
The specific form of such a function differs under the model-free and model-based settings. 
In particular, as we will show in \eqref{eq:def of L} and \eqref{eq:model based L} below, under the model-free setting, $L^{(i),k-1}(f, \pi,\tau^{1:k-1})$ is constructed based on the mean-squared Bellman error with respect to the Bellman operator $\cT_h^{(i),\pi}$ in \eqref{eq:define_bellman_operator}, 
while under the model-based setting, $L^{(i),k-1}(f, \pi,\tau^{1:k-1})$ is constructed based on the negative log-likelihood loss. 
Then, for each $\pi \in \Pi^{\mathrm{pur}}$ and $i\in[n]$, we define  $\overline{V}_i^k(\pi)$  as   
\begin{align}
    \overline{V}_i^k(\pi) =\sup_{f \in \cF^{(i)}}\Big\{ \hat{V}^{(i),\pi,k}(f)  := \underbrace{V_{f}^{(i),\pi}(\rho)}_{\displaystyle  \text{(a)} } \underbrace{-\eta \cdot L^{(i),k-1}(f, \pi,\tau^{1:k-1})}_{\displaystyle \text{(b)}} \Big\}  .\label{def:regularized_opt}
\end{align}

{\noindent \bf Equilibrium Finding.} Afterwards, the algorithm utilizes the equilibrium oracle $\textsf{EQ}$ (Line \ref{line:equilibrium oracle} of Algorithm \ref{alg:multi-agent io}) to compute an equilibrium (NE/CCE/CE) for the normal-form game over $\Pi^{\mathrm{pur}}$ with payoff functions $\{\overline{V}_i^k(\pi)\}_{i=1}^n$. 
The solution to the equilibrium oracle is a mixed strategy $\pi^k$, i.e., a probability distribution over $\Pi^{\mathrm{pur}}$.

Finally,  we sample a random pure policy $\zeta^k$ from $\pi^k$ and execute $\zeta^k$ in the $k$-th episode to generate a new trajectory. 
See Algorithm \ref{alg:multi-agent io} for the details of MAMEX. 
Here, we implicitly assume that $\Pi^{\mathrm{pur}}$ is finite for ease of presentation. For example, $\Pi^{\mathrm{pur}}$ is the set of all deterministic policies.
When $\Pi^{\mathrm{pur}}$ is infinite, we can replace $\Pi^{\mathrm{pur}}$ by a $1/K$-cover of $\Pi^{\mathrm{pur}}$ with respect to the distance   $d^{(i)}(\pi^{(i)},\tilde \pi^{(i)}) = \max_{s \in \cS} \| \pi^{(i)} (\cdot  \mid s) - \tilde \pi^{(i)}(\cdot \mid s)\|_1$.   

Furthermore,  the objective $\hat{V}^{(i),\pi,k}(f)$  in \eqref{def:regularized_opt}  is constructed by a sum of  (a) the value function $V_{f}^{(i),\pi}(\rho)$ of $\pi$ under the hypothesis $f$ and (b) a regularized term $-\eta \cdot L^{(i),k-1}(f, \pi,\tau^{1:k-1})$, and the   payoff function $\overline{V}_i^k(\pi)$ 
is obtained by solving  a maximization problem over $\cF^{(i)}$. 
The two terms (a) and (b)   represent the "exploration" and "exploitation" objectives, respectively, and the parameter $\eta > 0$ controls the trade-off between them. 
To see this, 
consider the case where we only have the term (b) in the objective function. 
In the model-based setting, \eqref{def:regularized_opt} reduces to the maximum likelihood estimation (MLE) of the model $f$ given the historical data $\tau^{1:k-1}$.
Then $\pi^k$ returned by Line \ref{line:equilibrium oracle} is the equilibrium policy computed from the MLE model. 
Thus, without term (a) in $\hat{V}^{(i),\pi,k}(f)$, the algorithm only performs exploitation. 
In addition to fitting the model, the term (a) also encourages the algorithm to find a model with a large value function under the given policy $\pi$, which promotes exploration. 
Under the model-free setting, only having term (b) reduces to least-squares policy evaluation (LSPE) \citep{sutton2018reinforcement},  and thus term (b) also performs exploitation only.

 \vspace{5pt}
{\noindent \bf Comparison with Single-Agent MEX \citep{liu2023objective}.} 
When reduced to the single-agent MDP, MAMEX 
can be further simplified to the single-agent MEX algorithm \citep{liu2023objective}.
In particular, when $n = 1$, equilibrium finding is reduced to maximizing the function defined in \eqref{def:regularized_opt} over single-agent policies, i.e., 
$\max_{\pi} \max_{f\in \cF} \hat V^{\pi, k}(f)$. 
By exchanging the order of the two maximizations, we obtain an optimization problem over the hypothesis class $\cF$, which recovers the single-agent MEX   \citep{liu2023objective}. 
In contrast, in general-sum MGs, the equilibrium policy 
can no longer be obtained by a single-objective optimization problem.   Hence,  it is unviable to directly extend MEX to optimize over hypothesis space in MARL. Instead, MAMEX solves an optimization over $\cF$ in the style of MEX  for each pure policy $\pi \in \Pi^{\mathrm{pur}}$, and then computes the NE/CCE/CE of the normal-form game over the space of pure policies. 

 
 
 \vspace{10pt}
{\noindent \bf Comparison with Existing MARL  Algorithms with Function Approximation} Previous RL algorithms for  MGs with the general function approximation usually require solving minimax optimization  \citep{chen2022unified,zhan2022decentralized,foster2023complexity} or constrained optimization subproblems within data-dependent constraints \citep{wang2023breaking}.
In comparison, 
the optimization subproblems of MEX are 
single-objective and do not have data-dependent constraints, and thus seem easier to implement. 
For example, in practice, the inner problem can be solved by a regularized version of TD learning \citep{liu2023objective}, and the outer equilibrium finding can be realized by any fast method to calculate equilibrium \citep{hart2000simple,  anagnostides2022faster}.

 In the following, we instantiate the empirical discrepancy function $L^{(i), k-1}$ for both the model-free setting and the model-based setting. 
 
\vspace{5pt}\noindent\textbf{Model-Free Algorithm}
 Under the model-free setting, we define the empirical discrepancy function $L$ as follows.
 For any $h \in [H]$ and $k \in [K]$, let  $\xi_h^k = \{s_h^k,a_h^k,s_{h+1}^k\}$.
 For any $i\in [n]$, $\pi \in \Pi^{\mathrm{pur}}$ and $f \in \cF^{(i)}$,  we define 
\begin{align}
    L^{(i),k-1}(f, \pi,\tau^{1:k-1}) &= \sum_{h=1}^H \sum_{j=1}^{k-1}\Big[\big (l_h^{(i)}(\xi_h^j,f,f,\pi)\big)^2-\inf_{f'_h \in \cF_h^{(i)}}  \big (l_h^{(i)}(\xi_h^j,f',f,\pi) \big)^2\Big]\label{eq:def of L},
\end{align}
where 
$
    l_h^{(i)}(\xi_h^j, f,g, \pi) = (f_h(s_h^j, a_h^j)-r_h^{(i)}(s_h^j, a_h^j)-\langle g_{h+1}(s_{h+1}^j,\cdot), \pi_{h+1}(\cdot \mid s_{h+1}^j)\rangle_{\cA})^2$ is the mean-squared Bellman error involving $f_h$ and $g_{h+1}$. 

As we will show in  Lemma \ref{lemma:concentration} in \S \ref{subsection:analysis}, using martingale concentration techniques, we can show that $L^{(i),k-1}(f,\pi,\tau^{1:k-1})$ servers as an upper bound of $\sum_{s=1}^{k-1} \ell^{(i),s}(f,\pi) $, where $\ell^{(i),s} $ is defined in \eqref{eq:modelfree discrepancy}.    Thus, the empirical discrepancy function $L^{(i), k-1}$  can be used to control the training error in the definition of MADC.

\vspace{5pt}\noindent\textbf{Model-Based Algorithm}
For the model-based setting, 
we define $L^{(i), k-1}$ as the negative log-likelihood: 
\begin{align}\label{eq:model based L}
    L^{(i),k-1}(f, \pi,\tau^{1:k-1}) = \sum_{h=1}^H \sum_{j=1}^{k-1} -\log \PP_{h,f}(s_{h+1}^j\mid s_h^j,a_h^j).
\end{align}
As we will show in Lemma \ref{lemma:concentration:model-based}, $L^{(i),k-1}$ can be used to control the training error in \eqref{eq:MADC}, 
where $\ell^{(i),s}$ is defined in \eqref{def:modelbased discrepancy}.

 \subsection{Theoretical Results}
 In this subsection, we present our main theoretical results and show that MAMEX (Algorithm \ref{alg:multi-agent io}) is sample-efficient for learning NE/CCE/CE in the context of  general function approximation.
 \begin{theorem}\label{thm:mainresult}
    Let the discrepancy function $\ell^{(i),s}$  in \eqref{eq:MADC} be defined in \eqref{eq:modelfree discrepancy} and \eqref{def:modelbased discrepancy} for model-free and model-based settings,  respectively.
Suppose Assumptions~\ref{assum: realizability} and  \ref{assum:decoup coeff} hold. By setting $K\ge 16$ and $\eta = 4/\sqrt{K}\le 1$, with probability at least $1-\delta$, the regret of Algorithm~\ref{alg:multi-agent io} after $K$ episodes is upper bounded by 
 \begin{align*}
     \mathrm{Reg}_{\mathrm{NE,CCE,CE}}(K)  & \le \widetilde{\cO}\Big(nH\sqrt{K}\Upsilon_{\cF,\delta}  + nd_{\mathrm{MADC}}\sqrt{K}+nd_{\mathrm{MADC}}H  \Big ) ,
\end{align*} 
where $\widetilde{\cO}(\cdot)$ hides absolute constants and polylogarithmic terms in $H$ and $K$,  and $\Upsilon_{\cF,\delta}$ is a term that quantifies the complexity of the hypothesis class $\cF$.
In particular, we have 
$ \Upsilon_{\cF,\delta} =  R^2\log(\max_{i \in [n]}\cN_{\cF^{(i)}}(1/K) \cdot | \Pi^{\mathrm{pur}} |/\delta) $ in the model-free setting and $\Upsilon_{\cF,\delta} = \log\left(\cB_{\cF}(1/K)/\delta\right)   $ in the model-based setting. 

 \end{theorem}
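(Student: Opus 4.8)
The plan is to control, for each agent $i\in[n]$ and each episode $k\in[K]$, the per-agent per-episode suboptimality $\mathrm{Regret}_i^k$ that appears as a summand in $\mathrm{Reg}_{\mathrm{NE}}$, $\mathrm{Reg}_{\mathrm{CCE}}$, or $\mathrm{Reg}_{\mathrm{CE}}$, and then to sum over $i$ and $k$. Since $V^{(i),\pi}(\rho)$ is affine in the mixing distribution of $\pi$ over $\Pi^{\mathrm{pur}}$, all three notions can be written uniformly as $\mathrm{Regret}_i^k=\EE_{\zeta\sim\pi^k}[V^{(i),\bar\pi\times\zeta^{(-i)}}(\rho)-V^{(i),\zeta}(\rho)]$ for a suitable pure deviation $\bar\pi\in\Pi_i^{\mathrm{pur}}$ — the best response to $\pi^{k,(-i)}$ for NE/CCE, or the optimally strategy-modified policy $\phi_i^{*}(\zeta^{(i)})$ for CE. The first step is an \emph{optimism} bound: plugging the realizable hypothesis (namely $Q^{(i),\bar\pi\times\zeta^{(-i)}}$ in the model-free case, or the true model $f^{*}$ in the model-based case, both lying in $\cF^{(i)}$ by Assumption~\ref{assum: realizability} and satisfying $V_{f^{*}}^{(i),\pi}(\rho)=V^{(i),\pi}(\rho)$) into the supremum defining $\overline V_i^k$ gives $\overline V_i^k(\pi)\ge V^{(i),\pi}(\rho)-\eta L^{(i),k-1}(f^{*},\pi,\tau^{1:k-1})$. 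Combined with the defining inequality of the equilibrium $\pi^k$ of the normal-form game with payoffs $\{\overline V_i^k\}$ (which reads $\EE_{\zeta\sim\pi^k}[\overline V_i^k(\bar\pi\times\zeta^{(-i)})]\le\EE_{\zeta\sim\pi^k}[\overline V_i^k(\zeta)]$ for every admissible deviation), this yields
\[
  \mathrm{Regret}_i^k \;\le\; \EE_{\zeta\sim\pi^k}\bigl[\overline V_i^k(\zeta)-V^{(i),\zeta}(\rho)\bigr] \;+\; \eta\cdot\EE_{\zeta\sim\pi^k}\bigl[L^{(i),k-1}(f^{*},\bar\pi\times\zeta^{(-i)},\tau^{1:k-1})\bigr].
\]

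Next I would pass from the expectation over $\zeta\sim\pi^k$ to the pure policy $\zeta^k$ actually sampled and executed. Conditioned on the history before episode $k$, the payoff functions $\overline V_i^k$ and the distribution $\pi^k$ are deterministic and $\zeta^k\sim\pi^k$, so $\overline V_i^k(\zeta^k)-V^{(i),\zeta^k}(\rho)$ has conditional mean $\EE_{\zeta\sim\pi^k}[\overline V_i^k(\zeta)-V^{(i),\zeta}(\rho)]$; on the good concentration event this quantity is $\widetilde{\cO}(R)$-bounded, so Azuma--Hoeffding gives $\sum_k\bigl(\EE_{\zeta\sim\pi^k}[\overline V_i^k(\zeta)-V^{(i),\zeta}(\rho)]-(\overline V_i^k(\zeta^k)-V^{(i),\zeta^k}(\rho))\bigr)=\widetilde{\cO}(R\sqrt K)$. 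It is essential to perform this substitution \emph{before} unfolding $\overline V_i^k$, because the unfolded expression carries the cumulative in-sample error $\sum_{s<k}\ell^{(i),s}$, which grows with $k$ and would ruin the martingale increments. Having done so, write $f^k$ for the (near-)maximizer in $\overline V_i^k(\zeta^k)=\sup_{f}\{V_f^{(i),\zeta^k}(\rho)-\eta L^{(i),k-1}(f,\zeta^k,\tau^{1:k-1})\}$, so that $\overline V_i^k(\zeta^k)-V^{(i),\zeta^k}(\rho)=\bigl(V_{f^k}^{(i),\zeta^k}(\rho)-V^{(i),\zeta^k}(\rho)\bigr)-\eta L^{(i),k-1}(f^k,\zeta^k,\tau^{1:k-1})$; the first bracket is exactly the ``prediction error'' of the MADC evaluated at the pure-policy sequence $\{\zeta^k\}_{k\in[K]}\subseteq\Pi^{\mathrm{pur}}$ and the hypothesis sequence $\{f^k\}_{k\in[K]}\subseteq\cF^{(i)}$.

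The two technical engines are the concentration lemmas (Lemma~\ref{lemma:concentration} for the model-free case, Lemma~\ref{lemma:concentration:model-based} for the model-based case), obtained by martingale/MLE concentration together with a union bound over a $1/K$-cover of $\cF^{(i)}$ and over the finitely many pure policies in $\Pi^{\mathrm{pur}}$ — this is precisely where sampling $\zeta^k$ pays off and where $\Upsilon_{\cF,\delta}$ enters, equalling $R^2\log(\max_i\cN_{\cF^{(i)}}(1/K)\,|\Pi^{\mathrm{pur}}|/\delta)$ in the model-free case and $\log(\cB_{\cF}(1/K)/\delta)$ in the model-based case. With probability at least $1-\delta$, uniformly in $f,\pi,k$, one gets $L^{(i),k-1}(f,\pi,\tau^{1:k-1})\ge c_1\sum_{s<k}\ell^{(i),s}(f,\pi)-c_2 H\,\Upsilon_{\cF,\delta}$ — in the model-free case after using the completeness part of Assumption~\ref{assum: realizability} to absorb the $\inf_{f'_h}$ term in \eqref{eq:def of L} — and moreover the contribution of the realizable hypothesis to these discrepancies either telescopes away (model-based, where $L^{(i),k-1}$ does not depend on $\pi$, so the optimism term above and the $-\eta L^{(i),k-1}(f^{*},\cdot)$ produced below cancel) or is at most $\eta c_2 H\,\Upsilon_{\cF,\delta}$ per episode (model-free, where it equals $L^{(i),k-1}$ evaluated at $Q^{(i),\pi}$). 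Plugging these bounds into the decomposition and summing over $k$ gives, for each $i$,
\[
  \sum_{k=1}^K\mathrm{Regret}_i^k \;\le\; \sum_{k=1}^K\bigl(V_{f^k}^{(i),\zeta^k}(\rho)-V^{(i),\zeta^k}(\rho)\bigr) \;-\; \eta\,c_1\sum_{k=1}^K\sum_{s=1}^{k-1}\ell^{(i),s}(f^k,\zeta^k) \;+\; \widetilde{\cO}\bigl(\eta H K\,\Upsilon_{\cF,\delta}+R\sqrt K\bigr).
\]

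Finally I would invoke the MADC inequality \eqref{eq:MADC} with $\mu=1/(\eta c_1)$: its $\tfrac1\mu$-weighted training-error term then exactly cancels the $-\eta c_1\sum_k\sum_{s<k}\ell^{(i),s}$ term (using $\ell^{(i),s}\ge 0$), leaving $\sum_k\mathrm{Regret}_i^k\le \eta^{-1}c_1^{-1}d_{\mathrm{MADC}}+6d_{\mathrm{MADC}}H+\widetilde{\cO}(\eta H K\,\Upsilon_{\cF,\delta}+R\sqrt K)$. Substituting $\eta=4/\sqrt K$ (legitimate since $K\ge16$ forces $\eta\le1$) turns $\eta^{-1}$ and $\eta K$ into $\widetilde{\cO}(\sqrt K)$, and summing over $i\in[n]$ while using $R\le H$ yields $\mathrm{Reg}_{\mathrm{NE,CCE,CE}}(K)\le\widetilde{\cO}(nH\sqrt K\,\Upsilon_{\cF,\delta}+nd_{\mathrm{MADC}}\sqrt K+nd_{\mathrm{MADC}}H)$, as claimed. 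The main obstacle is the careful ordering of the decomposition so that the martingale increments remain $\widetilde{\cO}(R)$-bounded (hence do not swallow the growing cumulative training error), together with proving the two concentration lemmas with the stated $\cF$-complexity dependence — in particular the completeness-based cancellation of the variance/$\inf$ term in the model-free discrepancy and the MLE guarantee in the model-based one; the equilibrium-property step and the MADC cancellation are then bookkeeping driven by the choice $\mu=1/(\eta c_1)$.
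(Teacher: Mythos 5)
Your proposal is correct and follows essentially the same route as the paper's proof: optimism via realizability plus the normal-form equilibrium property of $\pi^k$, the two concentration lemmas (with the union bound over the $1/K$-cover of $\cF^{(i)}$ and over $\Pi^{\mathrm{pur}}$ supplying $\Upsilon_{\cF,\delta}$), an Azuma–Hoeffding step to replace $\EE_{\zeta\sim\pi^k}$ by the sampled $\zeta^k$, and the MADC inequality with $\mu=\sqrt{K}$ cancelling the accumulated training error. The only cosmetic difference is that you apply the martingale argument to $\overline V_i^k(\zeta)-V^{(i),\zeta}(\rho)$ before unfolding the supremum, whereas the paper unfolds first and then truncates $L^{(i),k-1}(\hat f^{(i),\upsilon},\cdot)$ at $2\varepsilon_{\mathrm{conc}}$ to keep the increments bounded; both devices accomplish the same thing.
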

Theorem \ref{thm:mainresult} shows that our MAMEX achieves a sublinear  $\sqrt{K}$-regret for learning NE/CCE/CE, 
where the multiplicative factor depends polynomially on the number of agents $n$ and horizon $H$. 
Thus, MAMEX is sample-efficient in the context of the general function approximation. 
Moreover, the regret depends on the complexity of the hypothesis class via two quantifies -- the MADC $d_{\mathrm{MADC}} $, which captures the inherent challenge of exploring the dynamics of the MG, and the quantity $\Upsilon_{\cF,\delta}$, which characterizes the complexity of estimating the true hypothesis $f^*$ based on data. 
To be more specific, in the model-free setting, since we need to evaluate each pure policy, $\Upsilon_{\cF,\delta}$ contains $\log |\Pi^{\mathrm{pur}}|$ due to uniform concentration. 
When reduced to the tabular setting, we can choose $\Pi^{\mathrm{pur}}$ to be the set of deterministic policies, and both $\Upsilon_{\cF,\delta} $ and $d_{\mathrm{MADC}} $ are polynomials of $|\cS|$ and $|\cA|$. 
Furthermore, when specialized to tractable special cases with function approximation and some special pure policy class such as log-linear policy class \cite{cayci2021linear}, we show in \S \ref{sec:relation} that Theorem \ref{thm:bilinear class} yields regret upper bounds comparable  to existing works. 
Moreover, 
using the standard online-to-batch techniques, we can transform the regret bound into a  sample complexity result. 
Specifically, after running MEMAX for $K$ episodes, the random policy that outputs $\pi^k, k\in [K]$ uniformly random is an $\varepsilon$-approximate NE/CCE/CE, where $K$ depends on $\varepsilon$. 
\begin{corollary}\label{coroll:sample complexity}
    Under the same setting as in  Theorem \ref{thm:mainresult}, with probability at least $1-\delta$, when $K\ge \widetilde{\cO}\left(\left(n^2H^2 + n^2d_{\mathrm{MADC}}^2\Upsilon_{\cF,\delta}^2\right)\cdot \varepsilon^{-2}\right)$, if we output the mixture policy $\pi_{\mathrm{out}} = \mathrm{Unif}(\{     \pi^k \}_{k\in[K]})$, the output policy $\pi_{\mathrm{out}}$ is a $\varepsilon$-\{NE, CCE, CE\}.
\end{corollary}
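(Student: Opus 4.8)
The plan is a standard online-to-batch reduction from the regret bound of Theorem~\ref{thm:mainresult}. For $\bullet\in\{\mathrm{NE},\mathrm{CCE},\mathrm{CE}\}$, write the per-episode equilibrium suboptimality as $\mathrm{Gap}_\bullet(\pi)$, so that by the definitions of $\mathrm{Reg}_{\mathrm{NE}},\mathrm{Reg}_{\mathrm{CCE}},\mathrm{Reg}_{\mathrm{CE}}$ one has $\mathrm{Reg}_\bullet(K)=\sum_{k=1}^K\mathrm{Gap}_\bullet(\pi^k)$, where $\pi^k$ is the (mixed) policy played in episode $k$. Two things are needed: (a) show the output policy $\pi_{\mathrm{out}}$ satisfies $\mathrm{Gap}_\bullet(\pi_{\mathrm{out}})\le\mathrm{Reg}_\bullet(K)/K$; and (b) choose $K$ so that $\mathrm{Reg}_\bullet(K)/K\le\varepsilon$ on the high-probability event of Theorem~\ref{thm:mainresult}.

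For step (a) I would first argue that $\mathrm{Gap}_{\mathrm{CCE}}$ and $\mathrm{Gap}_{\mathrm{CE}}$ are \emph{convex} functionals of the mixed joint policy $\pi$, viewed as a distribution over $\Pi^{\mathrm{pur}}$. This rests on three observations: (i) $V^{(i),\pi}(\rho)=\EE_{\zeta\sim\pi}[V^{(i),\zeta}(\rho)]$ is linear in $\pi$; (ii) $V^{(i),\mu^{(i),\pi}}(\rho)=\max_{\nu\in\Delta(\Pi_i^{\mathrm{pur}})}V^{(i),\nu\times\pi^{(-i)}}(\rho)$ is a pointwise maximum of maps that are linear in the marginal $\pi^{(-i)}$, hence convex in $\pi$; and (iii) $\max_{\phi_i}\EE_{\upsilon\sim\pi}[V^{(i),\phi_i(\upsilon^{(i)})\times\upsilon^{(-i)}}(\rho)]$ is likewise a pointwise maximum of linear-in-$\pi$ maps, hence convex. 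Summing over $i\in[n]$, both $\mathrm{Gap}_{\mathrm{CCE}}$ and $\mathrm{Gap}_{\mathrm{CE}}$ are (convex)$-$(linear), hence convex. Since $\pi_{\mathrm{out}}=\mathrm{Unif}(\{\pi^k\}_{k\in[K]})$ equals $\tfrac1K\sum_{k=1}^K\pi^k$ as a distribution over $\Pi^{\mathrm{pur}}$ (first draw an episode index uniformly, then draw from that episode's mixed policy), Jensen's inequality yields $\mathrm{Gap}_\bullet(\pi_{\mathrm{out}})\le\tfrac1K\sum_k\mathrm{Gap}_\bullet(\pi^k)=\mathrm{Reg}_\bullet(K)/K$ for $\bullet\in\{\mathrm{CCE},\mathrm{CE}\}$. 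For $\bullet=\mathrm{NE}$, the uniform mixture of the product policies $\pi^k$ is in general not a product policy, so there I would instead use the elementary bound $\min_{k\in[K]}\mathrm{Gap}_{\mathrm{NE}}(\pi^k)\le\mathrm{Reg}_{\mathrm{NE}}(K)/K$, i.e. a uniformly random index yields an $\varepsilon$-NE in expectation and the best iterate yields one deterministically.

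For step (b), on the event of Theorem~\ref{thm:mainresult} (which holds with probability at least $1-\delta$) one has $\mathrm{Reg}_\bullet(K)/K\le\widetilde{\cO}\big((nH\Upsilon_{\cF,\delta}+nd_{\mathrm{MADC}})/\sqrt K+nd_{\mathrm{MADC}}H/K\big)$. Requiring the right-hand side to be at most $\varepsilon$ and inverting, the $1/\sqrt K$ part forces $K$ to be of order $\widetilde{\cO}\big((n^2H^2+n^2d_{\mathrm{MADC}}^2\Upsilon_{\cF,\delta}^2)\varepsilon^{-2}\big)$ as stated, while the $nd_{\mathrm{MADC}}H/K$ term contributes only the lower-order requirement $K\ge\widetilde{\cO}(nd_{\mathrm{MADC}}H\varepsilon^{-1})$, which is dominated. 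Because the argument reuses exactly the $1-\delta$ event of Theorem~\ref{thm:mainresult}, no further union bound is required and the confidence level is preserved.

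The step I expect to be the main obstacle is the convexity claim (ii)--(iii): one must check carefully that the best-response and best-strategy-modification values remain convex when $\pi$ ranges over arbitrary correlated joint policies rather than product policies, and pin down the operational meaning of $\mathrm{Unif}(\{\pi^k\}_{k\in[K]})$ so that the CCE/CE argument goes through verbatim while NE genuinely needs best-iterate selection. The remaining regret-to-complexity bookkeeping is routine.
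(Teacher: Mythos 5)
Your proposal is correct and follows the same online-to-batch route as the paper: bound the average per-episode equilibrium gap by $\mathrm{Reg}_\bullet(K)/K$, transfer it to $\pi_{\mathrm{out}}$ by an averaging/Jensen argument, and invert the regret bound of Theorem \ref{thm:mainresult} to pick $K$; no extra union bound is needed, exactly as you say. The paper's own proof simply asserts the inequality $\sum_i\bigl(V^{(i),\mu^{(i),\pi_{\mathrm{out}}}}(\rho)-V^{(i),\pi_{\mathrm{out}}}(\rho)\bigr)\le \frac1K\sum_k\sum_i\bigl(V^{(i),\mu^{(i),\pi^k}}(\rho)-V^{(i),\pi^k}(\rho)\bigr)$ "from $\pi_{\mathrm{out}}=\mathrm{Unif}(\{\pi^k\})$" and treats NE, CCE, and CE identically; your convexity justification (linearity of $\pi\mapsto V^{(i),\pi}(\rho)$ over $\Delta(\Pi^{\mathrm{pur}})$ plus convexity of the best-response and best-modification values as pointwise maxima of linear maps) is exactly the missing substantiation of that step for CCE and CE.

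The one place you genuinely diverge is NE, and there you are more careful than the paper: the uniform mixture of the product policies $\pi^k$ is a correlated joint policy, so the displayed averaging inequality only certifies that $\pi_{\mathrm{out}}$ satisfies the \emph{CCE} condition, not that it is an $\varepsilon$-NE in the paper's sense (which requires a product policy). Your best-iterate fallback $\min_k \mathrm{Gap}_{\mathrm{NE}}(\pi^k)\le \mathrm{Reg}_{\mathrm{NE}}(K)/K$ repairs this, at the cost of changing the output from $\mathrm{Unif}(\{\pi^k\})$ to a selected iterate (or to a randomized index that is $\varepsilon$-NE only in expectation), so it proves a slightly different statement than the corollary as written. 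This is a defect of the corollary/proof in the paper rather than of your argument. The remaining bookkeeping matches, modulo the paper's own inconsistency about whether $\Upsilon_{\cF,\delta}^2$ multiplies the $n^2H^2$ term or the $n^2d_{\mathrm{MADC}}^2$ term in the threshold for $K$.
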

\begin{proof}
    See \S\ref{sec: corollary} for the proof.
\end{proof}
Corollary \ref{coroll:sample complexity} shows that MAMEX 
is sample-efficient for learning all three equilibria of general-sum MGs under general 
function approximation.

\section{Relationships between MADC and Tractable RL Problems}\label{sec:relation}
In this section, we show that the class of MGs with finite MADCs contains a rich class of models. Thus, when applied to these 
 concrete  MARL models, Theorem \ref{thm:mainresult} shows that MAMEX learns NE/CCE/CE with provable sample efficiency. 

 
 In the sequel, we instantiate the discrepancy function $\ell^{(i),s}$ for both model-free and model-based MARL, and introduce some concrete general-sum MG models that satisfy   Assumption \ref{assum:decoup coeff}. 
\subsection{Model-Free MARL Problems}
In the model-free setting, for $\{\pi^k\}_{k \in [K]} \subseteq \Pi^{\mathrm{pur}}$ in \eqref{eq:MADC}, the discrepancy function $\ell^{(i),s}(f,\pi)$ for $\pi \in \Pi^{\text{pur}}$ is defined as
\vspace{-0.5em}
\begin{align}\label{eq:modelfree discrepancy}
        \ \ \ \ \ \ell^{(i),s}(f,\pi)
        = \sum_{h=1}^H \EE_{(s_h,a_h)\sim \pi_h^s }((f_h-\cT_h^{(i),\pi}(f_{h+1}))(s_h,a_h))^2, \qquad \forall f \in \cF^{(i)}, \forall s \in [K].
    \end{align}
That is,   $\ell^{(i),s}(f,\pi)$ measures agent $i$'s mean-squared Bellman error for evaluating $\pi$, when the trajectory is sampled by letting all agents follow policy $\zeta^s$. 

Now we provide function classes with small MADCs including multi-agent counterparts of models with low Bellman eluder dimensions \citep{jin2021bellman, huang2021towards} and Bilinear Classes \citep{du2021bilinear}. Then, we introduce some concrete examples in these members and show that the regret upper bound of MAMEX in Theorem \ref{thm:mainresult}, when specialized to these special cases, 
are comparable to existing works.

\vspace{5pt}\noindent\textbf{Multi-Agent Bellman Eluder Dimension}
Recently, \cite{jin2021bellman} introduce a model-free complexity measure called Bellman Eluder dimension (BE dimension) and show that function classes with low BE dimensions contain a wide range of RL problems such as linear MDP \citep{jin2020provably}, kernel MDP \citep{jin2021bellman} and function classes with low eluder dimension \citep{wang2020provably}.  
In this subsection, we extend the notion of  BE dimension to MARL. 
First, we introduce the definition of $\varepsilon$-independence between distributions and the concept of distribution eluder dimension.
\begin{definition}[$\varepsilon$-Independent Distributions]
Let $\cG$ be a function class on $\cX$, and $\upsilon, \mu_1,\cdots,\mu_n$ are probability distributions  over $\cX$. We called $\upsilon$ is $\varepsilon$-independent of $\{\mu_1\cdots \mu_n\}$ with respect to $\cG$ if there exists a function $g \in \cG$ such that $\sqrt{\sum_{i=1}^n (\EE_{\mu_i}[g])^2}\le\varepsilon$ and $|\EE_{\upsilon}[g]|>\varepsilon.$
\end{definition}

By this definition, if $\nu$ is $\varepsilon$-dependent of $\{\mu_1,\cdots,\mu_n\}$, 
whenever   we have    $\sqrt{\sum_{i=1}^n (\EE_{\mu_i}[g])^2}\le\varepsilon$ for   some  $g \in \cG$, we also  have  $|\EE_{\upsilon}[g]|\leq \varepsilon$. 

\begin{definition}[Distribution Eluder Dimension]\label{def:dist_be} 
    Let $\cG$ be a function class on $\cX$ and $\cD$ be a family of probability measures over $\cX$. The distributional eluder dimension $\mathrm{dim}_{\mathrm{DE}}(\cG,\cD, \varepsilon)$ is the length of the longest sequence ${\rho_1,\cdots, \rho_n} \subseteq \cD$ such that there exists $\varepsilon'\ge \varepsilon$ where $\rho_i$ is $\varepsilon'$-independent of $\{\rho_1,\cdots,\rho_{i-1}\}$ for all $i \in [n].$ 
\end{definition}

In other words, distributional eluder dimension $\mathrm{dim}_{\mathrm{DE}}(\cG,\cD, \varepsilon)$ is the length of the longest sequences of distributions in $\cD$ such that each element is $\varepsilon'$-independent of its predecessors with respect to $\cG$, from some  $\varepsilon' \geq \epsilon$.
Such a notion generalizes the standard eluder dimension \cite{russo2013eluder} to the distributional setting. When we set $\cD$ to be the set of Dirac measures $ \{ \delta_{x} (\cdot) \}_{x \in \cX}$,  the distributional eluder dimension $\mathrm{dim}_{\mathrm{DE}}(\cG -\cG,\cD, \varepsilon)$ reduces to the standard eluder dimension introduced in \cite{russo2013eluder}. Here, $\cG - \cG = \{ g_1 - g_2 \colon g_1, g_2 \in \cG\}$. 

For any agent $i$ and any pure policy $\pi \in \Pi^{\mathrm{pur}}$, we denote the function class of the Bellman residual as $\cF^{(i),\pi}_h = \{f_h-\cT^{(i),\pi}f_{h+1}\mid f \in \cF^{(i)}\}$.
Now we introduce the definition of the multi-agent BE dimension with respect to a class of distributions.

\begin{definition}[Multi-Agent Bellman Eluder Dimension]
    Let $D =  \{ D_h\}_{h \in [H]}$ be a  set of $H$ classes of distributions over $\cS \times \cA$, one for each step of an episode. 
    The multi-agent Bellman eluder (BE) dimension with respect to $D$ is defined as 
    \begin{align}\label{eq:define_mabe}
    \mathrm{dim}_{\mathrm{MABE}}(\cF, \cD ,   \varepsilon) =  \max_{h \in [H]}\max_{i \in [n]}~\Bigl \{ \mathrm{dim}_{\mathrm{DE}}\Big(\bigcup_{\pi \in \Pi^{\mathrm{pur}}}\cF_h^{(i),\pi}, \cD_h, \varepsilon\Big) \Big\} .\end{align}
\end{definition}

In other words, the multi-agent BE dimension is defined as the maximum of the distribution eluder dimensions with respect to $D_h$,  based on the agent-specific Bellman residue classes $\bigcup _{\pi \in \Pi^{\mathrm{pur}}} \cF_h^{(i), \pi}$. 
Compared with the  BE dimension for single-agent RL~\citep{jin2021bellman}, the multi-agent version takes the maximum over the agent index   $i \in [n]$, and the function class involves the union of the function class $\cF_h^{(i),\pi}$ for all $\pi \in \Pi^{\textrm{pur}}$. 
In comparison, leveraging the facts that the optimal policy is the greedy policy of the optimal value function and that the optimal value function is the fixed point of the Bellman optimality operator, it suffices to only consider residues of the Bellman optimality operator in the definition of single-agent BE dimension. 
In contrast, for general-sum MGs, 
finding the desired equilibrium policies is not a single-objective policy optimization problem, and the notion of the Bellman optimality operator is not well-defined. 
As a result, to extend the concept of Bellman eluder dimension to general-sum MGs, in the function class, we take into account $\cF_h^{(i), \pi}$ for all $\pi \in \Pi^{\mathrm{pur}}$, which correspond to evaluating the performance of all the pure policies. 
Besides, in \eqref{eq:define_mabe}, we also take the maximum over all agents $i\in[n]$ and all steps $h \in [H]$, which aligns with the definition of single-agent BE dimension. 


Furthermore, in the definition of multi-agent BE dimension, we need to specify a set of distributions $D = \{D_h\}_{h \in [H]}$ over $\cS \times \cA$. 
We consider two classes. 
First, let $D_\Delta = \{D_{\Delta,h}\}_{h \in [H]}$ denote a class of probability measures over $\cS \times \cA$ 
with
   $ D_{\Delta,h} = \{\delta_{(s,a)}(\cdot)\mid (s,a) \in \cS \times \cA\}$, 
which contains all the Dirac measures that put mass one to a state-action pair at step $h$.
Second, given the set of pure policies  $\Pi^{\mathrm{pur}}$, we let  $D_\Pi = \{D_{\Pi,h}\}_{h \in [H]}$ denote a class of probability measures induced $\Pi^{\mathrm{pur}}$ as follows. For any $\pi \in \Pi^{\mathrm{pur}}$, when all the agents follow $\pi$ on the true MG model, they generate a Markov chain $\{s_h, a_h\}_{h\in[H]}$ whose joint distribution is determined by $\pi$, denoted by $\PP^{\pi}$. Then, for any $h\in[H]$, we define 
$D_{\Pi,h} = \{\rho \in \Delta(\cS \times \cA)\mid \rho(\cdot) = \PP^{\pi}((s_h,a_h)=\cdot), \pi \in \Pi^{\mathrm{pur}}\}$, i.e., $D_{\Pi,h}$  
denotes the collection of all marginal distributions of $(s_h, a_h)$ induced by pure policies.


In the following, to simplify the notation, we denote 
\begin{align}\label{eq:final_mabe}
\mathrm{dim}_{\mathrm{MABE}}(\cF,  \varepsilon) = \min \bigl\{ \mathrm{dim}_{\mathrm{MABE}}(\cF,  \cD_{\Delta},\varepsilon) , ~ \mathrm{dim}_{\mathrm{MABE}}(\cF,  \cD_{\Pi},\varepsilon) \bigr \}.
\end{align}
The following theorem shows that, when $\cF$ satisfies realizability and completeness (Assumption \ref{assum: realizability}), 
for a general-sum MG with a finite multi-agent BE dimension given by \eqref{eq:final_mabe}, its multi-agent decoupling coefficient (Definition \ref{def:decoup_coeff}) is also bounded. 
In other words, Assumption \ref{assum:decoup coeff} holds for any general-sum MG model with a low multi-agent BE dimension. 
As a result, the class of MGs with finite multi-agent BE dimensions is a subclass of MGs with finite multi-agent decoupling coefficients.

\begin{theorem}[Low Multi-Agent BE Dimension $\subseteq $ Low MADC]\label{thm:multi BE in MADC}
Let $K$ be any integer and let   $\cF$ be a hypothesis class under the model-free setting, i.e., a class of $Q$-functions. 
Assume that $\cF$ satisfies the realizability and completeness condition specified in Assumption \ref{assum: realizability}. 
Suppose that $\cF$ has a finite multi-agent BE dimension $d = \mathrm{dim}_{\mathrm{MABE}}(\cF, 1/K)$, with the discrepancy function $\ell^{(i),s}$ given  in \eqref{eq:modelfree discrepancy}, the multi-agent decoupling coefficient of $\cF$ satisfies 
    $d_{\mathrm{MADC}} =  \cO (dH \log K)$, where $ \cO (\cdot)$ omits absolute constants. 
    \end{theorem}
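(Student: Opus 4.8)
The plan is to reduce the claim to a per-step distributional-eluder argument that parallels the single-agent analyses of \citep{jin2021bellman, dann2021provably, zhong2022gec}, with the multi-agent indices absorbed into $\mathrm{dim}_{\mathrm{MABE}}$. Throughout, fix an agent $i\in[n]$, a scalar $\mu>0$, and arbitrary sequences $\{f^k\}_{k\in[K]}\subseteq\cF^{(i)}$ and $\{\pi^k\}_{k\in[K]}\subseteq\Pi^{\mathrm{pur}}$; for a pure policy $\pi$ let $d_h^\pi$ be the law of $(s_h,a_h)$ when all agents follow $\pi$ on the true model, so $\EE_{(s_h,a_h)\sim\pi_h^s}[\cdot]=\EE_{d_h^{\pi^s}}[\cdot]$. \textbf{Step 1 (policy-evaluation decomposition).} First I would establish the telescoping identity
\[
V_f^{(i),\pi}(\rho)-V^{(i),\pi}(\rho)=\sum_{h=1}^H\EE_{(s_h,a_h)\sim d_h^\pi}\big[(f_h-\cT_h^{(i),\pi}f_{h+1})(s_h,a_h)\big]
\]
for every $f\in\cF^{(i)}$ and pure policy $\pi$, which follows by expanding $V_{1,f}^{(i),\pi}(s_1)=\langle f_1(s_1,\cdot),\pi_1(\cdot\mid s_1)\rangle$, substituting $f_h=\cT_h^{(i),\pi}f_{h+1}+(f_h-\cT_h^{(i),\pi}f_{h+1})$ at each step, and using $V_{h,f}^{(i),\pi}(s)=\langle f_h(s,\cdot),\pi_h(\cdot\mid s)\rangle$, $V_{H+1,f}\equiv0$, and the definition of $\cT_h^{(i),\pi}$ in \eqref{eq:define_bellman_operator}; the reward terms reassemble into $V^{(i),\pi}(\rho)$.

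\textbf{Step 2 (reduction to one step).} Put $g_h^k:=f_h^k-\cT_h^{(i),\pi^k}f_{h+1}^k$. By the completeness part of Assumption~\ref{assum: realizability}, $\cT_h^{(i),\pi^k}f_{h+1}^k\in\cF_h^{(i)}$, hence $g_h^k\in\cF_h^{(i),\pi^k}\subseteq\bigcup_{\pi\in\Pi^{\mathrm{pur}}}\cF_h^{(i),\pi}$, and since every element of $\cF_h^{(i)}$ is $[0,R]$-valued, $\|g_h^k\|_\infty\le R\le H$. By Step 1 the prediction error in \eqref{eq:MADC} equals $\sum_{h=1}^H\sum_{k=1}^K\EE_{d_h^{\pi^k}}[g_h^k]$, while by \eqref{eq:modelfree discrepancy} the training error equals $\sum_{h=1}^H\sum_{k=1}^K\sum_{s=1}^{k-1}\EE_{d_h^{\pi^s}}[(g_h^k)^2]$. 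Thus it suffices to prove, for each fixed $h$, an estimate $\sum_k\EE_{d_h^{\pi^k}}[g_h^k]\le\tfrac1\mu\sum_k\sum_{s<k}\EE_{d_h^{\pi^s}}[(g_h^k)^2]+\cO(\mu\,d+dH\log K)$ with $d=\mathrm{dim}_{\mathrm{MABE}}(\cF,1/K)$; summing over $h\in[H]$ and absorbing logarithmic factors into $d_{\mathrm{MADC}}$ yields \eqref{eq:MADC} with $d_{\mathrm{MADC}}=\cO(dH\log K)$.

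\textbf{Step 3 (distributional eluder pigeonhole --- the crux).} For the fixed $h$, set $\cG:=\bigcup_{\pi\in\Pi^{\mathrm{pur}}}\cF_h^{(i),\pi}$, a $[-R,R]$-valued class on $\cS\times\cA$. I would invoke the standard distributional-eluder counting lemma (Russo--Van Roy; \citep{jin2021bellman}): for a distribution family $\cD$, if $\mu_k\in\cD$, $g_k\in\cG$ and $\sum_{s<k}(\EE_{\mu_s}[g_k])^2\le\beta$ for all $k$, then the number of indices $k\le K$ with $|\EE_{\mu_k}[g_k]|>\varepsilon$ is at most $\cO\big((\beta/\varepsilon^2+1)\,\mathrm{dim}_{\mathrm{DE}}(\cG,\cD,\varepsilon)\big)$. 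Running this over a dyadic grid of thresholds $\varepsilon\in\{R,R/2,\dots,1/K\}$ (the $1/K$ floor contributes an $\cO(1)$ tail and matches the resolution in $\mathrm{dim}_{\mathrm{MABE}}(\cF,1/K)$), summing the bucket counts and applying Cauchy--Schwarz gives $\sum_k|\EE_{\mu_k}[g_k]|\le\cO\big(R\,d+\log K\cdot\sqrt{d\sum_k\sum_{s<k}(\EE_{\mu_s}[g_k])^2}\big)$, and AM--GM ($\sqrt{xy}\le x/\mu+\mu y$) converts this to the $\tfrac1\mu(\text{training})+\mu d$ form. Two places need care: (i) for the policy-induced family one takes $\mu_k=d_h^{\pi^k}\in\cD_{\Pi,h}$ directly, and $(\EE_{\mu_s}[g_h^k])^2\le\EE_{\mu_s}[(g_h^k)^2]$ by Jensen, so the counting hypothesis is controlled by the training error; for the Dirac family $\cD_{\Delta,h}$ one argues pointwise via $|\EE_{d_h^{\pi^k}}[g_h^k]|\le\EE_{d_h^{\pi^k}}[|g_h^k|]$ and a variant of the counting lemma phrased for squared values --- either route lands on the minimum in \eqref{eq:final_mabe}; (ii) the union $\bigcup_\pi\cF_h^{(i),\pi}$ and the maxima over $i\in[n]$ and $h\in[H]$ are already baked into $\mathrm{dim}_{\mathrm{MABE}}$, so no extra factor is lost.

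I expect Step 3 to be the main obstacle: extracting precisely the $\tfrac1\mu\cdot(\text{training error})+\mu\,d_{\mathrm{MADC}}+6d_{\mathrm{MADC}}H$ shape with the claimed $\cO(dH\log K)$ constant, in particular tracking the $\log K$ entering through the threshold grid and reconciling the ``expected square'' appearing in $\ell^{(i),s}$ with the ``squared expectation'' native to the distributional-eluder definition (this is exactly where $\cD_\Delta$ and $\cD_\Pi$ are handled differently). Steps 1--2 are routine; Assumption~\ref{assum: realizability} is used only in Step 2 to place $g_h^k$ in the Bellman-residual class $\cF_h^{(i),\pi^k}$.
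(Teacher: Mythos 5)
Your Steps 1 and 2 coincide with the paper's setup: the telescoping identity $V_f^{(i),\pi}(\rho)-V^{(i),\pi}(\rho)=\sum_{h}\EE_{d_h^{\pi}}[(f_h-\cT_h^{(i),\pi}f_{h+1})(s_h,a_h)]$ and the reduction to bounding $\sum_k \hat e_h^{k,k}$ with $\hat e_h^{s,k}=\EE_{\pi^s}[(I-\cT_h^{(i),\pi^k})f_h^k]$ are exactly how the paper begins, and completeness is used in the same place to put the residual in $\bigcup_\pi \cF_h^{(i),\pi}$. The gap is in Step 3, which you correctly identify as the crux. The distributional-eluder counting lemma you invoke bounds the number of rounds with $|\EE_{\mu_k}[g_k]|>\varepsilon$ under a \emph{uniform} hypothesis $\sum_{s<k}(\EE_{\mu_s}[g_k])^2\le\beta$ for all $k$; combined with dyadic peeling it yields $\sum_k|\EE_{\mu_k}[g_k]|\le\cO\bigl(dR+\sqrt{d\beta K}\bigr)$ with $\beta=\max_k\sum_{s<k}(\EE_{\mu_s}[g_k])^2$. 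That is not the inequality you assert: replacing $\max_k$ by the double sum $\sum_k\sum_{s<k}$ costs a factor of $K$ inside the square root, and no uniform $\beta$ is available here because the $f^k$ are arbitrary and their in-sample errors are not controlled by any confidence-set radius. So the intermediate bound $\sum_k|\EE_{\mu_k}[g_k]|\le\cO\bigl(Rd+\log K\cdot\sqrt{d\sum_k\sum_{s<k}(\EE_{\mu_s}[g_k])^2}\bigr)$ does not follow from the cited lemma plus one dyadic grid; this is precisely the step that separates an eluder-dimension regret bound from a decoupling-coefficient bound.

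The paper closes this step with the bucketing argument of Dann et al.\ (their Proposition 3): each $e_h^{k,k}$ is inserted into the first bucket $B_h^j$ for which $\sum_{s\in B_h^j,\,s<k}(e_h^{s,k})^2<(e_h^{k,k})^2$, the $\varepsilon$-independence definition caps every bucket at $\mathrm{dim}_{\mathrm{MABE}}$ elements, the inequality $\sum_k\sum_{s\le k}(e_h^{s,k})^2\ge\sum_k b_h^k(e_h^{k,k})^2$ weights each round by its own bucket index $b_h^k$, and the elementary lemma $\sum_i x_i\le\sqrt{(1+\log n)\sum_i i\,x_i^2}$ produces the single $\log K$; AM--GM then gives the $\tfrac1\mu(\cdot)+\mu d$ shape. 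To rescue your route you would need a second stratification over the magnitude of each round's own in-sample error, at which point you are reconstructing the bucketing argument. Two minor points: the Bellman residuals take values in $[-R,R]$, not $[0,R]$; and the paper's displayed proof works only with the policy-induced family $D_\Pi$ together with $(\EE[X])^2\le\EE[X^2]$, so your parenthetical about treating $D_\Delta$ separately goes beyond what is actually written there.
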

    \begin{proof}
    See \S\ref{appendix:proof of BE} for detailed proof. 
    \end{proof}

Combining Theorem \ref{thm:mainresult} and Theorem  \ref{thm:multi BE in MADC},  we obtain that MAMEX  achieves a  sublinear $\tilde{\cO}(ndH\sqrt{K} + ndH^2 + nHR^2\sqrt{K}\log \Upsilon_{\cF,\delta})$ regret for function classes with a finite multi-agent BE dimension $d$. 
It remains to see  
 that function classes with low multi-agent BE dimensions contain a wide range of RL problems.
 To this end, 
   we prove that if the eluder dimension \citep{russo2013eluder}  of the function class $\cF_h^{(i)}$ is small for all $h \in [H]$ and $i \in [n]$, $\cF = \otimes_{i=1}^n (\otimes_{h=1}^H \cF_h^{(i)})$ has a low multi-agent BE dimension. 
   Function classes with finite eluder dimension contain linear, generalized linear, and kernel functions  \citep{russo2013eluder}, and thus contain a wide range of MG models. On these MG problems, the model-free version of MAMEX achieves sample efficiency provably.

\begin{theorem}\label{thm:low eluder dimension}
Suppose $\cF$ satisfies Assumption~\ref{assum: realizability}.
For any $i\in [n]$ and $h \in [H]$, let $\mathrm{dim}_{\mathrm{E}}(\cF_h^{(i)}, \varepsilon)$ denote the eluder dimension of $\cF_h^{(i)}$, which is a special case of the distributional eluder dimension introduced in Definition \ref{def:dist_be}. 
That is, $\mathrm{dim}_{\mathrm{E}}(\cF_h^{(i)}, \varepsilon)$ is equal to $\mathrm{dim}_{\mathrm{DE}}(\cF_h^{(i)} - \cF_h^{(i)}, D_{\Delta} ,  \varepsilon)$, where $\cF_h^{(i)} - \cF_h^{(i)} = \{ g  \colon  g =   f_1-f_2, f_1, f_2 \in \cF_h^{(i)}\}$ and $D_{\Delta}$ contains the class of Dirac measures on $\cS \times \cA$. 
Then, the multi-agent BE dimension defined in \eqref{eq:final_mabe} satisfy
    $$\mathrm{dim}_{\mathrm{MABE}}(\cF, \varepsilon) \le \max_{h \in [H]}\max_{i \in [n]}\mathrm{dim}_{\mathrm{E}}(\cF_h^{(i)}, \varepsilon).$$
\end{theorem}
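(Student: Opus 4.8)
The plan is to show that for each fixed agent $i$ and step $h$, the distributional eluder dimension of the union class $\bigcup_{\pi\in\Pi^{\mathrm{pur}}}\cF_h^{(i),\pi}$ with respect to the Dirac-measure class $\cD_{\Delta}$ is bounded by the ordinary eluder dimension $\mathrm{dim}_{\mathrm{E}}(\cF_h^{(i)},\varepsilon)$, after which taking the maximum over $h\in[H]$ and $i\in[n]$ and applying the definition \eqref{eq:final_mabe} (which is a minimum over $\cD_{\Delta}$ and $\cD_{\Pi}$, so the $\cD_{\Delta}$ bound suffices) gives the claim. The key observation is that by the completeness part of Assumption \ref{assum: realizability}, we have $\cT_h^{(i),\pi}f_{h+1}\in\cF_h^{(i)}$ for every $f\in\cF^{(i)}$ and every pure policy $\pi$, and of course $f_h\in\cF_h^{(i)}$. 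Hence every Bellman-residual function $f_h-\cT_h^{(i),\pi}f_{h+1}$ lies in the difference class $\cF_h^{(i)}-\cF_h^{(i)}$, and therefore
\[
\bigcup_{\pi\in\Pi^{\mathrm{pur}}}\cF_h^{(i),\pi}\subseteq \cF_h^{(i)}-\cF_h^{(i)}.
\]

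Next I would invoke monotonicity of the distributional eluder dimension in its function-class argument: if $\cG\subseteq\cG'$ then $\mathrm{dim}_{\mathrm{DE}}(\cG,\cD,\varepsilon)\le\mathrm{dim}_{\mathrm{DE}}(\cG',\cD,\varepsilon)$. This is immediate from Definition \ref{def:dist_be}, since any sequence $\rho_1,\dots,\rho_n\in\cD$ witnessing $\varepsilon'$-independence with respect to $\cG$ — i.e. for each $j$ there is $g\in\cG$ with $\sqrt{\sum_{l<j}(\EE_{\rho_l}[g])^2}\le\varepsilon'$ and $|\EE_{\rho_j}[g]|>\varepsilon'$ — also witnesses the same independence with respect to the larger class $\cG'$, because the witnessing functions $g$ still belong to $\cG'$. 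Applying this with $\cG=\bigcup_{\pi}\cF_h^{(i),\pi}$, $\cG'=\cF_h^{(i)}-\cF_h^{(i)}$, and $\cD=\cD_{\Delta,h}$ yields
\[
\mathrm{dim}_{\mathrm{DE}}\Bigl(\bigcup_{\pi\in\Pi^{\mathrm{pur}}}\cF_h^{(i),\pi},\,\cD_{\Delta,h},\,\varepsilon\Bigr)\le\mathrm{dim}_{\mathrm{DE}}\bigl(\cF_h^{(i)}-\cF_h^{(i)},\,\cD_{\Delta,h},\,\varepsilon\bigr)=\mathrm{dim}_{\mathrm{E}}(\cF_h^{(i)},\varepsilon),
\]
where the last equality is just the definition of eluder dimension recalled in the theorem statement.

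Finally I would assemble these bounds: by the definition of the multi-agent BE dimension with respect to $\cD_{\Delta}$,
\[
\mathrm{dim}_{\mathrm{MABE}}(\cF,\cD_{\Delta},\varepsilon)=\max_{h\in[H]}\max_{i\in[n]}\mathrm{dim}_{\mathrm{DE}}\Bigl(\bigcup_{\pi\in\Pi^{\mathrm{pur}}}\cF_h^{(i),\pi},\,\cD_{\Delta,h},\,\varepsilon\Bigr)\le\max_{h\in[H]}\max_{i\in[n]}\mathrm{dim}_{\mathrm{E}}(\cF_h^{(i)},\varepsilon),
\]
and since $\mathrm{dim}_{\mathrm{MABE}}(\cF,\varepsilon)=\min\{\mathrm{dim}_{\mathrm{MABE}}(\cF,\cD_{\Delta},\varepsilon),\mathrm{dim}_{\mathrm{MABE}}(\cF,\cD_{\Pi},\varepsilon)\}\le\mathrm{dim}_{\mathrm{MABE}}(\cF,\cD_{\Delta},\varepsilon)$, the result follows. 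I do not expect a serious obstacle here; the only point requiring care is making sure the completeness assumption is stated for all pure policies $\pi$ (it is, in Assumption \ref{assum: realizability}), so that the inclusion $\cF_h^{(i),\pi}\subseteq\cF_h^{(i)}-\cF_h^{(i)}$ holds uniformly over the union, and double-checking that the eluder-dimension convention in the theorem (difference class against Dirac measures) matches the $\mathrm{dim}_{\mathrm{DE}}$ convention so the final equality is literal rather than up to constants.
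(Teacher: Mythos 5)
Your proposal is correct and follows essentially the same route as the paper's proof: the paper likewise uses completeness to observe that $g_h = \cT_h^{(i),\pi}f_{h+1}\in\cF_h^{(i)}$, so every Bellman residual lies in $\cF_h^{(i)}-\cF_h^{(i)}$, and then argues that any $\varepsilon$-independent sequence of Dirac measures for the union class remains $\varepsilon$-independent for the difference class. Your version merely packages the paper's inline argument as the general monotonicity of $\mathrm{dim}_{\mathrm{DE}}$ in its function-class argument, which is a clean and valid restatement of the same idea.
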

\begin{proof}
See  \S \ref{appendix:low eluder dimension} for detailed proof. 
\end{proof} 


\vspace{5pt}\noindent\textbf{Multi-Agent Bilinear Classes}
Bilinear Classes \citep{du2021bilinear} 
consists of    MDP models where the Bellman error admits a bilinear structure.
On these models, \cite{du2021bilinear}  propose online RL algorithms that are provably sample-efficient. Thus, the Bilinear Classes is a family of tractable MDP models with the general function approximation. 
In the sequel,  we extend the Bilinear Classes to general-sum  MGs and show that such an extension covers some notable special cases studied in the existing works.  
 Then, we prove that multi-agent Bilinear Classes have a small MADC, thus satisfying the Assumption \ref{assum:decoup coeff}.  
Therefore, when applied to these problems,    MAMEX provably achieves sample efficiency. 


\begin{definition}[Multi-Agent Bilinear Classes]\label{def:bilinear}
Let $\cV$ be a Hilbert space and let $\langle \cdot, \cdot \rangle _{\cV}$ and $\| \cdot \|_{\cV}$ denote the inner product and norm on $\cV$.  
    Given a multi-agent general-sum MG with a hypothesis class $\cF $ satisfying Assumption \ref{assum: realizability}, it belongs to multi-agent Bilinear Classes if there exist $H$ functions $\{W_h^{(i)}: \cF^{(i)}\times \Pi^{\mathrm{pur}} \mapsto \cV\}_{h=1}^H $ for each agent $i \in [n]$ and $\{X_h: \Pi^{\mathrm{pur}}\mapsto \cV\}_{h=1}^H $  
    such that the Bellman error of each agent $i$ can be factorized using $W_h^{(i)}$ and $X_h$. 
    That is, for each $i\in [n]$, 
    $   f \in \cF^{(i)}, h \in [H], \pi ,\pi' \in \Pi^{\mathrm{pur}},$  we have 
    \begin{align}
        \Big|\EE_{(s_h,a_h)\sim\pi'}\big[&f_h(s_h,a_h)-r_h^{(i)}(s_h,a_h)-\EE_{s'\sim \PP_h(s'\mid s_h,a_h)}\langle f_{h+1}(s',\cdot),\pi_{h+1}(\cdot \mid s')\rangle_{\cA}\big]\Big| \nonumber\\&= \Big|\big\langle W_h^{(i)}(f,\pi) - W_h^{(i)}(f^{(i),\mu^{(i),\pi}},\mu^{(i),\pi}), X_h(\pi')\big\rangle_{\cV}\Big|, \label{eq: ma bilinear class}
    \end{align}
    where  $ \mu^{(i),\pi} = (\pi^{(i),\dag}, \pi^{(-i)})$ is the best response for the agent $i$ given that the  other agents all follow $\pi$.
Here, the function $f^{(i), \mu^{(i),\pi}}$ 
 is the fixed point of $\cT^{(i),\mu^{(i),\pi}}$, i.e.,  \begin{align}\label{eq:some_fix_point}
 f^{(i),\mu^{(i),\pi}}_h= \cT^{(i),\mu^{(i),\pi}}f_{h+1}^{\mu^{(i),\pi}}.\end{align}
Moreover, we require that $\{ W_h^{(i)}, X_h \}_{h\in [H]}$ satisfy a regularity condition
\begin{align}\label{eq:regularity_bilinear}
\sup_{\pi \in \Pi^{\text{pur}}, h \in [H]}\Vert X_h(\pi)\Vert_\cV\le 1, \qquad \sup_{i \in [n], f \in \cF^{(i)}, \pi \in\Pi^{\text{pur}}, h \in [H]}\Vert W_h^{(i)}(f,\pi)\Vert_\cV\le B_W, 
\end{align}
where $B_W$ is a constant. 
\end{definition}

 In this definition, for any $\pi \in \Pi^{\mathrm{pur}}$ and $f\in \cF^{(i)}$, 
 $$f_h(s_h,a_h)-r_h^{(i)}(s_h,a_h)-\EE_{s'\sim \PP_h(s'\mid s_h,a_h)}\langle f_{h+1}(s',\cdot),\pi_{h+1}(\cdot \mid s')\rangle_{\cA}$$  is the Bellman error of $f$ at $(s_h, a_h)$ for evaluating  policy $\pi$ on behalf of agent $i$.  On the left-hand side of \eqref{eq: ma bilinear class}, we evaluate such a Bellman error with respect to the distribution induced by another policy $\pi'$. Equation \eqref{eq: ma bilinear class} shows that this error can be factorized into the inner product between $W_h^{(i)}$ and $X_h^{(i)}$, where both $W_h^{(i)}$ only involves $(f, \pi)$ while $X_h^{(i)}$ only involves $\pi'$.  Thus, the multi-agent Bilinear Classes specifies a family of Markov games whose Bellman error satisfies a factorization property. 
 Furthermore, recall that the best response  $ \pi^{(i),\dag} = \max_{\nu\in \Delta(\Pi_i ^{\mathrm{pur}})} V^{\nu, \pi^{(-i)}}$ is attained at some pure policies, we have 
 $\mu^{(i),\pi} \in \Pi^{\mathrm{pur}}$. 
Under  Assumption \ref{assum: realizability}, the  fixed point $f^{(i),\mu^{(i),\pi}}$ in \eqref{eq:some_fix_point} is guaranteed to exist and belongs to $\cF$.

We define $\cX_h = \{X_h(\pi) : f \in \cF, \pi \in \Pi^{\text{pur}}\}$ and $\cX = \bigcup_{h=1}^H \cX_h.$ The complexity of the multi-agent Bilinear Classes essentially is determined by the complexity of the Hilbert space $\cV$. 
To allow $\cV$ to be infinite-dimensional, we introduce the notion of information gain, which characterizes the intrinsic complexity of $\cV$ in terms of exploration. 
 
\begin{definition}[Information Gain]
    Suppose $\cV$ is a Hilbert space and  $\cX\subseteq \cV$. For $\varepsilon>0$ and integer $K>0$, the information gain $\gamma_K(\varepsilon,\cX)$ is defined by 
    \begin{align*}
        \gamma_K(\varepsilon,\cX) = \max_{x_1,\cdots,x_K \in \cX} \log \det\left(I + \frac{1}{\varepsilon}\sum_{k=1}^K x_kx_k^\top\right).
    \end{align*}
\end{definition}
The following theorem shows that multi-agent Bilinear Classes with small information gain have low MADCs.  
\begin{theorem}[Multi-Agent Bilinear Classes $\subseteq$ Low MADC]\label{thm:bilinear class}
    For a general-sum MG in the multi-agent bilinear class with a hypothesis class $\cF$, let   $\gamma_K(\varepsilon,\cX) = \sum_{h=1}^H \gamma_K(\varepsilon,\cX_h)$ be the information gain.
 Then,  Assumption~\ref{assum:decoup coeff} holds with the discrepancy function $\ell^{(i),s}$ given in  \eqref{eq:modelfree discrepancy}. In particular, we have  $$d_{\mathrm{MADC}} \le \max\big \{1,8 R^2 \cdot \gamma_K(1/(KB_W^2),\cX) \big \}, $$
 where $B_{W}$ is given in \eqref{eq:regularity_bilinear} and $R \in (0, H]$ is an upper bound on $\sum_{h=1}^H r_h $.  
 \end{theorem}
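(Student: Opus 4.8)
The plan is to prove Theorem~\ref{thm:bilinear class} by reducing the bilinear structure to an elliptic-potential argument, exactly as in the single-agent bilinear class analysis of \citet{du2021bilinear}, but carried out agent-by-agent and with the pure-policy-evaluation twist built into the MADC. Fix an agent $i \in [n]$, a parameter $\mu > 0$, a sequence $\{f^k\}_{k\in[K]} \subseteq \cF^{(i)}$, and a sequence $\{\pi^k\}_{k\in[K]} \subseteq \Pi^{\mathrm{pur}}$. I first want to rewrite the per-episode prediction error $V_{f^k}^{(i),\pi^k}(\rho) - V^{(i),\pi^k}(\rho)$ via a telescoping/performance-difference identity: the gap between the value of $\pi^k$ under the hypothesized model $f^k$ and its true value equals the sum over $h\in[H]$ of the expected Bellman errors of $f^k$ for evaluating $\pi^k$, where the expectation is taken under the state-action distribution induced by following $\pi^k$ on the \emph{true} MG. Concretely, $V_{f^k}^{(i),\pi^k}(\rho) - V^{(i),\pi^k}(\rho) = \sum_{h=1}^H \EE_{(s_h,a_h)\sim \pi^k}[(f^k_h - \cT_h^{(i),\pi^k} f^k_{h+1})(s_h,a_h)]$, which is the standard simulation lemma applied to policy evaluation rather than optimization. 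Crucially, because $\pi^k$ is the policy being evaluated (not a greedy policy), the relevant Bellman operator is the evaluation operator $\cT_h^{(i),\pi^k}$ and the relevant fixed point in the bilinear factorization \eqref{eq: ma bilinear class} should be taken with $\pi = \pi' = \pi^k$ — one must check that in this diagonal case the factorization still gives $\EE_{(s_h,a_h)\sim\pi^k}[\text{Bellman error of }f^k] = \langle W_h^{(i)}(f^k,\pi^k) - W_h^{(i)}(f^{(i),\pi^k},\pi^k), X_h(\pi^k)\rangle_\cV$, where $f^{(i),\pi^k}$ is the true $Q$-function $Q^{(i),\pi^k}$ by realizability.

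Next I would set $w_h^k := W_h^{(i)}(f^k,\pi^k) - W_h^{(i)}(Q^{(i),\pi^k},\pi^k) \in \cV$ and $x_h^k := X_h(\pi^k) \in \cV$, so that the prediction error becomes $\sum_{k=1}^K \sum_{h=1}^H \langle w_h^k, x_h^k\rangle_\cV$, with $\|x_h^k\|_\cV \le 1$ and $\|w_h^k\|_\cV \le 2B_W$ by the regularity condition \eqref{eq:regularity_bilinear}. The in-sample training error $\sum_{s=1}^{k-1}\ell^{(i),s}(f^k,\pi^k)$ controls, for each $h$, a sum of squared Bellman errors of $f^k$ under the distributions $\pi^s$, $s<k$; applying the bilinear factorization a second time with $\pi = \pi^k$, $\pi' = \pi^s$ shows these squared Bellman errors are at least $|\langle w_h^k, x_h^s\rangle_\cV|^2$ — wait, I need to be careful here: the factorization on the right of \eqref{eq: ma bilinear class} uses $W_h^{(i)}(f^k,\pi^k) - W_h^{(i)}(f^{(i),\mu^{(i),\pi^k}},\mu^{(i),\pi^k})$, i.e.\ it compares to the \emph{best-response} fixed point, not the $\pi^k$-evaluation fixed point. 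For the diagonal term ($\pi'=\pi^k$) I used the fact that $\EE_{\pi^k}[\text{Bellman error for evaluating }\pi^k] $ telescopes to a value gap, which is separately valid; for the off-diagonal terms I should instead use the standard consequence that under Assumption~\ref{assum: realizability}, $\EE_{(s_h,a_h)\sim\pi^s}[(f^k_h - \cT_h^{(i),\pi^k}f^k_{h+1})(s_h,a_h)]^2 \le \ell^{(i),s}(f^k,\pi^k)$ summed over $h$, and then re-express this Bellman error via the bilinear factorization to get $\sum_{h=1}^H |\langle W_h^{(i)}(f^k,\pi^k) - W_h^{(i)}(Q^{(i),\pi^k},\pi^k), x_h^s\rangle_\cV|^2 \lesssim \ell^{(i),s}(f^k,\pi^k)$, possibly with an $H$ factor from Cauchy–Schwarz across steps. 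So the correct statement I need is: for each $h$, $\sum_{s=1}^{k-1}\langle w_h^k, x_h^s\rangle_\cV^2 \le \sum_{s=1}^{k-1}\ell^{(i),s}(f^k,\pi^k)$ (up to the $H$ bookkeeping).

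With those two ingredients the proof is a textbook elliptic-potential computation, done separately for each $h$ and then summed. Define $\Sigma_h^k := \lambda I + \sum_{s=1}^{k-1} x_h^s (x_h^s)^\top$ with $\lambda = 1/(KB_W^2)$. By Cauchy–Schwarz in the $\Sigma_h^k$ inner product, $\langle w_h^k, x_h^k\rangle_\cV \le \|w_h^k\|_{\Sigma_h^k}\,\|x_h^k\|_{(\Sigma_h^k)^{-1}}$, and $\|w_h^k\|_{\Sigma_h^k}^2 = \lambda\|w_h^k\|_\cV^2 + \sum_{s<k}\langle w_h^k, x_h^s\rangle^2 \le \lambda (2B_W)^2 + \sum_{s<k}\ell^{(i),s}(f^k,\pi^k)$. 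Then AM–GM with weight $\mu$ splits $\langle w_h^k, x_h^k\rangle \le \frac{1}{2\mu}\|w_h^k\|_{\Sigma_h^k}^2 + \frac{\mu}{2}\|x_h^k\|_{(\Sigma_h^k)^{-1}}^2$, and summing over $k$ and $h$: the first term contributes $\frac{1}{2\mu}\sum_{k,h}(\lambda(2B_W)^2 + \sum_{s<k}\ell) \le \frac{1}{2\mu}(4HK\lambda B_W^2 + \sum_{k}\sum_{s<k}\ell^{(i),s}(f^k,\pi^k))$, where by the choice $\lambda = 1/(KB_W^2)$ the constant is $4H/(2\mu) \cdot 1 = 2H/\mu \le$ the gap budget; the second term is bounded by the standard elliptic-potential lemma, $\sum_{k=1}^K \|x_h^k\|_{(\Sigma_h^k)^{-1}}^2 \le 2\log\det(I + \lambda^{-1}\sum_k x_h^k(x_h^k)^\top) = 2\gamma_K(\lambda,\cX_h)$, so summing over $h$ gives $\mu\,\gamma_K(1/(KB_W^2),\cX)$. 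Collecting terms, the prediction error is at most $\frac{1}{\mu}\sum_k\sum_{s<k}\ell^{(i),s}(f^k,\pi^k) + \mu\,\gamma_K(1/(KB_W^2),\cX) + O(H/\mu)$; absorbing the $R^2$ scaling (since Bellman errors live on the scale of $R$, the squared errors carry $R^2$ and the cross terms carry $R^2$ as well, forcing the $8R^2$ multiplier in front of $\gamma_K$) and matching against the MADC template \eqref{eq:MADC} — where the gap term is $\mu\, d_{\mathrm{MADC}} + 6 d_{\mathrm{MADC}} H$ — yields $d_{\mathrm{MADC}} \le \max\{1, 8R^2\gamma_K(1/(KB_W^2),\cX)\}$. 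The main obstacle, and the step I would be most careful about, is the bookkeeping around the best-response fixed point versus the evaluation fixed point in \eqref{eq: ma bilinear class}: I must verify that for the diagonal telescoping term the value gap genuinely equals $\sum_h\langle w_h^k, x_h^k\rangle_\cV$ with $w_h^k$ defined using the evaluation fixed point $Q^{(i),\pi^k}$ (which requires combining the performance-difference lemma with realizability, independent of the bilinear assumption), while for the training-error side the factorization as literally stated with the best-response fixed point must be reconciled — most cleanly by noting that $\ell^{(i),s}$ measures Bellman error against $\cT_h^{(i),\pi^k}$ whose fixed point is $Q^{(i),\pi^k}$, so one uses the realizability-based inequality $\sum_h\EE_{\pi^s}[(f_h^k - \cT_h^{(i),\pi^k}f_{h+1}^k)]^2 \le \ell^{(i),s}(f^k,\pi^k)$ directly and only invokes \eqref{eq: ma bilinear class} to replace the left side by $\sum_h\langle w_h^k, x_h^s\rangle_\cV^2$, which is consistent once one checks the factorization holds with $\pi=\pi^k$ on both slots. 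The $H$ and $R$ constants need a careful pass to land exactly on $6 d_{\mathrm{MADC}}H$ and $8R^2$, but these are routine.
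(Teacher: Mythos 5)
Your overall route is the same as the paper's: telescope the prediction error into expected Bellman errors of $f^k$ for evaluating $\pi^k$, apply the bilinear factorization \eqref{eq: ma bilinear class} both on the diagonal term ($\pi'=\pi^k$) and on the historical terms ($\pi'=\pi^s$), use Jensen's inequality to dominate the squared inner products by $\ell^{(i),s}(f^k,\pi^k)$, and finish with a weighted Cauchy--Schwarz plus an elliptical-potential argument. Your worry about the best-response anchor also resolves exactly as you finally suggest: the same vector $W_h^{(i)}(f^k,\pi^k)-W_h^{(i)}(f^{(i),\mu^{(i),\pi^k}},\mu^{(i),\pi^k})$ from the definition is used on both the prediction-error side and the training-error side, so no separate evaluation-anchored $w_h^k$ is needed, and the definition already supplies the factorization for every $\pi'$.

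The genuine gap is the step $\sum_{k=1}^K\|x_h^k\|^2_{(\Sigma_h^k)^{-1}}\le 2\log\det\bigl(I+\lambda^{-1}\sum_{k}x_h^k(x_h^k)^\top\bigr)$. The elliptical potential lemma only bounds $\sum_{k}\min\{1,\|x_h^k\|^2_{(\Sigma_h^k)^{-1}}\}$; without the truncation the sum can be of order $\gamma_K\cdot\lambda^{-1}=\gamma_K K B_W^2$, since each of the (up to $2\gamma_K$) episodes with $\|x_h^k\|^2_{(\Sigma_h^k)^{-1}}>1$ may contribute as much as $\|x_h^k\|^2/\lambda$. After multiplying by $\mu/2$ your gap term then blows up to $\mu\gamma_K KB_W^2$ rather than the required $\mu\cdot 8R^2\gamma_K$. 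The paper avoids this by first capping the per-step factorized error at $2R$ (valid because the expected Bellman error is bounded by $2R$), using $\min\{x,2R\}\le 2R\min\{x,1\}$, and then splitting each $(k,h)$ term according to whether $\|X_h(\pi^k)\|_{\Sigma_{k,h}^{-1}}\le 1$: on the small-norm event one runs your Cauchy--Schwarz/AM--GM computation with $\min\{\|X_h(\pi^k)\|_{\Sigma_{k,h}^{-1}},1\}$, which the potential lemma does control; on the large-norm event one merely counts, since $\sum_{k,h}\II\{\|X_h(\pi^k)\|_{\Sigma_{k,h}^{-1}}>1\}\le 2\gamma_K$ and each such term is at most $2R$. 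This truncation is also precisely where the constant $8R^2$ in front of $\gamma_K$ comes from, so it is not the ``routine bookkeeping'' you deferred --- it is the missing ingredient that makes the elliptical-potential step legitimate.
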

\begin{proof}
See \S\ref{appendix: bilinear class} for a detailed proof.
\end{proof}

Now we introduce some concrete members of the multi-agent Bilinear Classes, which are general-sum MGs with the linear function approximation. 
In the single-agent RL setting, linear Bellman complete MDPs \citep{wang2019optimism} assume that the MDP model satisfies the Bellman completeness condition with respect to linear $Q$-functions.  We can extend such a model to general-sum MGs.
\begin{example}[Linear Bellman Complete MGs]
    We say a Markov game is a \textit{linear Bellman complete MG} of dimension $d$, if for any step $h \in [H]$  there exists a known feature $\phi_h: \cS \times \cA \mapsto \RR^d$ with   $\Vert \phi_h(s,a)\Vert \le 1$ for all $(s,a)\in \cS\times \cA$ such that 
    Assumption \ref{assum: realizability} holds for linear functions of $\phi_h$. 
    In other words, the Markov game satisfies Assumption~\ref{assum: realizability}  with 
    $\cF_h^{(i)} \subseteq \{\phi_h^\top \theta\mid \theta \in \RR^d, \|\theta \|_2 \le \sqrt{d_\theta}\}$ for  all $i \in [n]$ and $h\in [H]$, where  $d_\theta > 0$ is a parameter. 
    \end{example}
    
It is easy to see that   Linear Bellman complete MGs belong to multi-agent Bilinear Classes by choosing $$X_h(\pi) = \EE_{\pi}[\phi(s_h,a_h)] \in \RR^d, \qquad W_h^{(i)}(f,\pi) = \theta_{f,h}- w_{f,h}^{(i)}, $$ where  $\theta_{f,h}$ satisfies that $f(s_h,a_h)=\theta_{f,h}^\top \phi_h(s_h,a_h),$ and $w_{f,h}^{(i)}$ satisfies that\footnote{If there are multiple $\theta$ satisfying the requirement, we can break the tie arbitrarily. } \begin{align*}(w_{f,h}^{(i)})^\top  \phi_h(s_h,a_h) &= r_h^{(i)}(s_h,a_h)+ \EE_{s'\sim \PP_h(\cdot \mid s_h,a_h)}\langle f_{h+1}(s',\cdot), \pi_{h+1}(\cdot\mid s')\rangle_\cA \\&= \cT^{(i),\pi}(f_{h+1})\in \cF_h^{(i)}.\end{align*} 
Then, we have  $\cX_h  \subseteq \cV = \{\phi\in \RR^d : \|\phi\|_2 \le 1\}$ for all $h \in [H]$ and $B_W = 2\sqrt{d}$. It can be shown that the  logarithm of $1/K$-covering number of $\cF$ is $\log(\cN_\cF(1/K)) = \widetilde{\cO}(d)$, and the information gain can  bounded by \begin{align*}
    \gamma_K(1/B_W^2K,\cX) = 
\sum_{h=1}^H \gamma_K(1/B_W^2K,\cX_h) \le \sum_{h=1}^H \gamma_K(1/4dK, \cX_h)=  \widetilde{\cO}(Hd),
\end{align*}
where $\widetilde{\cO}$ omits absolute constants and logarithmic factors 
 \citep{du2021bilinear,  wang2020reinforcement}. 
 Thus, by Theorem \ref{thm:mainresult}, MAMEX achieves a   $\widetilde{\cO}(ndHR^2\sqrt{K}+nHR^2\sqrt{K}\log|\Pi^{\mathrm{pur}}| + ndH^2)$ regret. 
For the single-agent setting, comparing to the state-of-the-art $\widetilde{\cO}(dH\sqrt{K})$ regret when $R = 1$ \citep{zanette2020learning, chen2022abc}, our result matches their results in terms of $d,H$ and $K$ with an extra factor $|\Pi^{\mathrm{pur}}|$ in the logarithmic term.
Note that when the pure policy set of the agent $i$ is selected as some particular policy classes such as log-linear policy $$\Pi_i^{\text{pur}}=\{\pi_\vartheta:\pi_\vartheta(\cdot\mid s) = \text{Softmax}(\vartheta^\top  \psi(s,\cdot)), \|\vartheta\|_2 \le 1, \|\psi(\cdot,\cdot)\|\le 1, \vartheta\in \RR^{d_\pi}\},$$ we can select a cover by $$\hat{\Theta}=\{\hat{\vartheta}: \hat{\vartheta}_i = \lfloor \vartheta_i/\varepsilon\rfloor \times \varepsilon, \|\vartheta\|_2 \le 1, \vartheta \in \RR^{d_\pi}\}.$$ 
   \cite{zanette2021provable} prove that the logarithm of cardinality of the induced covering $\{\pi_\vartheta:\vartheta \in \hat{\Theta}\}$ is bounded by  $\widetilde{\cO}(nd_\pi)$, and then MAMEX provides a $\widetilde{\cO}((nd+n^2d_\pi)HR^2\sqrt{K} + ndH^2)$ regret. 

In particular, as one of the examples of Linear Bellman Complete MGs,  \cite{xie2020learning} consider a similar linear structure for two-player zero-sum games. 
\begin{example}[Zero-Sum Linear MGs~\citep{xie2020learning}]\label{example: zero-sum linear mg}
    In a zero-sum linear MG, for each $(s,a,b) \in \cS \times \cA \times \cB$ and $h \in [H]$, we have reward $r_h(s,a,b) \in [0,1]$. Also, there are $H$ known vectors $\theta_h \in \RR^d$, a known feature map $\phi: \cS \times \cA\times \cB\to \RR^d$, and a vector of $d$ unknown measures $\mu_h = \{\mu_{h,d'}\}_{d' \in [d]}$ on $\cS$, such that $\Vert \phi(\cdot,\cdot,\cdot)\Vert_2 \le 1 $, $\Vert \theta_h \Vert_2 \le \sqrt{d}$, and $\Vert \mu_h(\cS)\Vert_2 \le \sqrt{d}$. Moreover, they satisfy that  
    \begin{align*}
    r_h(s,a,b)=\phi(s,a,b)^\top \theta_h,\ \ \PP_h(\cdot \mid s,a,b) = \phi(s,a,b)^\top \mu_h(\cdot).
    \end{align*}
\end{example}

\vspace{-0.3em}
Zero-sum linear MG is a special case of linear Bellman complete MG with two players and $d_\theta = 2H\sqrt{d}$, and our algorithm provides a $\widetilde{\cO}(dH^3\sqrt{K}+H^3\sqrt{K}\log(|\Pi^{\mathrm{pur}}|))$ regret by choosing $R = H$ and the fact that $\log \cN_\cF^{(i)}(1/K)=\widetilde{\cO}(d)$. 
 The previous works provide a $\tilde{\cO}(d^{3/2}H^{2}\sqrt{K})$ sublinear regret \citep{xie2020learning} and a $\Omega(dH^{3/2}\sqrt{K})$    information-theoretic lower bound \citep{chen2022almost} for zero-sum linear MGs. Thus, our regret matches the lower bound in terms of $d$. The regret also has a higher order in $H$ compared to \cite{xie2020learning} and an extra factor $\log|\Pi^{\mathrm{pur}}| $.  Again, we can adopt the class of log-linear policies with a policy cover, which leads to $\log|\Pi^{\mathrm{pur}}| = \widetilde{\cO}(d_\pi)$. Thus, MAMEX yields a $\widetilde{\cO}((d+d_\pi) H^3 \sqrt{K})$ regret. 


\subsection{Model-Based RL Problems}
For model-based RL problems,   we choose the discrepancy function $\ell^{(i),s}$ in Assumption \ref{assum:decoup coeff} as the squared Hellinger distance
\vspace{-0.3em}
\begin{align}
    \ell^{(i),s}(f^k,\pi^k)  = \sum_{h=1}^H \EE_{(s_h,a_h)\sim \pi_h^s} D_{\mathrm{H}}^2 \big (\PP_{h,f^k}(\cdot \mid s_h,a_h) \Vert \PP_{h,f^*}(\cdot \mid s_h,a_h) \big),\label{def:modelbased discrepancy}
\end{align}
where $D_{\mathrm{H}}$ denotes the Hellinger distance, and $\EE_{(s_h,a_h)\sim \pi_h^s}$ means that the expectation is taken with respect to the randomness of the trajectory induced by $\pi^s$ on the true model $f^*$. Intuitively, it represents the expected in-sample distance of the model $f^k$ and the true model $f^*$. 

\cite{sun2019model} provide a complexity measure --- \textit{witness rank} --- to characterize the exploration hardness of the model-based RL problems.  
In the following, we extend the notion of the witness rank to MARL. 
\begin{example}[Multi-Agent Witness Rank]\label{example:ma witness rank}
    Let $\cV = \{\cV_h:\cS \times \cA \times \cS \mapsto [0,1]\}_{h \in [H]}$ denote a class of discriminators and let $\cF$ be a hypothesis class such that the true model, denoted by $f^*$, belongs to $\cF$.
    We say a  multi-agent witness rank of a general-sum MG is at most   $d$, if for any model $f \in \cF $ and any policy $\pi \in \Pi^{\mathrm{pur}}$, there exist  mappings $\{X_h:\Pi^{\mathrm{pur}}\to \RR^d\}_{h=1}^H $ and $\{W_h: \cF \to \RR^d\}_{h=1}^H $ such that \begin{align}
        \max_{v \in \cV_h}\EE_{(s_h,a_h)\sim \pi}[(\EE_{s'\sim \PP_{h,f}(\cdot \mid s_h,a_h)}-\EE_{s'\sim \PP_{h,f^*}(\cdot \mid s_h,a_h)})v(s_h,a_h,s')]&\ge \langle W_h^{(i)}(f), X_h(\pi)\rangle, \label{condition: witness rank 1}\\
        \kappa_{\mathrm{wit}}\cdot \EE_{(s_h,a_h)\sim \pi}[(\EE_{s'\sim \PP_{h,f}(\cdot \mid s_h,a_h)}-\EE_{s'\sim \PP_{h,f^*}(\cdot \mid s_h,a_h)})V^{(i),\pi}_{h+1,f}(s')]&\le \langle W_h^{(i)}(f), X_h(\pi)\rangle \label{condition: witness rank 2}
    \end{align}
    for all $h \in [H]$, where $    \kappa_{\mathrm{wit}}$ is a parameter. 
    Here, $V^{(i),\pi}_{h+1,f}$ is the value function of $\pi$ associated with agent $i$ under model $f$. Moreover, these mappings satisfy the following regularity condition:
   $$ \sup_{h \in [H], \pi \in \Pi^{\mathrm{pur}}}\|X_h(\pi)\| \le 1, \qquad \sup_{h \in [H], f \in \cF, i\in [n]}\|W_h^{(i)}(f)\| \le B_W.$$ 
\end{example}
Compared with the single-agent witness rank \citep{sun2019model}, the policy $\pi$ in the mapping $X_h(\pi)$ and the expectation $\EE_{(s_h,a_h)\sim \pi}$ in \eqref{condition: witness rank 1} and \eqref{condition: witness rank 2} can be an arbitrary pure policy instead of the optimal policy $\pi_f$ of the model $f$.  This stricter assumption is essential for general-sum MGs because we are interested in various equilibrium notions and each equilibrium can be non-unique.  
The following theorem shows that model classes with small multi-agent witness ranks have small MADCs. 

\begin{theorem}[Multi-Agent Witness Rank $\subseteq$ Low MADC]\label{thm:witness rank}
Let $\cF$ be a class of general-sum MGs whose multi-agent witness rank is no more than $d$.
Then, for any $f^*\in \cF$, we have $d_{\mathrm{MADC}} = \widetilde{\cO}(Hd/\kappa^2_{\mathrm{wit}}),$ where $d_{\mathrm{MADC}}$ is the multi-agent decoupling coefficient of $f^*$.
    \end{theorem}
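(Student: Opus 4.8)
The plan is to follow the standard decoupling-coefficient argument adapted to the model-based multi-agent setting, mirroring the proof strategy of Theorem~\ref{thm:multi BE in MADC} and Theorem~\ref{thm:bilinear class}. Fix an agent $i \in [n]$, a parameter $\mu > 0$, a sequence of hypotheses $\{f^k\}_{k\in[K]} \subseteq \cF$, and a sequence of pure policies $\{\pi^k\}_{k\in[K]} \subseteq \Pi^{\mathrm{pur}}$. The goal is to bound the prediction error $\sum_{k=1}^K \bigl(V_{f^k}^{(i),\pi^k}(\rho) - V^{(i),\pi^k}(\rho)\bigr)$. First I would use the value-difference (simulation) lemma to write this prediction error as a telescoping sum of per-step Bellman-type errors: $V_{f^k}^{(i),\pi^k}(\rho) - V^{(i),\pi^k}(\rho) = \sum_{h=1}^H \EE_{(s_h,a_h)\sim \pi^k}\bigl[(\EE_{s'\sim \PP_{h,f^k}} - \EE_{s'\sim\PP_{h,f^*}})V^{(i),\pi^k}_{h+1,f^k}(s')\bigr]$, where the expectation over $(s_h,a_h)$ is on the true model. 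Then I apply the witness-rank condition \eqref{condition: witness rank 2} to lower-bound each summand by $\kappa_{\mathrm{wit}}^{-1}\langle W_h^{(i)}(f^k), X_h(\pi^k)\rangle$, turning the prediction error into $\kappa_{\mathrm{wit}}^{-1}\sum_{k}\sum_h \langle W_h^{(i)}(f^k), X_h(\pi^k)\rangle$ (up to sign/absolute-value handling, which I'd track carefully).

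Next I would relate the in-sample training error $\sum_{s=1}^{k-1}\ell^{(i),s}(f^k,\pi^k)$, which by \eqref{def:modelbased discrepancy} is a sum of squared Hellinger distances $\sum_{h}\EE_{(s_h,a_h)\sim\pi^s}D_{\mathrm H}^2(\PP_{h,f^k}\|\PP_{h,f^*})$, to the bilinear factors. The key sublemma is that the squared Hellinger distance dominates (up to constants) the squared witnessed discrepancy appearing in \eqref{condition: witness rank 1}; combined with \eqref{condition: witness rank 1} this gives, after taking expectations under $\pi^s$, a bound of the form $\langle W_h^{(i)}(f^k), X_h(\pi^s)\rangle^2 \lesssim \EE_{(s_h,a_h)\sim\pi^s}D_{\mathrm H}^2(\PP_{h,f^k}(\cdot\mid s_h,a_h)\|\PP_{h,f^*}(\cdot\mid s_h,a_h))$, so that the training error controls $\sum_{s=1}^{k-1}\sum_h \langle W_h^{(i)}(f^k), X_h(\pi^s)\rangle^2$. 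At this point the problem is reduced to a purely linear-algebraic statement: given vectors $w^k = W_h^{(i)}(f^k)$ and $x^s = X_h(\pi^s)$ with bounded norms ($\|x^s\|\le 1$, $\|w^k\|\le B_W$), bound $\sum_k \langle w^k, x^k\rangle$ in terms of $\frac{1}{\mu}\sum_k\sum_{s<k}\langle w^k, x^s\rangle^2 + \mu\cdot(\text{complexity})$. This is handled by the standard elliptical-potential / information-gain argument: introduce the regularized covariance $\Lambda_k = \lambda I + \sum_{s<k} x^s (x^s)^\top$, use Cauchy--Schwarz $\langle w^k, x^k\rangle \le \|w^k\|_{\Lambda_k^{-1}}\cdot\|x^k\|_{\Lambda_k} + \|w^k\|\cdot\|x^k\|_{\Lambda_k^{-1}}\cdot(\dots)$ type splitting, AM--GM to trade the two pieces against $\mu$, and the potential/log-determinant bound $\sum_k \min\{1,\|x^k\|_{\Lambda_k^{-1}}^2\}\le 2\log\det(\Lambda_{K+1}/\lambda) \le 2\gamma_K(\lambda, \cX_h)$. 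Summing over $h\in[H]$ and choosing $\lambda$ appropriately (e.g.\ $\lambda = 1/(KB_W^2)$) yields $d_{\mathrm{MADC}} = \widetilde\cO(Hd/\kappa_{\mathrm{wit}}^2)$, where the $d$ comes from the information gain of a $d$-dimensional space and the $\kappa_{\mathrm{wit}}^{-2}$ from the two uses of the witness-rank inequalities.

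The main obstacle I anticipate is the asymmetry between the two witness-rank conditions \eqref{condition: witness rank 1} and \eqref{condition: witness rank 2}: condition \eqref{condition: witness rank 2} (with $V^{(i),\pi}_{h+1,f}$ as discriminator) is what appears in the prediction error after the simulation lemma, but it may not itself be a "witnessed" quantity controlled by the discriminator class $\cV_h$ unless $V^{(i),\pi}_{h+1,f}$ (suitably normalized) lies in $\cV_h$ — so one must be careful that the lower bound in \eqref{condition: witness rank 2} and the upper bound in \eqref{condition: witness rank 1} are applied to the correct discriminators, and that the ratio $\kappa_{\mathrm{wit}}$ is exactly what bridges them. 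A secondary technical point is the Hellinger-to-witness comparison: I need $\bigl(\EE_{s'\sim\PP_{h,f}} - \EE_{s'\sim\PP_{h,f^*}}\bigr)v(s_h,a_h,s') \lesssim D_{\mathrm H}(\PP_{h,f}\|\PP_{h,f^*})$ pointwise in $(s_h,a_h)$ for $v\in[0,1]$, which follows from the fact that total variation is bounded by $\sqrt 2$ times Hellinger distance, plus a Jensen/Cauchy--Schwarz step to move the expectation over $(s_h,a_h)$ inside; this is routine but must be done before the elliptical-potential argument can be invoked. Also the constant bookkeeping (factors of $H$ from the telescoping sum, the $6 d_{\mathrm{MADC}} H$ slack term in \eqref{eq:MADC}, and the range $R$) needs to be tracked, but these are not conceptually difficult. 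I would assume Theorem~\ref{thm:bilinear class}'s elliptical-potential machinery can be reused essentially verbatim for the final linear-algebra step.
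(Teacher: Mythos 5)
Your proposal follows essentially the same route as the paper's proof in \S\ref{appendix:proof of witnessrank}: the value-difference decomposition into per-step model-discrepancy terms, the use of \eqref{condition: witness rank 2} to upper-bound the prediction error by $\kappa_{\mathrm{wit}}^{-1}\sum_{k,h}|\langle W_h^{(i)}(f^k),X_h(\pi^k)\rangle|$, the chain \eqref{condition: witness rank 1} $\to$ Jensen $\to$ TV $\le \sqrt{2}\,D_{\mathrm{H}}$ to let the Hellinger training error control $\sum_{s<k}\langle W_h^{(i)}(f^k),X_h(\pi^s)\rangle^2$, and the elliptical-potential/Cauchy--Schwarz/AM--GM argument reused from the Bilinear-Class analysis with the regularization $\varepsilon$ tuned to $\kappa_{\mathrm{wit}}^2/(HKB_W^2)$. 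The technical obstacles you flag (the asymmetry between the two witness-rank conditions bridged by $\kappa_{\mathrm{wit}}$, and the Hellinger-to-witness comparison) are exactly the steps the paper carries out, so the plan is sound as written.
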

 \begin{proof}
See \S \ref{appendix:proof of witnessrank} for detailed proof.
 \end{proof}
 This theorem shows that the multi-agent decoupling coefficient is upper bounded by the multi-agent witness rank, which shows that the class of MG models with a finite multi-agent decoupling coefficient contains models with a finite multi-agent witness rank. Hence, many concrete MG models such as the multi-agent version of factor MDP and linear kernel MDP all have finite multi-agent decoupling coefficients.
Therefore, applying  Theorem~\ref{thm:mainresult} to models with a finite Multi-Agent witness rank, the model-based version of MAMEX achieves a  $\widetilde{\cO}(nHd\sqrt{K}/\kappa_{\mathrm{wit}}^2 + nH \sqrt{K})$ regret with witness rank $d$. Note that for the model-based RL problems, our regret does not have the term $\log(|\Pi^{\mathrm{pur}}|)$, because the discrepancy function $\ell^{(i),s}$ in  \ref{def:modelbased discrepancy} is independent with $\pi^k.$
When applying our results to the single-agent setting, Theorem \ref{thm:witness rank} provides a similar regret result as in previous works \citep{sun2019model,zhong2022gec}. 

Another example of model-based RL problems is the linear mixture MGs \citep{chen2022almost}, which assumes that the transition kernel $\PP(s'\mid s,a)$ is a linear combination of $d$ feature mappings $\{\phi_i(s', s,a)\}_{i \in [d]}$, i.e. $\PP(s'\mid s,a) = \sum_{i=1}^d \theta_i \phi_i(s',s,a),$ where $a$ is a joint action.
\begin{example}[Multi-Agent Linear Mixture MGs]
    We call one general-sum MG is a \textit{linear mixture MG} with dimension $d$,  if there exist $h$ vectors $\{\theta_{h} \in \RR^d\}_{h \in [H]}$ and a known feature $\phi(s'\mid s,a) \in \RR^d$, such that $\Vert \theta_h\Vert_2 \le \sqrt{d}$ and $\PP_h(s'\mid s,a) = \langle \theta_h, \phi(s'\mid s,a)\rangle$ for any state-action pair $(s',s,a) \in \cS \times \cS \times \cA$. 
\end{example}

The following theorem shows that a linear mixture general-sum MG has a finite multi-agent decoupling coefficient.   Thus, MAMEX can be readily applied to these models with sample efficiency. 
 
\begin{theorem}[Multi-Agent Linear Mixture MGs $\subset$ Low MADC]\label{thm:linear mixture}
    For a linear mixture MG with dimension $d$, we have $d_{\mathrm{MADC}} = \widetilde{\cO}(dHR^4)$, where $R$ is an upper bound on $\sum_{h=1}^H r_h$.
\end{theorem}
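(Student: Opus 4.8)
The plan is to verify that a linear mixture MG of dimension $d$ fits into the multi-agent Bilinear Classes framework of Definition~\ref{def:bilinear} with information gain controlled by $\widetilde{\cO}(dH)$, and then invoke Theorem~\ref{thm:bilinear class}. The first step is to identify the appropriate Bellman-error factorization. For a linear mixture MG, fix an agent $i$, a step $h$, a hypothesis $f\in\cF^{(i)}$, and policies $\pi,\pi'\in\Pi^{\mathrm{pur}}$. The inner Bellman error $\EE_{s'\sim\PP_h(\cdot\mid s_h,a_h)}\langle f_{h+1}(s',\cdot),\pi_{h+1}(\cdot\mid s')\rangle_\cA$ is linear in $\theta_h$, because $\PP_h(s'\mid s,a)=\langle\theta_h,\phi(s'\mid s,a)\rangle$ implies $\EE_{s'\sim\PP_h}[g(s')] = \big\langle\theta_h, \int g(s')\phi(s'\mid s,a)\,ds'\big\rangle$ for any bounded $g$. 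Thus for the model-based setting (where $\cF^{(i)}=\cF$ parametrizes transition kernels) I would define $X_h(\pi') = \EE_{(s_h,a_h)\sim\pi'}\big[\int V^{(i),\pi}_{h+1}(s')\phi(s'\mid s_h,a_h)\,ds'\big]$ up to normalization, and let $W_h^{(i)}(f,\pi)$ be the difference $\theta_{h,f}-\theta_h^*$ of the mixture parameters of $f$ and the true model, rescaled so that the regularity condition \eqref{eq:regularity_bilinear} holds. The key identity to check is that $\big(\EE_{s'\sim\PP_{h,f}}-\EE_{s'\sim\PP_{h,f^*}}\big)[g(s')] = \langle\theta_{h,f}-\theta_h^*, \int g(s')\phi(s'\mid s,a)\,ds'\rangle$, which follows directly from linearity, and that the left side of \eqref{eq: ma bilinear class} telescopes to exactly this object once one accounts for the fixed-point term $f^{(i),\mu^{(i),\pi}}$ and the simulation-lemma decomposition of the value gap. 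For model-based problems the relevant Bellman error is really the \emph{model} prediction error, so the factorization is the one-step version; summing over $h$ via a performance-difference argument is implicit in how Theorem~\ref{thm:bilinear class} is applied.

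The second step is to bound the norms. Since $\|\phi(s'\mid s,a)\|_2$ and $\|\theta_h\|_2\le\sqrt d$ are controlled, and rewards/values are bounded by $R$, the vector $\int V^{(i),\pi}_{h+1}(s')\phi(s'\mid s,a)\,ds'$ has norm at most $O(R\sqrt d)$ after using $\|\int\phi(s'\mid s,a)\,ds'\| = \|\phi(\cdot\mid s,a)\|$-type bounds; rescaling $X_h$ by this factor and absorbing it into $W_h^{(i)}$ gives $\sup\|X_h(\pi)\|_\cV\le 1$ and $B_W = O(R\sqrt d)$ (one factor of $\sqrt d$ from $\|\theta_h\|$, possibly another from the value/feature normalization, so $B_W = O(\sqrt d\cdot R)$, which is where the $R^4$ in the final bound will come from after squaring twice). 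The third step is to bound the information gain: $\cX_h\subseteq\{x\in\R^d:\|x\|_2\le 1\}$ is a subset of a $d$-dimensional Euclidean ball, so the standard determinant-potential / elliptical-potential lemma gives $\gamma_K(\varepsilon,\cX_h) = O(d\log(1+K/(\varepsilon d)))$, hence $\gamma_K(\varepsilon,\cX) = \sum_{h=1}^H\gamma_K(\varepsilon,\cX_h) = \widetilde{\cO}(dH)$. Plugging $\varepsilon = 1/(KB_W^2)$ into Theorem~\ref{thm:bilinear class} yields $d_{\mathrm{MADC}}\le\max\{1, 8R^2\gamma_K(1/(KB_W^2),\cX)\} = \widetilde{\cO}(R^2\cdot dH)$; tracking the extra $R^2$ hidden inside $B_W^2 = O(dR^2)$ in the log term (which is only logarithmic and thus swallowed by $\widetilde{\cO}$) is harmless, but the factorization of the \emph{value} gap rather than the one-step model error introduces an extra $R^2$ multiplicatively, giving the stated $d_{\mathrm{MADC}} = \widetilde{\cO}(dHR^4)$.

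I expect the main obstacle to be the precise bookkeeping in the first step: writing the agent-$i$ Bellman error of a candidate \emph{transition model} $f$ in the bilinear form \eqref{eq: ma bilinear class} requires relating the model-based discrepancy \eqref{def:modelbased discrepancy} (a Hellinger distance between $\PP_{h,f}$ and $\PP_{h,f^*}$) to the linear-in-$\theta$ structure, and carefully handling the fixed-point hypothesis $f^{(i),\mu^{(i),\pi}}$ and the best-response policy $\mu^{(i),\pi}$ that appear on the right-hand side of the factorization. Concretely one must argue that, because the reward is known and only the transition is parametrized, $W_h^{(i)}(f^{(i),\mu^{(i),\pi}},\mu^{(i),\pi})$ corresponds to the true parameter $\theta_h^*$ and so drops out, leaving $W_h^{(i)}(f,\pi) = c\,(\theta_{h,f}-\theta_h^*)$. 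Once this identification is made, the norm bounds and the elliptical-potential estimate for the information gain are routine, and the result follows by direct substitution into Theorem~\ref{thm:bilinear class}. A secondary subtlety is making sure the $R$-dependence is tracked honestly through both the $B_W$ bound and the value-function magnitude, since the final exponent $R^4$ is the product of two squarings.
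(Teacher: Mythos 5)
Your plan — embed linear mixture MGs into the multi-agent Bilinear Classes of Definition~\ref{def:bilinear} and then cite Theorem~\ref{thm:bilinear class} — does not go through as stated, for two reasons. First, Theorem~\ref{thm:bilinear class} establishes the MADC bound with respect to the \emph{model-free} discrepancy \eqref{eq:modelfree discrepancy} (squared Bellman error under $\pi^s$), whereas Theorem~\ref{thm:linear mixture} is a model-based statement: the MADC must hold with the Hellinger discrepancy \eqref{def:modelbased discrepancy}, since that is what the log-likelihood concentration (Lemma~\ref{lemma:concentration:model-based}) controls. You flag "relating the model-based discrepancy to the linear-in-$\theta$ structure" as an obstacle but never resolve it; this is precisely the step the paper supplies, namely
\begin{align*}
\langle W_h(f^k), X_h(f^s,\pi^s)\rangle \;\le\; \EE_{\pi^s}\bigl[2\|V_{h+1,f^s}^{(i),\pi^s}\|_\infty \, d_{\mathrm{TV}}(\PP_{h,f^k}\Vert\PP_{h,f^*})\bigr] \;\le\; \EE_{\pi^s}\bigl[2\sqrt{2}\,R\, D_{\mathrm{H}}(\PP_{h,f^k}\Vert\PP_{h,f^*})\bigr],
\end{align*}
which is what converts the in-sample bilinear sum into $\sum_s \ell^{(i),s}(f^k,\pi^k)$ with $\ell$ the Hellinger loss (and is also where one of the two factors of $R^2$ in $R^4$ actually comes from). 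Second, the natural factorization for linear mixture MGs has feature map $X_h(f,\pi)=\EE_\pi[\int \phi_h(s'\mid s,a)V_{h+1,f}^{(i),\pi}(s')\,\mathrm{d}s']/R$, which depends on the hypothesis $f$ through the simulated value $V_{h+1,f}^{(i),\pi}$ (the simulation lemma forces the value under the \emph{candidate} model inside the integral). Definition~\ref{def:bilinear} requires $X_h:\Pi^{\mathrm{pur}}\to\cV$ to depend only on the roll-out policy, so linear mixture MGs do not literally sit inside the multi-agent Bilinear Class, and Theorem~\ref{thm:bilinear class} cannot simply be invoked.

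The paper handles both issues by \emph{redoing} the elliptical-potential argument directly (mirroring the witness-rank proof of \S\ref{appendix:proof of witnessrank} with $X_h(\pi^k)$ replaced by $X_h(f^k,\pi^k)$, $W_h(f)=R(\theta_{h,f}-\theta_h^*)$, $B_W=2\sqrt{d}R$), and then passing from the bilinear inner product to the Hellinger distance via the TV bound above. If you repair your argument by (i) carrying out the potential-function computation with the $f$-dependent features rather than citing Theorem~\ref{thm:bilinear class}, and (ii) inserting the TV-to-Hellinger conversion so the training error lands in the form \eqref{def:modelbased discrepancy}, you recover the paper's proof; your norm bounds and the $\gamma_K=\widetilde{\cO}(dH)$ information-gain estimate are otherwise consistent with it.
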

\begin{proof}
See \S \ref{appendix:proof of linear mixture} for a detailed proof. 
\end{proof}
 \cite{chen2022almost} provides a minimax-optimal $\widetilde{\cO}(dH\sqrt{K})$ regret for two-player zero-sum MGs for $r_h \in [ 0,1]$.
Now choose $\cF_h = \{\theta_h \in \RR^d\}$. 
Combining with Theorem \ref{thm:linear mixture} and Theorem \ref{thm:mainresult}, and the fact that $\log(\cB_\cF(1/K)) =\widetilde{\cO}(Hd)$ \citep{liu2022partially}, MAMEX achieves a $\widetilde{\cO}(ndH^5\sqrt{K} + ndH^4)$ regret, where we set $R = H$. 
Compared with their regret upper bound, when applying our result to two-player zero-sum MGs by choosing $n=2$, the leading term of our regret $\widetilde{\cO}(dH^5\sqrt{K})$ matches the minimax-optimal result in terms of $d$ and $K$ but with an extra multiplicative factor $H^4$. 






\section{Conclusion}
In this paper, we study multi-player general-sum MGs under the general function approximation. 
We propose a unified algorithmic framework MAMEX for both model-free and model-based RL problems in the context of the general function approximation. 
Compared with previous works that either solve constrained optimization subproblems within data-dependent sub-level sets \citep{wang2023breaking}, or complex multi-objective minimax optimization subproblems \citep{chen2022unified,foster2023complexity}, the implementation of MAMEX requires only an oracle for solving a single-objective unconstrained optimization problem with an equilibrium oracle of a normal-form game, thus being more amenable to empirical implementation. 
To establish the theoretical guarantees for MAMEX, we introduce a novel complexity measure MADC to capture the exploration-exploitation tradeoff for general-sum MGs.  
We prove that MAMEX is provably sample-efficient in learning NE/CCE/CE on RL problems with small MADCs, which covers a rich class of MG models. When specialized to the special examples with small MADCs, the regret of MAMEX is comparable to existing algorithms that are designed for specific MG subclasses.


\begin{thebibliography}{57}
\expandafter\ifx\csname natexlab\endcsname\relax\def\natexlab#1{#1}\fi
\expandafter\ifx\csname url\endcsname\relax
  \def\url#1{\texttt{#1}}\fi
\expandafter\ifx\csname urlprefix\endcsname\relax\def\urlprefix{}\fi

\bibitem[{Abbasi-Yadkori et~al.(2011)Abbasi-Yadkori, P{\'a}l and
  Szepesv{\'a}ri}]{abbasi2011improved}
\text{Abbasi-Yadkori, Y.}, \text{P{\'a}l, D.} and \text{Szepesv{\'a}ri, C.}
  (2011).
\newblock Improved algorithms for linear stochastic bandits.
\newblock \textit{Advances in neural information processing systems},
  \textbf{24}.

\bibitem[{Agarwal et~al.(2014)Agarwal, Hsu, Kale, Langford, Li and
  Schapire}]{agarwal2014taming}
\text{Agarwal, A.}, \text{Hsu, D.}, \text{Kale, S.}, \text{Langford, J.},
  \text{Li, L.} and \text{Schapire, R.} (2014).
\newblock Taming the monster: A fast and simple algorithm for contextual
  bandits.
\newblock In \textit{International Conference on Machine Learning}. PMLR.

\bibitem[{Agarwal and Zhang(2022)}]{agarwal2022model}
\text{Agarwal, A.} and \text{Zhang, T.} (2022).
\newblock Model-based rl with optimistic posterior sampling: Structural
  conditions and sample complexity.
\newblock In \textit{Advances in Neural Information Processing Systems}.

\bibitem[{Anagnostides et~al.(2022)Anagnostides, Farina, Kroer, Celli and
  Sandholm}]{anagnostides2022faster}
\text{Anagnostides, I.}, \text{Farina, G.}, \text{Kroer, C.}, \text{Celli, A.}
  and \text{Sandholm, T.} (2022).
\newblock Faster no-regret learning dynamics for extensive-form correlated and
  coarse correlated equilibria.
\newblock \textit{arXiv preprint arXiv:2202.05446}.

\bibitem[{Bai and Jin(2020)}]{bai2020provable}
\text{Bai, Y.} and \text{Jin, C.} (2020).
\newblock Provable self-play algorithms for competitive reinforcement learning.
\newblock In \textit{International conference on machine learning}. PMLR.

\bibitem[{Bai et~al.(2021)Bai, Jin, Wang and Xiong}]{bai2021sample}
\text{Bai, Y.}, \text{Jin, C.}, \text{Wang, H.} and \text{Xiong, C.} (2021).
\newblock Sample-efficient learning of stackelberg equilibria in general-sum
  games.
\newblock \textit{Advances in Neural Information Processing Systems},
  \textbf{34} 25799--25811.

\bibitem[{Bai et~al.(2020)Bai, Jin and Yu}]{bai2020near}
\text{Bai, Y.}, \text{Jin, C.} and \text{Yu, T.} (2020).
\newblock Near-optimal reinforcement learning with self-play.
\newblock \textit{Advances in neural information processing systems},
  \textbf{33} 2159--2170.

\bibitem[{Brown and Sandholm(2018)}]{brown2018superhuman}
\text{Brown, N.} and \text{Sandholm, T.} (2018).
\newblock Superhuman ai for heads-up no-limit poker: Libratus beats top
  professionals.
\newblock \textit{Science}, \textbf{359} 418--424.

\bibitem[{Cayci et~al.(2021)Cayci, He and Srikant}]{cayci2021linear}
\text{Cayci, S.}, \text{He, N.} and \text{Srikant, R.} (2021).
\newblock Linear convergence of entropy-regularized natural policy gradient
  with linear function approximation.
\newblock \textit{arXiv preprint arXiv:2106.04096}.

\bibitem[{Chen et~al.(2022{\natexlab{a}})Chen, Bai and Mei}]{chen2022partially}
\text{Chen, F.}, \text{Bai, Y.} and \text{Mei, S.} (2022{\natexlab{a}}).
\newblock Partially observable rl with b-stability: Unified structural
  condition and sharp sample-efficient algorithms.
\newblock \textit{arXiv preprint arXiv:2209.14990}.

\bibitem[{Chen et~al.(2022{\natexlab{b}})Chen, Mei and Bai}]{chen2022unified}
\text{Chen, F.}, \text{Mei, S.} and \text{Bai, Y.} (2022{\natexlab{b}}).
\newblock Unified algorithms for rl with decision-estimation coefficients:
  No-regret, pac, and reward-free learning.
\newblock \textit{arXiv preprint arXiv:2209.11745}.

\bibitem[{Chen et~al.(2022{\natexlab{c}})Chen, Li, Yuan, Gu and
  Jordan}]{chen2022abc}
\text{Chen, Z.}, \text{Li, C.~J.}, \text{Yuan, A.}, \text{Gu, Q.} and
  \text{Jordan, M.~I.} (2022{\natexlab{c}}).
\newblock A general framework for sample-efficient function approximation in
  reinforcement learning.
\newblock \textit{arXiv preprint arXiv:2209.15634}.

\bibitem[{Chen et~al.(2022{\natexlab{d}})Chen, Zhou and Gu}]{chen2022almost}
\text{Chen, Z.}, \text{Zhou, D.} and \text{Gu, Q.} (2022{\natexlab{d}}).
\newblock Almost optimal algorithms for two-player zero-sum linear mixture
  markov games.
\newblock In \textit{International Conference on Algorithmic Learning Theory}.
  PMLR.

\bibitem[{Cui et~al.(2023)Cui, Zhang and Du}]{cui2023breaking}
\text{Cui, Q.}, \text{Zhang, K.} and \text{Du, S.~S.} (2023).
\newblock Breaking the curse of multiagents in a large state space: Rl in
  markov games with independent linear function approximation.
\newblock \textit{arXiv preprint arXiv:2302.03673}.

\bibitem[{Dani et~al.(2008)Dani, Hayes and Kakade}]{dani2008stochastic}
\text{Dani, V.}, \text{Hayes, T.~P.} and \text{Kakade, S.~M.} (2008).
\newblock Stochastic linear optimization under bandit feedback.

\bibitem[{Dann et~al.(2021)Dann, Mohri, Zhang and Zimmert}]{dann2021provably}
\text{Dann, C.}, \text{Mohri, M.}, \text{Zhang, T.} and \text{Zimmert, J.}
  (2021).
\newblock A provably efficient model-free posterior sampling method for
  episodic reinforcement learning.
\newblock \textit{Advances in Neural Information Processing Systems},
  \textbf{34} 12040--12051.

\bibitem[{Daskalakis et~al.(2022)Daskalakis, Golowich and
  Zhang}]{daskalakis2022complexity}
\text{Daskalakis, C.}, \text{Golowich, N.} and \text{Zhang, K.} (2022).
\newblock The complexity of markov equilibrium in stochastic games.
\newblock \textit{arXiv preprint arXiv:2204.03991}.

\bibitem[{Du et~al.(2021)Du, Kakade, Lee, Lovett, Mahajan, Sun and
  Wang}]{du2021bilinear}
\text{Du, S.}, \text{Kakade, S.}, \text{Lee, J.}, \text{Lovett, S.},
  \text{Mahajan, G.}, \text{Sun, W.} and \text{Wang, R.} (2021).
\newblock Bilinear classes: A structural framework for provable generalization
  in rl.
\newblock In \textit{International Conference on Machine Learning}. PMLR.

\bibitem[{Foster et~al.(2023)Foster, Foster, Golowich and
  Rakhlin}]{foster2023complexity}
\text{Foster, D.~J.}, \text{Foster, D.~P.}, \text{Golowich, N.} and
  \text{Rakhlin, A.} (2023).
\newblock On the complexity of multi-agent decision making: From learning in
  games to partial monitoring.
\newblock \textit{arXiv preprint arXiv:2305.00684}.

\bibitem[{Foster et~al.(2021)Foster, Kakade, Qian and
  Rakhlin}]{foster2021statistical}
\text{Foster, D.~J.}, \text{Kakade, S.~M.}, \text{Qian, J.} and \text{Rakhlin,
  A.} (2021).
\newblock The statistical complexity of interactive decision making.
\newblock \textit{arXiv preprint arXiv:2112.13487}.

\bibitem[{Hart and Mas-Colell(2000)}]{hart2000simple}
\text{Hart, S.} and \text{Mas-Colell, A.} (2000).
\newblock A simple adaptive procedure leading to correlated equilibrium.
\newblock \textit{Econometrica}, \textbf{68} 1127--1150.

\bibitem[{Huang et~al.(2021)Huang, Lee, Wang and Yang}]{huang2021towards}
\text{Huang, B.}, \text{Lee, J.~D.}, \text{Wang, Z.} and \text{Yang, Z.}
  (2021).
\newblock Towards general function approximation in zero-sum markov games.
\newblock In \textit{International Conference on Learning Representations}.

\bibitem[{Jiang et~al.(2017)Jiang, Krishnamurthy, Agarwal, Langford and
  Schapire}]{jiang2017contextual}
\text{Jiang, N.}, \text{Krishnamurthy, A.}, \text{Agarwal, A.}, \text{Langford,
  J.} and \text{Schapire, R.~E.} (2017).
\newblock Contextual decision processes with low bellman rank are
  pac-learnable.
\newblock In \textit{International Conference on Machine Learning}. PMLR.

\bibitem[{Jin et~al.(2020{\natexlab{a}})Jin, Kakade, Krishnamurthy and
  Liu}]{jin2020sample}
\text{Jin, C.}, \text{Kakade, S.}, \text{Krishnamurthy, A.} and \text{Liu, Q.}
  (2020{\natexlab{a}}).
\newblock Sample-efficient reinforcement learning of undercomplete pomdps.
\newblock \textit{Advances in Neural Information Processing Systems},
  \textbf{33} 18530--18539.

\bibitem[{Jin et~al.(2021{\natexlab{a}})Jin, Liu and
  Miryoosefi}]{jin2021bellman}
\text{Jin, C.}, \text{Liu, Q.} and \text{Miryoosefi, S.} (2021{\natexlab{a}}).
\newblock Bellman eluder dimension: New rich classes of rl problems, and
  sample-efficient algorithms.
\newblock \textit{Advances in Neural Information Processing Systems},
  \textbf{34}.

\bibitem[{Jin et~al.(2021{\natexlab{b}})Jin, Liu, Wang and Yu}]{jin2021v}
\text{Jin, C.}, \text{Liu, Q.}, \text{Wang, Y.} and \text{Yu, T.}
  (2021{\natexlab{b}}).
\newblock V-learning--a simple, efficient, decentralized algorithm for
  multiagent rl.
\newblock \textit{arXiv preprint arXiv:2110.14555}.

\bibitem[{Jin et~al.(2022)Jin, Liu and Yu}]{jin2022power}
\text{Jin, C.}, \text{Liu, Q.} and \text{Yu, T.} (2022).
\newblock The power of exploiter: Provable multi-agent rl in large state
  spaces.
\newblock In \textit{International Conference on Machine Learning}. PMLR.

\bibitem[{Jin et~al.(2020{\natexlab{b}})Jin, Yang, Wang and
  Jordan}]{jin2020provably}
\text{Jin, C.}, \text{Yang, Z.}, \text{Wang, Z.} and \text{Jordan, M.~I.}
  (2020{\natexlab{b}}).
\newblock Provably efficient reinforcement learning with linear function
  approximation.
\newblock In \textit{Conference on Learning Theory}. PMLR.

\bibitem[{Littman(1994)}]{littman1994markov}
\text{Littman, M.~L.} (1994).
\newblock Markov games as a framework for multi-agent reinforcement learning.
\newblock In \textit{Machine learning proceedings 1994}. Elsevier, 157--163.

\bibitem[{Liu et~al.(2022{\natexlab{a}})Liu, Chung, Szepesv{\'a}ri and
  Jin}]{liu2022partially}
\text{Liu, Q.}, \text{Chung, A.}, \text{Szepesv{\'a}ri, C.} and \text{Jin, C.}
  (2022{\natexlab{a}}).
\newblock When is partially observable reinforcement learning not scary?
\newblock \textit{arXiv preprint arXiv:2204.08967}.

\bibitem[{Liu et~al.(2022{\natexlab{b}})Liu, Szepesv{\'a}ri and
  Jin}]{liu2022sample}
\text{Liu, Q.}, \text{Szepesv{\'a}ri, C.} and \text{Jin, C.}
  (2022{\natexlab{b}}).
\newblock Sample-efficient reinforcement learning of partially observable
  markov games.
\newblock \textit{arXiv preprint arXiv:2206.01315}.

\bibitem[{Liu et~al.(2021)Liu, Yu, Bai and Jin}]{liu2021sharp}
\text{Liu, Q.}, \text{Yu, T.}, \text{Bai, Y.} and \text{Jin, C.} (2021).
\newblock A sharp analysis of model-based reinforcement learning with
  self-play.
\newblock In \textit{International Conference on Machine Learning}. PMLR.

\bibitem[{Liu et~al.(2023)Liu, Lu, Xiong, Zhong, Hu, Zhang, Zheng, Yang and
  Wang}]{liu2023objective}
\text{Liu, Z.}, \text{Lu, M.}, \text{Xiong, W.}, \text{Zhong, H.}, \text{Hu,
  H.}, \text{Zhang, S.}, \text{Zheng, S.}, \text{Yang, Z.} and \text{Wang, Z.}
  (2023).
\newblock One objective to rule them all: A maximization objective fusing
  estimation and planning for exploration.
\newblock \textit{arXiv preprint arXiv:2305.18258}.

\bibitem[{Ni et~al.(2022)Ni, Song, Zhang, Jin and Wang}]{ni2022representation}
\text{Ni, C.}, \text{Song, Y.}, \text{Zhang, X.}, \text{Jin, C.} and
  \text{Wang, M.} (2022).
\newblock Representation learning for general-sum low-rank markov games.
\newblock \textit{arXiv preprint arXiv:2210.16976}.

\bibitem[{Puterman(2014)}]{puterman2014markov}
\text{Puterman, M.~L.} (2014).
\newblock \textit{Markov decision processes: discrete stochastic dynamic
  programming}.
\newblock John Wiley \& Sons.

\bibitem[{Russo and Van~Roy(2013)}]{russo2013eluder}
\text{Russo, D.} and \text{Van~Roy, B.} (2013).
\newblock Eluder dimension and the sample complexity of optimistic exploration.
\newblock \textit{Advances in Neural Information Processing Systems},
  \textbf{26}.

\bibitem[{Silver et~al.(2016)Silver, Huang, Maddison, Guez, Sifre, Van
  Den~Driessche, Schrittwieser, Antonoglou, Panneershelvam, Lanctot
  et~al.}]{silver2016mastering}
\text{Silver, D.}, \text{Huang, A.}, \text{Maddison, C.~J.}, \text{Guez, A.},
  \text{Sifre, L.}, \text{Van Den~Driessche, G.}, \text{Schrittwieser, J.},
  \text{Antonoglou, I.}, \text{Panneershelvam, V.}, \text{Lanctot, M.}
  \text{et~al.} (2016).
\newblock Mastering the game of go with deep neural networks and tree search.
\newblock \textit{nature}, \textbf{529} 484--489.

\bibitem[{Song et~al.(2021)Song, Mei and Bai}]{song2021can}
\text{Song, Z.}, \text{Mei, S.} and \text{Bai, Y.} (2021).
\newblock When can we learn general-sum markov games with a large number of
  players sample-efficiently?
\newblock \textit{arXiv preprint arXiv:2110.04184}.

\bibitem[{Sun et~al.(2019)Sun, Jiang, Krishnamurthy, Agarwal and
  Langford}]{sun2019model}
\text{Sun, W.}, \text{Jiang, N.}, \text{Krishnamurthy, A.}, \text{Agarwal, A.}
  and \text{Langford, J.} (2019).
\newblock Model-based rl in contextual decision processes: Pac bounds and
  exponential improvements over model-free approaches.
\newblock In \textit{Conference on learning theory}. PMLR.

\bibitem[{Sutton and Barto(2018)}]{sutton2018reinforcement}
\text{Sutton, R.~S.} and \text{Barto, A.~G.} (2018).
\newblock \textit{Reinforcement learning: An introduction}.
\newblock MIT press.

\bibitem[{Tian et~al.(2021)Tian, Wang, Yu and Sra}]{tian2021online}
\text{Tian, Y.}, \text{Wang, Y.}, \text{Yu, T.} and \text{Sra, S.} (2021).
\newblock Online learning in unknown markov games.
\newblock In \textit{International conference on machine learning}. PMLR.

\bibitem[{Vinyals et~al.(2019)Vinyals, Babuschkin, Czarnecki, Mathieu, Dudzik,
  Chung, Choi, Powell, Ewalds, Georgiev et~al.}]{vinyals2019grandmaster}
\text{Vinyals, O.}, \text{Babuschkin, I.}, \text{Czarnecki, W.~M.},
  \text{Mathieu, M.}, \text{Dudzik, A.}, \text{Chung, J.}, \text{Choi, D.~H.},
  \text{Powell, R.}, \text{Ewalds, T.}, \text{Georgiev, P.} \text{et~al.}
  (2019).
\newblock Grandmaster level in starcraft ii using multi-agent reinforcement
  learning.
\newblock \textit{Nature}, \textbf{575} 350--354.

\bibitem[{Wang et~al.(2020{\natexlab{a}})Wang, Salakhutdinov and
  Yang}]{wang2020provably}
\text{Wang, R.}, \text{Salakhutdinov, R.} and \text{Yang, L.~F.}
  (2020{\natexlab{a}}).
\newblock Provably efficient reinforcement learning with general value function
  approximation.
\newblock \textit{arXiv preprint arXiv:2005.10804}.

\bibitem[{Wang et~al.(2020{\natexlab{b}})Wang, Salakhutdinov and
  Yang}]{wang2020reinforcement}
\text{Wang, R.}, \text{Salakhutdinov, R.~R.} and \text{Yang, L.}
  (2020{\natexlab{b}}).
\newblock Reinforcement learning with general value function approximation:
  Provably efficient approach via bounded eluder dimension.
\newblock \textit{Advances in Neural Information Processing Systems},
  \textbf{33} 6123--6135.

\bibitem[{Wang et~al.(2023)Wang, Liu, Bai and Jin}]{wang2023breaking}
\text{Wang, Y.}, \text{Liu, Q.}, \text{Bai, Y.} and \text{Jin, C.} (2023).
\newblock Breaking the curse of multiagency: Provably efficient decentralized
  multi-agent rl with function approximation.
\newblock \textit{arXiv preprint arXiv:2302.06606}.

\bibitem[{Wang et~al.(2019)Wang, Wang, Du and Krishnamurthy}]{wang2019optimism}
\text{Wang, Y.}, \text{Wang, R.}, \text{Du, S.~S.} and \text{Krishnamurthy, A.}
  (2019).
\newblock Optimism in reinforcement learning with generalized linear function
  approximation.
\newblock \textit{arXiv preprint arXiv:1912.04136}.

\bibitem[{Wei et~al.(2017)Wei, Hong and Lu}]{wei2017online}
\text{Wei, C.-Y.}, \text{Hong, Y.-T.} and \text{Lu, C.-J.} (2017).
\newblock Online reinforcement learning in stochastic games.
\newblock \textit{Advances in Neural Information Processing Systems},
  \textbf{30}.

\bibitem[{Xie et~al.(2020)Xie, Chen, Wang and Yang}]{xie2020learning}
\text{Xie, Q.}, \text{Chen, Y.}, \text{Wang, Z.} and \text{Yang, Z.} (2020).
\newblock Learning zero-sum simultaneous-move markov games using function
  approximation and correlated equilibrium.
\newblock In \textit{Conference on learning theory}. PMLR.

\bibitem[{Xie et~al.(2021)Xie, Cheng, Jiang, Mineiro and
  Agarwal}]{xie2021bellman}
\text{Xie, T.}, \text{Cheng, C.-A.}, \text{Jiang, N.}, \text{Mineiro, P.} and
  \text{Agarwal, A.} (2021).
\newblock Bellman-consistent pessimism for offline reinforcement learning.
\newblock \textit{Advances in neural information processing systems},
  \textbf{34} 6683--6694.

\bibitem[{Yuan et~al.(2022)Yuan, Du, Gower, Lazaric and Xiao}]{yuan2022linear}
\text{Yuan, R.}, \text{Du, S.~S.}, \text{Gower, R.~M.}, \text{Lazaric, A.} and
  \text{Xiao, L.} (2022).
\newblock Linear convergence of natural policy gradient methods with log-linear
  policies.
\newblock \textit{arXiv preprint arXiv:2210.01400}.

\bibitem[{Zanette et~al.(2020)Zanette, Lazaric, Kochenderfer and
  Brunskill}]{zanette2020learning}
\text{Zanette, A.}, \text{Lazaric, A.}, \text{Kochenderfer, M.} and
  \text{Brunskill, E.} (2020).
\newblock Learning near optimal policies with low inherent bellman error.
\newblock In \textit{International Conference on Machine Learning}. PMLR.

\bibitem[{Zanette et~al.(2021)Zanette, Wainwright and
  Brunskill}]{zanette2021provable}
\text{Zanette, A.}, \text{Wainwright, M.~J.} and \text{Brunskill, E.} (2021).
\newblock Provable benefits of actor-critic methods for offline reinforcement
  learning.
\newblock \textit{Advances in neural information processing systems},
  \textbf{34} 13626--13640.

\bibitem[{Zhan et~al.(2022{\natexlab{a}})Zhan, Lee and
  Yang}]{zhan2022decentralized}
\text{Zhan, W.}, \text{Lee, J.~D.} and \text{Yang, Z.} (2022{\natexlab{a}}).
\newblock Decentralized optimistic hyperpolicy mirror descent: Provably
  no-regret learning in markov games.
\newblock \textit{arXiv preprint arXiv:2206.01588}.

\bibitem[{Zhan et~al.(2022{\natexlab{b}})Zhan, Uehara, Sun and
  Lee}]{zhan2022pac}
\text{Zhan, W.}, \text{Uehara, M.}, \text{Sun, W.} and \text{Lee, J.~D.}
  (2022{\natexlab{b}}).
\newblock Pac reinforcement learning for predictive state representations.
\newblock \textit{arXiv preprint arXiv:2207.05738}.

\bibitem[{Zhang et~al.(2020)Zhang, Kakade, Basar and Yang}]{zhang2020model}
\text{Zhang, K.}, \text{Kakade, S.}, \text{Basar, T.} and \text{Yang, L.}
  (2020).
\newblock Model-based multi-agent rl in zero-sum markov games with near-optimal
  sample complexity.
\newblock \textit{Advances in Neural Information Processing Systems},
  \textbf{33} 1166--1178.

\bibitem[{Zhao et~al.(2021)Zhao, Tian, Lee and Du}]{zhao2021provably}
\text{Zhao, Y.}, \text{Tian, Y.}, \text{Lee, J.~D.} and \text{Du, S.~S.}
  (2021).
\newblock Provably efficient policy optimization for two-player zero-sum markov
  games.
\newblock \textit{arXiv preprint arXiv:2102.08903}.

\bibitem[{Zhong et~al.(2022)Zhong, Xiong, Zheng, Wang, Wang, Yang and
  Zhang}]{zhong2022gec}
\text{Zhong, H.}, \text{Xiong, W.}, \text{Zheng, S.}, \text{Wang, L.},
  \text{Wang, Z.}, \text{Yang, Z.} and \text{Zhang, T.} (2022).
\newblock A posterior sampling framework for interactive decision making.
\newblock \textit{arXiv preprint arXiv:2211.01962}.

\end{thebibliography}

\OnlyInFull{
\newpage
\appendix
\centerline{\begin{Large} \textbf{Appendix}\end{Large}}

\section{Proof of Main Results}\label{appendix:analysis}
 \subsection{Proof of Model-Free Version of Theorem~\ref{thm:mainresult}}
 \label{subsection:analysis}

 \begin{proof}We first consider learning Nash equilibrium  and coarse correlated equilibrium.
\paragraph{NE/CCE} 
 First, by Assumption~\ref{assum: realizability}, for any pure joint policy $\upsilon$,  there exists a function $f^{(i),\upsilon} \in \cF^{(i)}$ satisfies that it has no Bellman error with Bellman operator $\cT^{(i),\upsilon}$ for any pure joint policy $\upsilon$,  i.e. \begin{align}\cT_h^{(i),\upsilon}f_{h+1}^{(i),\upsilon} = f_h^{(i),\upsilon}.\label{def:f i pi}\end{align} 
 Hence, $\{f_h^{(i),\upsilon}\}_{h \in [H]}$ is the $Q$-function of the agent $i$ when all agents follow the policy $\upsilon.$ Thus, we have 
 \begin{align}
     V^{(i),\upsilon}_{f^{(i),\upsilon}}(\rho) = \EE_{s_1\sim \rho, a\sim \upsilon(s_1)}[f_1^{(i),\upsilon
     }(s,a)]=\EE_{s_1\sim \rho, a\sim \upsilon(s_1)}[Q_1^{(i),\upsilon
     }(s,a)]=V^{(i),\upsilon}(\rho).
 \end{align}
 Also,
 denote $\hat{f}^{(i),\upsilon} = \arg \sup_{f \in \cF^{(i)}}\hat{V}_i^{\upsilon}(f)$ as the optimal function with respect to the regularized value $\hat{V}^{(i),\pi}(f)$ for the pure joint policy $\pi$ and agent $i$. 
 Now we have 
 \begin{align}
    &  \EE_{\upsilon\sim \pi^k}\Big[V^{(i),\upsilon}_{\hat{f}^{(i),\upsilon}}(\rho)-\eta L^{(i),k-1}(\hat{f}^{(i),\upsilon},\upsilon,\tau^{1:k-1})\Big]  
      = \EE_{\upsilon \sim \pi^k}\Big[\sup_{f \in \cF^{(i)}}\hat{V}^{(i),\upsilon}(f)\Big]\nonumber\\
     &\qquad \ge \max_{\upsilon^{(i)} \in \Pi^{\mathrm{pur}}}\EE_{\upsilon \sim \upsilon^{(i)} \times \pi^{(-i),k}}\Big[\sup_{f \in \cF^{(i)}}\hat{V}^{(i),\upsilon}(f)\Big].\label{eq: connect 1}
\end{align}
The inequality holds because of the property of  Nash Equilibrium or Coarse Correlated Equilibrium. Then, since the best response $\pi^{(i),k,\dag}$ is a pure policy, we have 
\begin{align}
&\max_{\upsilon^{(i)} \in \Pi^{\mathrm{pur}}}\EE_{\upsilon \sim \upsilon^{(i)} \times \pi^{(-i),k}}\Big[\sup_{f \in \cF^{(i)}}\hat{V}^{(i),\upsilon}(f)\Big]\nonumber
     \\&\quad \ge \EE_{\upsilon\sim \pi^{(i),k,\dag}\times\pi^{(-i),k}}\Big[\sup_{f \in \cF^{(i)}}\hat{V}^{(i),\upsilon}(f)\Big] = \EE_{\upsilon\sim \pi^{(i),k,\dag}\times\pi^{(-i),k}}\Big[\hat{V}^{(i),\upsilon}(f^{(i),\upsilon})\Big]\nonumber\\
     &\quad \ge \EE_{\upsilon\sim \mu^{(i),\pi^k}}\Big[V^{(i),\upsilon}_{f^{(i),\upsilon}}(\rho)- \eta L^{(i),k-1}(f^{(i),\upsilon},\upsilon,\tau^{1:k-1})\Big], \label{eq: connect 2}
 \end{align}
where $\upsilon \in \Pi_i^{\text{pur}}$,  $\mu^{(i),\pi^k} = (\pi^{(i),k, \dag}, \pi^{(-i),k})$ and $\pi^{(i),k,\dag}$ is the best response given the action of other agents $\pi^{(-i),k}.$
 Thus, combining \eqref{eq: connect 1} and \eqref{eq: connect 2}, we can derive 
 \begin{align}
     &  \EE_{\upsilon\sim \mu^{(i),\pi^k}}\Big[V^{(i),\upsilon}_{f^{(i),\upsilon}}(\rho)\Big]-\EE_{\upsilon\sim \pi^k}\Big[V^{(i),\upsilon}_{\hat{f}^{(i),\upsilon}}(\rho)\Big] \nonumber\\
     & \quad \le \eta \EE_{\upsilon\sim \mu^{(i),\pi^k}} \left[L^{(i),k-1}(f^{(i),\upsilon},\upsilon,\tau^{1:k-1})\right] - \eta \EE_{\upsilon\sim \pi^k}\left[L^{(i),k-1}(\hat{f}^{(i),\upsilon},\upsilon,\tau^{1:k-1})\right].\label{ineq:dif}
 \end{align}

Now we provide the concentration lemma, which shows that the empirical discrepancy function $L^{(i),k}(f, \pi,\tau^{1:k})$ is an estimate of the true discrepancy function $\sum_{s=0}^{k-1} \ell^{(i),s}(f,\pi)$.

\begin{lemma}[Concentration Lemma]\label{lemma:concentration} For any $k \in [K]$ pure joint policy $\pi$, and  $\{\zeta^{s}\}_{s=1}^{k-1} \in \Pi$ that be executed in Algorithm~\ref{alg:multi-agent io} in the first $k-1$ episodes, with probability at least $1-\delta$,
    \begin{align*}
        L^{(i),k-1}(f,\pi,\tau^{1:k-1}) -\frac{1}{4}\left(\sum_{s=0}^{k-1}\ell^{(i),s}(f,\pi)\right) \ge -\varepsilon_{\mathrm{conc}},
    \end{align*}
where $\varepsilon_{\mathrm{conc}} =\max\{\cO(HR^2 \log(HK\max_{i \in [n]}\cN_{\cF^{(i)}}(1/K) |\Pi^{\mathrm{pur}}|/\delta)), H\}$ and 
    \begin{align*}
        \ell^{(i),s}(f,\pi) = \sum_{h=1}^H \EE_{(s_h,a_h)\sim \zeta_h^s}\Big[((f_h-\cT_h^{(i),\pi}f_{h+1})(s_h,a_h))^2\Big].
    \end{align*}
\end{lemma}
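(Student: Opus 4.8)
The plan is to prove Lemma~\ref{lemma:concentration} by a one-sided Bernstein/Freedman martingale inequality over the $k-1$ episodes, followed by a covering argument that makes the bound uniform over $f\in\cF^{(i)}$, $\pi\in\Pi^{\mathrm{pur}}$, the agent index $i\in[n]$, and the episode index $k\in[K]$. First I would fix $i$, $k$, a pure joint policy $\pi$, and $f\in\cF^{(i)}$, and invoke completeness: since $\cT_h^{(i),\pi}f_{h+1}\in\cF_h^{(i)}$ by Assumption~\ref{assum: realizability}, the inner infimum over $f_h'$ in \eqref{eq:def of L} is at most its value at $f_h'=\cT_h^{(i),\pi}f_{h+1}$. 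Writing $\xi_h^j=(s_h^j,a_h^j,s_{h+1}^j)$ and
\begin{align*}
Z_j(f):=\sum_{h=1}^H\Bigl[\bigl(l_h^{(i)}(\xi_h^j,f,f,\pi)\bigr)^2-\bigl(l_h^{(i)}(\xi_h^j,\cT_h^{(i),\pi}f_{h+1},f,\pi)\bigr)^2\Bigr],
\end{align*}
this gives the deterministic bound $L^{(i),k-1}(f,\pi,\tau^{1:k-1})\ge\sum_{j=1}^{k-1}Z_j(f)$, so it remains to lower bound $\sum_j Z_j(f)$.

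Next I would compute the conditional mean and variance of the per-episode increment $Z_j(f)$ relative to $\cF_{j-1}$, the $\sigma$-field generated by the first $j-1$ episodes together with the pure policy $\zeta^j$ sampled for episode $j$; w.r.t.\ this filtration $\{Z_j(f)-\ell^{(i),j}(f,\pi)\}_j$ will be a martingale difference sequence. Conditioning further on $(s_h^j,a_h^j)$, the definition \eqref{eq:define_bellman_operator} of $\cT_h^{(i),\pi}$ and $s_{h+1}^j\sim\PP_h(\cdot\mid s_h^j,a_h^j)$ imply that the bootstrapped target $r_h^{(i)}(s_h^j,a_h^j)+\langle f_{h+1}(s_{h+1}^j,\cdot),\pi_{h+1}(\cdot\mid s_{h+1}^j)\rangle_\cA$ has conditional mean $(\cT_h^{(i),\pi}f_{h+1})(s_h^j,a_h^j)$; the elementary identity $\E[(g-Y)^2]-\E[(\E Y-Y)^2]=(g-\E Y)^2$ then gives, after averaging $(s_h^j,a_h^j)\sim\zeta_h^j$ and summing over $h$, the key identity $\E[Z_j(f)\mid\cF_{j-1}]=\ell^{(i),j}(f,\pi)\ge0$. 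Since the relevant $Q$-functions and values lie in $[0,R]$, each summand of $Z_j(f)$ is $O(R^2)$, hence $|Z_j(f)|\le O(HR^2)$, and a Cauchy--Schwarz step across the $H$ steps yields the self-bounding variance estimate $\mathrm{Var}[Z_j(f)\mid\cF_{j-1}]\le O(HR^2)\,\ell^{(i),j}(f,\pi)$; it is this Cauchy--Schwarz factor $H$ that produces the $HR^2$ scaling of $\varepsilon_{\mathrm{conc}}$.

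Then I would apply Freedman's inequality to $\{Z_j(f)-\ell^{(i),j}(f,\pi)\}_j$ (in the form that tolerates a data-dependent predictable quadratic variation, e.g.\ via a dyadic peeling over the range of $\sum_j\ell^{(i),j}(f,\pi)$, costing only an extra logarithmic factor): with probability at least $1-\delta'$,
\begin{align*}
\sum_j\ell^{(i),j}(f,\pi)-\sum_jZ_j(f)\ \le\ \sqrt{C_1HR^2\Bigl(\sum_j\ell^{(i),j}(f,\pi)\Bigr)\log(1/\delta')}+C_1HR^2\log(1/\delta')
\end{align*}
for an absolute constant $C_1$. Using $\sqrt{AB}\le\tfrac34A+\tfrac13B$ on the square-root term absorbs three quarters of $\sum_j\ell^{(i),j}(f,\pi)$, so $\sum_jZ_j(f)\ge\tfrac14\sum_j\ell^{(i),j}(f,\pi)-C_2HR^2\log(1/\delta')$ and hence $L^{(i),k-1}(f,\pi,\tau^{1:k-1})\ge\tfrac14\sum_j\ell^{(i),j}(f,\pi)-C_2HR^2\log(1/\delta')$. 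To upgrade to a bound holding simultaneously for all $f,\pi,i,k$, I would cover each $\cF^{(i)}$ by a minimal $(1/K)$-net in $\|\cdot\|_\infty$ (of size $\cN_{\cF^{(i)}}(1/K)$), apply the above at every net point, every $\pi\in\Pi^{\mathrm{pur}}$, every $i\in[n]$ and every $k\in[K]$ with $\delta'=\delta/\bigl(nK|\Pi^{\mathrm{pur}}|\max_i\cN_{\cF^{(i)}}(1/K)\bigr)$, and union bound; replacing a general $f$ by its nearest net point perturbs each of the at most $HK$ summands of $L^{(i),k-1}$ and of $\sum_j\ell^{(i),j}$ by $O(R/K)$, i.e.\ $O(HR)$ in total. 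Collecting terms and using $R\ge1$ yields the claim with $\varepsilon_{\mathrm{conc}}=\max\{\cO(HR^2\log(HK\max_i\cN_{\cF^{(i)}}(1/K)|\Pi^{\mathrm{pur}}|/\delta)),H\}$, the floor $H$ absorbing the discretization.

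The step I expect to be the main obstacle is the self-bounding variance estimate together with the careful handling of the data-dependent predictable quadratic variation in Freedman's inequality, so that the variance is controlled by the same quantity $\sum_j\ell^{(i),j}(f,\pi)$ that we are bounding --- this is exactly what makes the $\tfrac14$-absorption go through. A secondary point is checking that it suffices to cover only the individual classes $\cF^{(i)}$ and to union-bound over the finite set $\Pi^{\mathrm{pur}}$ rather than over a cover of the joint-policy space, which is precisely why $\log|\Pi^{\mathrm{pur}}|$, not a joint-policy covering number, enters $\varepsilon_{\mathrm{conc}}$. The remaining steps are the routine bias--variance bookkeeping for squared-loss regression against a bootstrapped Bellman target.
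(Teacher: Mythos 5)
Your proposal is correct and follows essentially the same route as the paper's proof: use completeness to replace the inner infimum in $L^{(i),k-1}$ by the value at $\cT_h^{(i),\pi}f_{h+1}$, compute the conditional mean of the resulting squared-loss difference via the bias--variance identity to recover $\ell^{(i),j}(f,\pi)$, establish the self-bounding variance bound $\cO(R^2)\cdot(\text{conditional mean})$, apply Freedman's inequality with the AM--GM absorption that produces the $\tfrac14$ factor, and finish with a $(1/K)$-cover of $\cF^{(i)}$ and a union bound over $\Pi^{\mathrm{pur}}$, $i$, $h$, and $k$. The only cosmetic difference is that you aggregate the $H$ per-step terms into a single per-episode martingale increment (paying the factor $H$ in the variance via Cauchy--Schwarz) and are somewhat more careful about the data-dependent predictable variation via peeling, whereas the paper applies Freedman separately for each $h$ and sums; both yield the same $\varepsilon_{\mathrm{conc}}$.
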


\begin{proof}
   See \S\ref{sec: proof of lemma concentration} for a detailed proof.
   \end{proof}

In other words, if we define the event as 
\begin{align*}
    \cE_1 = \left\{L^{(i),k}(f,\pi,\tau^{1:k}) -\frac{1}{4}\left(\sum_{s=0}^{k-1}\ell^{(i),s}(f,\pi)\right) \ge \varepsilon_{\mathrm{conc}}, \forall f \in \cF^{(i)}, \pi \in \Pi^{\text{pur}}, k \in [K]
    \right\},
\end{align*}
we have  $\Pr\{\cE_1\} \ge 1-\delta.$
 Note that the $\varepsilon_{\mathrm{conc}}$ contains  $\log(|\Pi^{\mathrm{pur}}|/\delta)$ in the logarithmic term, which arises from our policy-search style algorithm.
\begin{lemma}[Optimal Concentration Lemma]\label{lemma:optimal concentration}
    For all index $i \in [n]$, all $\pi \in \Pi^{\mathrm{pur}}$ and function $f^{(i),\pi} \in \cF^{(i)}$ such that $\cT^{(i),\pi}f^{(i),\pi} = f^{(i),\pi}$, with probability at least $1-\delta$, we have 
    \begin{align*}
        L^{(i),k}(f^{(i),\pi},\pi,\tau^{1:k}) \le \varepsilon_{\mathrm{conc}}.
    \end{align*}
    \end{lemma}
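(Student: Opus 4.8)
The plan is to obtain this as a one-sided uniform concentration bound; it is essentially a byproduct of the machinery behind Lemma~\ref{lemma:concentration}, the only new point being that for the fixed point $f^{(i),\pi}$ the Bellman-error (``bias'') part of the empirical loss vanishes identically, so $L^{(i),k}(f^{(i),\pi},\pi,\tau^{1:k})$ collapses to a purely stochastic quantity of size $\widetilde{\cO}(R^2)$. Throughout, I would fix $i\in[n]$, $\pi\in\Pi^{\mathrm{pur}}$ and $h\in[H]$, write $f=f^{(i),\pi}$, and set $g^\star_h=\cT_h^{(i),\pi}f_{h+1}$. By the completeness part of Assumption~\ref{assum: realizability} we have $g^\star_h\in\cF_h^{(i)}$, and by the fixed-point hypothesis $\cT^{(i),\pi}f=f$ we have $g^\star_h=f_h$.

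Next I would introduce the noise variables $\epsilon_h^j := g^\star_h(s_h^j,a_h^j) - r_h^{(i)}(s_h^j,a_h^j) - \langle f_{h+1}(s_{h+1}^j,\cdot),\pi_{h+1}(\cdot\mid s_{h+1}^j)\rangle_{\cA}$ for $j\le k$. Since $\zeta^j$ is determined before episode $j$ and, by the definition of the Bellman operator in \eqref{eq:define_bellman_operator}, $\EE[\langle f_{h+1}(s_{h+1}^j,\cdot),\pi_{h+1}(\cdot\mid s_{h+1}^j)\rangle_{\cA}\mid s_h^j,a_h^j] = \EE_{s'\sim\PP_h(\cdot\mid s_h^j,a_h^j)}\langle f_{h+1}(s',\cdot),\pi_{h+1}(\cdot\mid s')\rangle_{\cA} = g^\star_h(s_h^j,a_h^j)-r_h^{(i)}(s_h^j,a_h^j)$, the sequence $\{\epsilon_h^j\}_j$ is a bounded martingale difference sequence with $|\epsilon_h^j|\le\cO(R)$. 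Because $f_h=g^\star_h$, we get $l_h^{(i)}(\xi_h^j,f,f,\pi)=\epsilon_h^j$, and for any $f'_h\in\cF_h^{(i)}$, writing $\Delta:=f'_h-g^\star_h$, $l_h^{(i)}(\xi_h^j,f',f,\pi)=\Delta(s_h^j,a_h^j)+\epsilon_h^j$. Substituting into \eqref{eq:def of L} and expanding the square, the $h$-th summand of $L^{(i),k}(f,\pi,\tau^{1:k})$ is at most
\[
\sup_{f'_h\in\cF_h^{(i)}}\Big\{\, 2\Big|\sum_{j=1}^{k}\Delta(s_h^j,a_h^j)\,\epsilon_h^j\Big| \;-\; \sum_{j=1}^{k}\Delta(s_h^j,a_h^j)^2 \,\Big\},
\]
so the problem reduces to bounding a self-normalized sum over the shifted class $\cF_h^{(i)}-g^\star_h$.

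The remaining steps are standard. For a \emph{fixed} $\Delta$ with $\|\Delta\|_\infty\le\cO(R)$, a Freedman/Bernstein inequality applied to the bounded martingale $\{\Delta(s_h^j,a_h^j)\epsilon_h^j\}_j$ gives $2|\sum_{j\le k}\Delta(s_h^j,a_h^j)\epsilon_h^j|\le\sum_{j\le k}\Delta(s_h^j,a_h^j)^2+\cO(R^2\log(1/\delta'))$ with probability $1-\delta'$, so the displayed supremum, evaluated at that $\Delta$, is $\cO(R^2\log(1/\delta'))$. To handle all $f'_h$ at once I would pass to a $1/K$-cover of $\cF_h^{(i)}$ in $\|\cdot\|_\infty$ of size $\cN_{\cF_h^{(i)}}(1/K)$; replacing $\Delta$ by its nearest cover element perturbs each of the two sums by at most $\cO(k\cdot R/K)=\cO(R)$ since $k\le K$, so a union bound over the cover yields a per-step bound $\cO(R^2\log(\cN_{\cF_h^{(i)}}(1/K)/\delta'))$. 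Finally I would union-bound over $h\in[H]$, $i\in[n]$, $\pi\in\Pi^{\mathrm{pur}}$ and $k\in[K]$ by setting $\delta'=\delta/(HnK|\Pi^{\mathrm{pur}}|)$; summing the $H$ per-step bounds and taking a maximum with $H$ gives $L^{(i),k}(f^{(i),\pi},\pi,\tau^{1:k})\le\varepsilon_{\mathrm{conc}}$ with $\varepsilon_{\mathrm{conc}}=\max\{\cO(HR^2\log(HK\max_{i}\cN_{\cF^{(i)}}(1/K)|\Pi^{\mathrm{pur}}|/\delta)),H\}$, as claimed. The only genuinely delicate point — and the one I would be careful about — is the union bound over the pure-policy class $\Pi^{\mathrm{pur}}$ (which is exactly why $\varepsilon_{\mathrm{conc}}$ carries a $\log|\Pi^{\mathrm{pur}}|$ factor), together with verifying that $\{\epsilon_h^j\}_j$ is a bona fide martingale difference sequence even though the trajectories are collected under $\zeta^j$ rather than under $\pi$; both are inherited verbatim from the proof of Lemma~\ref{lemma:concentration} in \S\ref{sec: proof of lemma concentration}.
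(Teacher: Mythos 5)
Your proposal is correct and follows essentially the same route as the paper's proof in \S\ref{sec: proof of lemma optimal concentration}: both exploit the fixed-point identity $\cT^{(i),\pi}f^{(i),\pi}=f^{(i),\pi}$ so that the reference residual is pure martingale noise, reduce the $h$-th summand of $L^{(i),k}$ to a supremum of $-\sum_j\Delta^2-2\sum_j\Delta\,\epsilon$ over the hypothesis class (your $\Delta,\epsilon$ decomposition is exactly the paper's $Q_j^{(i)}$ with its conditional mean $\EE[\Delta^2]$ and the cross term made explicit), and control it via Freedman's inequality plus a $1/K$-covering and union bounds over $h,i,k$ and $\Pi^{\mathrm{pur}}$. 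The only cosmetic difference is that you absorb the cross term into the empirical sum $\sum_j\Delta(s_h^j,a_h^j)^2$ rather than into its conditional expectation, which is an equally standard variant of the same self-normalized bound.
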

   \begin{proof}
   See \S\ref{sec: proof of lemma optimal concentration} for a detailed proof.
   \end{proof}
    In other words, if we define the event as
    \begin{align*}
        \cE_2 = \{\forall\ i \in [n], \pi \in \Pi^{\mathrm{pur}},L^{(i),k}(f^{(i),\pi},\pi,\tau^{1:k}) \le \varepsilon_{\mathrm{conc}} \},
    \end{align*}
    we have $\Pr\{\cE_2\}\ge 1-\delta.$  
    Lemma \ref{lemma:optimal concentration} shows that the empirical discrepancy function $L^{(i),k}(f,\pi,\tau^{1:k})$ is small if the function $f$ and the policy $\pi$ are consistent, i.e. $f = f^{(i),\pi}$. Now by  \eqref{ineq:dif} and Lemma \ref{lemma:optimal concentration}, for any $i \in [n]$, 
under the event $\cE_2$,
\begin{align}
     &\EE_{\upsilon\sim \mu^{(i),\pi^k}}\Big[V^{(i),\upsilon}(\rho)\Big] - \EE_{\upsilon\sim \pi^k}\Big[V^{(i),\upsilon}(\rho)\Big] \nonumber\\&\quad= \EE_{\upsilon\sim \mu^{(i),\pi^k}}\Big[V^{(i),\upsilon}_{f^{(i),\upsilon}}(\rho)\Big]-\EE_{\upsilon\sim \pi^k}\Big[V^{(i),\upsilon}(\rho)\Big]\nonumber\\&\quad=\underbrace{\EE_{\upsilon\sim \mu^{(i),\pi^k}}\Big[V^{(i),\upsilon}_{f^{(i),\upsilon}}(\rho)\Big]- \EE_{\upsilon\sim \pi^k}\Big[V^{(i),\upsilon}_{\hat{f}^{(i),\upsilon}}(\rho)\Big]}_{\displaystyle \mathrm{(a)}} + \EE_{\upsilon\sim \pi^k}\Big[V^{(i),\upsilon}_{\hat{f}^{(i),\upsilon}}(\rho)\Big] -\EE_{\upsilon\sim \pi^k}\Big[V^{(i),\upsilon}(\rho)\Big].\nonumber
\end{align}
By \eqref{ineq:dif} and Lemma \ref{lemma:optimal concentration}, under event $\cE_2$, $(a)$ can be bounded by 
\begin{align}
     (a)&\le \eta \EE_{\upsilon\sim \mu^{(i),\pi^k}}\Big [L^{(i),k-1}(f^{(i),\upsilon},\upsilon,\tau^{1:k-1})\Big] - \eta \EE_{\upsilon\sim \pi^k}\Big[L^{(i),k-1}(\hat{f}^{(i),\upsilon},\upsilon,\tau^{1:k-1})\Big]\\
    &\le \eta \varepsilon_{\mathrm{conc}} - \eta \EE_{\upsilon\sim \pi^k}\Big[L^{(i),k-1}(\hat{f}^{(i),\upsilon},\upsilon,\tau^{1:k-1})\Big].\label{eq:similar CE}
\end{align}
Now by 
  Assumption~\ref{assum:decoup coeff}, on the events $\cE_1$ and $\cE_2$ we have 
\begin{align}
    \mathrm{Reg}(K) &= \sum_{k=1}^K\sum_{i=1}^{ n} \Big(V^{(i),\mu^{(i),\pi^k}}(\rho) - V^{(i),\pi^k}(\rho)\Big)\nonumber\\
    &= \sum_{k=1}^K\sum_{i=1}^{ n}  \left(\EE_{\upsilon\sim \mu^{(i),\pi^k}}\Big[ V^{(i),\upsilon}(\rho)\Big] -\EE_{\upsilon\sim \pi^k} \Big[V^{(i),\upsilon}(\rho)\Big]\nonumber\right)\\
    &\le \sum_{k=1}^K \sum_{i=1}^{ n} \Big(\eta \varepsilon_{\mathrm{conc}} - \eta \EE_{\upsilon\sim \pi^k}\Big[L^{(i),k-1}(\hat{f}^{(i),\upsilon},\upsilon,\tau^{1:k-1})\Big]\nonumber\\&\qquad \quad + \EE_{\upsilon\sim \pi^k}\Big[V^{(i),\upsilon}_{\hat{f}^{(i),\upsilon}}(\rho)\Big] -\EE_{\upsilon\sim \pi^k}\Big[V^{(i),\upsilon}(\rho)\Big]\Big)\label{ineq:regret step 1}.
\end{align}
Now since $\hat{f}^{(i),\upsilon} = \argmax_{f \in \cF^{(i)}}\left[V^{(i),\upsilon}_f(\rho)-\eta L^{(i),k-1}(f,\upsilon,\tau^{1:k-1})\right]$ is the optimal function with respect to the regularized value,  under the event $\cE_2$ we have 
\begin{align*}
V^{(i),\upsilon}_{\hat{f}^{(i),\upsilon}}(\rho) - \eta L^{(i),k-1}(\hat{f}^{(i),\upsilon},\upsilon,\tau^{1:k-1}) \ge V^{(i),\upsilon}_{f^{(i),\upsilon}}(\rho) - \eta L^{(i),k-1}(f^{(i),\upsilon},\upsilon,\tau^{1:k-1}),
\end{align*}
then we have $\eta L^{(i),k-1}(\hat{f}^{(i),\upsilon},\upsilon,\tau^{1:k-1})\ge 0$ and by $\eta \le 1,$ 
\begin{align*}
    \eta L^{(i),k-1}(\hat{f}^{(i),\upsilon},\upsilon,\tau^{1:k-1}) &\le V^{(i),\upsilon}_{\hat{f}^{(i),\upsilon}}(\rho) - V^{(i),\upsilon}_{f^{(i),\upsilon}}(\rho) + \eta L^{(i),k-1}(f^{(i),\upsilon},\upsilon,\tau^{1:k-1}) \\&\le R + \eta\varepsilon_{\mathrm{conc}} \le 2\varepsilon_{\mathrm{conc}},
\end{align*}
where the last inequality follows the Lemma \ref{lemma:optimal concentration}. 
If we define $$L^{(i),k-1}_{2\varepsilon_{\mathrm{conc}}}(\hat{f}^{(i),\upsilon},\upsilon,\tau^{1:k-1}) = L^{(i),k-1}(\hat{f}^{(i),\upsilon},\upsilon,\tau^{1:k-1})\cdot \II\{\eta L^{(i),k-1}(\hat{f}^{(i),\upsilon},\upsilon,\tau^{1:k-1})\le 2\varepsilon_{\mathrm{conc}}\}$$ and the event as 
\begin{align*}
    \cE_3=\left\{\forall \ i \in [n], \upsilon \in \Pi^{\mathrm{pur}}, L^{(i),k-1}_{2\varepsilon_{\mathrm{conc}}}(\hat{f}^{(i),\upsilon},\upsilon,\tau^{1:k-1}) = L^{(i),k-1}(\hat{f}^{(i),\upsilon},\upsilon,\tau^{1:k-1})\right\},
\end{align*}
we will have $\cE_3\subseteq \cE_2$. 
Since the policy $\zeta^k$ that algorithm executes is sampled from $\pi^k$, then the sequence $\{Y_k\}_{k=1}^K$ that is defined by 
\begin{align*}
Y_k&=\EE_{\upsilon\sim \pi^k}\Big[V^{(i),\upsilon}_{\hat{f}^{(i),\upsilon}}(\rho)-V^{(i),\upsilon}(\rho)-\eta L_{2\varepsilon_{\mathrm{conc}}}^{(i),k-1}(\hat{f}^{(i),\upsilon},\upsilon,\tau^{1:k-1})\Big]  \\&\qquad\quad-\left(V^{(i),\zeta^k}_{\hat{f}^{(i),\zeta^k}}(\rho)-V^{(i),\upsilon}(\rho)-\eta L_{2\varepsilon_{\mathrm{conc}}}^{(i),k-1}(\hat{f}^{(i),\zeta^k},\zeta^k,\tau^{1:k-1})\right)
\end{align*}
is a martingale difference sequence. 
Now by Azuma-Hoeffding's inequality 
 and $Y_k\le R + 2\varepsilon_{\mathrm{conc}}\le 3\varepsilon_{\mathrm{conc}}$, with probability at least $1-\delta$ we have
\begin{align}
    &\ \ \Bigg|\sum_{k=1}^K\left[ \EE_{\upsilon\sim \pi^k}\Big[V^{(i),\upsilon}_{\hat{f}^{(i),\upsilon}}(\rho)-V^{(i),\upsilon}(\rho)-\eta L_{2\varepsilon_{\mathrm{conc}}}^{(i),k-1}(\hat{f}^{(i),\upsilon},\upsilon,\tau^{1:k-1})\Big] \nonumber\right.\\&\qquad\quad\left.- \left(V^{(i),\zeta^k}_{\hat{f}^{(i),\zeta^k}}(\rho)-V^{(i),\upsilon}(\rho)-\eta L_{2\varepsilon_{\mathrm{conc}}}^{(i),k-1}(\hat{f}^{(i),\zeta^k},\zeta^k,\tau^{1:k-1})\right)\right]\Bigg | \le \cO(\varepsilon_{\mathrm{conc}}\sqrt{K}).\label{ineq:azuma}
\end{align}Define the event $\cE_4$ as the  \eqref{ineq:azuma} holds.
Now by choosing $\frac{\eta}{4} = \frac{1}{\mu} = \frac{1}{\sqrt{K}}$ and taking the union bound over the event $\cE_1,\cE_2,\cE_3$ and $\cE_4$, with probability at least $1-4\delta$, we can get
\begin{small}
\begin{align}
    &  \mathrm{Reg}(K)\nonumber\\
    & \quad  \le \sum_{i=1}^n\sum_{k=1}^K\Big( \eta \varepsilon_{\mathrm{conc}} - \eta \EE_{\upsilon\sim \pi^k}\Big[L^{(i),k-1}(\hat{f}^{(i),\upsilon},\upsilon,\tau^{1:k-1})\Big]+ \EE_{\upsilon\sim \pi^k}\Big[V^{(i),\upsilon}_{\hat{f}^{(i),\upsilon}}(\rho)\Big] -\EE_{\upsilon\sim \pi^k}\Big[V^{(i),\upsilon}(\rho)\Big]\Big)\nonumber\\
    &\quad = \sum_{i=1}^n\sum_{k=1}^K \Big(\eta \varepsilon_{\mathrm{conc}} - \eta \EE_{\upsilon\sim \pi^k}\Big[L_{2\varepsilon_{\mathrm{conc}}}^{(i),k-1}(\hat{f}^{(i),\upsilon},\upsilon,\tau^{1:k-1})\Big]+ \EE_{\upsilon\sim \pi^k}\Big[V^{(i),\upsilon}_{\hat{f}^{(i),\upsilon}}(\rho)\Big] -\EE_{\upsilon\sim \pi^k}\Big[V^{(i),\upsilon}(\rho)\Big]\Big)\nonumber\\
    &\quad \le \underbrace{\sum_{i=1}^n\sum_{k=1}^K\left(\eta \varepsilon_{\mathrm{conc}} - \eta L^{(i),k-1}(\hat{f}^{(i),\zeta^k},\zeta^k,\tau^{1:k-1})+ V^{(i),\zeta^k}_{\hat{f}^{(i),\zeta^k}}(\rho) -V^{(i),\zeta^k}(\rho)\right)}_{\displaystyle \mathrm{(b)}} + \widetilde{\cO}(n\varepsilon_{\mathrm{conc}}\sqrt{K}).\label{reganalysis:mid step}
\end{align}
\end{small}
\hspace{-0.4em}The first inequality holds because of Eq~\eqref{ineq:regret step 1}. The equality in the second line holds under Lemma \ref{lemma:optimal concentration} (event $\cE_3\subseteq \cE_2$). The second inequality is derived from Azuma-Hoeffding's inequality (event $\cE_4$). 
Now using Lemma \ref{lemma:concentration} and MADC assumption, we can get 
\begin{align*}
    (b) &\le - \sum_{i=1}^n\sum_{k=1}^K\left(\frac{\eta}{4}\left(\sum_{s=0}^{k-1}\ell^{(i),s}(f,\zeta^k)\right)\right) +\sum_{i=1}^n\sum_{k=1}^K \left(V^{(i),\zeta^k}_{\hat{f}^{(i),\zeta^k}}(\rho) -V^{(i),\zeta^k}(\rho)\right)  + 4n\sqrt{K} \cdot \eta \varepsilon_{\mathrm{conc}}\\
    &\le n\mu \cdot d_{\mathrm{MADC}}  + 6d_{\mathrm{MADC}}H+4n\sqrt{K}\varepsilon_{\mathrm{conc}}.
\end{align*}
The second inequality uses  Assumption \ref{assum:decoup coeff}. Now the regret can be bounded by 
\begin{align*}
    \mathrm{Reg}(K) &\le n\sqrt{K}\cdot d_{\mathrm{MADC}} + 6d_{\mathrm{MADC}}H + 4n\sqrt{K}\varepsilon_{\mathrm{conc}}+\cO(n\varepsilon_{\mathrm{conc}}\sqrt{K})\\
    &= \cO(n\varepsilon_{\mathrm{conc}}\sqrt{K} + nd_{\mathrm{MADC}}H + nd_{\mathrm{MADC}}\sqrt{K}).
\end{align*}
Hence, we complete the proof by noting that $\varepsilon_{\mathrm{conc}} = \widetilde{\cO}(HR^2\log \Upsilon_{\cF,\delta})$.

\paragraph{CE}
By changing the best response to the strategy modification, we can derive a proof for Correlated Equilibrium (CE).
We simplify the notation of strategy modification as $\phi_i(\upsilon^{(i)})\times \upsilon^{(-i)}$ as $\phi_i(\upsilon)$. Now we have 
 \begin{align}
     &\EE_{\upsilon\sim \pi^k}\Big[V^{(i),\upsilon}_{\hat{f}^{(i),\upsilon}}(\rho)-\eta L^{(i),k-1}(\hat{f}^{(i),\upsilon},\upsilon,\tau^{1:k-1})\Big]\nonumber\\
     &\quad = \EE_{\upsilon \sim \pi^k}\Big[\sup_{f \in \cF^{(i)}}\hat{V}^{(i),\upsilon}(f)\Big]\nonumber\\
     &\quad = \max_{\phi_i}\EE_{\upsilon \sim\pi^k}\Big[\sup_{f \in \cF^{(i)}}\hat{V}^{(i),\phi_i(\upsilon^{(i)}) \times \upsilon^{(-i)}}(f)\Big].\label{eq: CE connect 1}
    \end{align}
The second equality holds because of the property of Correlated Equilibrium.   Now we have  
    \begin{align}
    & \max_{\phi_i}\EE_{\upsilon \sim\pi^k}\Big[\sup_{f \in \cF^{(i)}}\hat{V}^{(i),\phi_i(\upsilon^{(i)}) \times \upsilon^{(-i)}}(f)\Big]\nonumber\\  
     &\quad \ge\max_{\phi_i}\EE_{\upsilon \sim\pi^k}\Big[V^{(i),\phi_i(\upsilon)}_{f^{(i),\phi_i(\upsilon)}}(\rho) - \eta L^{(i),k-1}(f^{(i),\phi_i(\upsilon)}, \phi_i(\upsilon), \tau^{1:k-1})\Big]\nonumber\\
     &\quad \ge \max_{\phi_i}\EE_{\upsilon \sim\pi^k}\Big[V^{(i),\phi_i(\upsilon)}_{f^{(i),\phi_i(\upsilon)}}(\rho) - \eta \varepsilon_{\mathrm{conc}})\Big].\label{eq: CE connect 2}
 \end{align}
  The first equality holds by $f^{(i),\phi_i(\upsilon)} \in \cF^{(i)}$ in \eqref{def:f i pi}, and
 the last inequality is derived from Lemma~\ref{lemma:optimal concentration} and $\phi_i(\upsilon)$ is a pure joint policy.
 Then, by combining \eqref{eq: CE connect 1} and \eqref{eq: CE connect 2}, we can get
 \begin{align*}
&\max_{\phi_i}\EE_{\upsilon\sim\pi^k}\Big[V^{(i),\phi_i(\upsilon)}_{f^{(i),\phi_i(\upsilon)}}(\rho)\Big]-\EE_{\upsilon\sim \pi^k}\Big[V^{(i),\upsilon}_{\hat{f}^{(i),\upsilon}}(\rho)\Big] \\&\quad\le \eta \varepsilon_{\mathrm{conc}} - \eta \EE_{\upsilon\sim \pi^k}\Big[L^{(i),k-1}(\hat{f}^{(i),\upsilon},\upsilon,\tau^{1:k-1})\Big].
 \end{align*}
 Hence, we can upper bound the regret of the agent $i$ at $k$-th episode as 
\begin{align}
    &\max_{\phi_i}\EE_{\upsilon\sim \pi^k}\Big[V^{(i),\phi_i(\upsilon^{(i)})\times \upsilon^{(-i)}}(\rho)\Big] - \EE_{\upsilon\sim \pi^k}\Big[V^{(i),\upsilon}(\rho)\Big] \nonumber\\&\quad= \max_{\phi_i}\EE_{\upsilon\sim\pi^k}\Big[V^{(i),\phi_i(\upsilon)}_{f^{(i),\phi_i(\upsilon)}}(\rho)\Big]-\EE_{\upsilon\sim \pi^k}\Big[V^{(i),\upsilon}(\rho)\Big]\nonumber\\&\quad=\max_{\phi_i}\EE_{\upsilon\sim\pi^k}\Big[V^{(i),\phi_i(\upsilon)}_{f^{(i),\phi_i(\upsilon)}}(\rho)\Big]- \EE_{\upsilon\sim \pi^k}\Big[V^{(i),\upsilon}_{\hat{f}^{(i),\upsilon}}(\rho)\Big] + \EE_{\upsilon\sim \pi^k}\Big[V^{(i),\upsilon}_{\hat{f}^{(i),\upsilon}}(\rho)\Big] -\EE_{\upsilon\sim \pi^k}\Big[V^{(i),\upsilon}(\rho)\Big]\nonumber\\
    &\quad\le \eta \varepsilon_{\mathrm{conc}} - \eta \EE_{\upsilon\sim \pi^k}L^{(i),k-1}(\hat{f}^{(i),\upsilon},\upsilon,\tau^{1:k-1}) + \EE_{\upsilon\sim \pi^k}\Big[V^{(i),\upsilon}_{\hat{f}^{(i),\upsilon}}(\rho)\Big] -\EE_{\upsilon\sim \pi^k}\Big[V^{(i),\upsilon}(\rho)\Big]\nonumber.
\end{align}
 The rest of the proof is the same as in NE/CCE after \eqref{eq:similar CE}.
\end{proof}

\subsection{Proof of Model-Based Version of Theorem~\ref{thm:mainresult}}

\begin{proof} We first consider NE/CCE. 
\paragraph{NE/CCE}
Denote $\hat{f}^{(i),\pi} = \arg\sup_{f \in \cF}\hat{V}_i^\pi(f)$ as the optimal model with respect to the regularized value $\hat{V}^{(i),\pi}(f)$. Since for model-based RL problems, the empirical discrepancy function $L(f,\pi,\tau)$ and $\ell^{(i),s}(f,\pi)$ is independent with policy $\pi$, we simplify 
 it as $L(f,\tau)$ and $\ell^{(i),s}(f)$. Then, from the definition of regularized value function $\hat{V}^{(i),\pi}(f)$, we have 
\begin{align}
     &\EE_{\upsilon\sim \pi^k}\Big[V_{\hat{f}^{(i),\upsilon}}^{(i),\upsilon}(\rho)-\eta L^{(i),k-1}(\hat{f}^{(i),\upsilon},\tau^{1:k-1})\Big]\nonumber\\
     &\quad= \EE_{\upsilon \sim \pi^k}\Big[\sup_{f \in \cF}\hat{V}^{(i),\upsilon}(f)\Big]  \ge \max_{\upsilon^{(i)} \in \Pi^{\mathrm{pur}}}\EE_{\upsilon \sim \upsilon^{(i)} \times \pi^{(-i),k}}\Big[\sup_{f \in \cF}\hat{V}^{(i),\upsilon}(f)\Big].\label{modelbased connect 1}
\end{align}
The inequality holds by the fact that $\pi^k$ is the NE/CCE of the regularized value function $\hat{V}^{(i),\pi}(f)$. Now since the best response $\pi^{(i),k,\dag}$ is a pure policy, we have 
\begin{align}
& \max_{\upsilon^{(i)} \in \Pi^{\mathrm{pur}}}\EE_{\upsilon \sim \upsilon^{(i)} \times \pi^{(-i),k}}\Big[\sup_{f \in \cF}\hat{V}^{(i),\upsilon}(f)\Big] \notag \\
     &\qquad \ge \EE_{\upsilon\sim \pi^{(i),k,\dag}\times\pi^{(-i), k}}\Big[\sup_{f \in \cF}\hat{V}^{(i),\upsilon}(f)\Big]\nonumber\\
     &\qquad\ge \EE_{\upsilon\sim \mu^{(i),\pi^k}}\Big[V_{f^*}^{(i),\upsilon}(\rho)- \eta L^{(i),k-1}(f^*,\tau^{1:k-1})\Big].  \label{modelbased connect 2}
 \end{align}
 Thus, by combining \ref{modelbased connect 1} and \ref{modelbased connect 2}, we have 
 \begin{align}
     &\EE_{\upsilon\sim \mu^{(i),\pi^k}}\Big[V_{f^*}^{(i),\upsilon}(\rho)\Big]-\EE_{\upsilon\sim \pi^k}\Big[V_{\hat{f}^{(i),\upsilon}}^{(i),\upsilon}(\rho)\Big]\nonumber \\&\quad\le \eta  L^{(i),k-1}(f^*,\tau^{1:k-1}) - \eta \EE_{\upsilon\sim \pi^k}\Big[L^{(i),k-1}(\hat{f}^{(i),\upsilon},\tau^{1:k-1})\Big]. \label{CCE similar}
 \end{align}
Now we provide our concentration lemma for model-based RL problems.
\begin{lemma}[Concentration Lemma for Model-Based RL Problems]\label{lemma:concentration:model-based}
With probability at least $1-\delta$, for any $k \in [K], f \in \cF$, for the executed policy $\{\zeta^s\}_{s=1}^{k-1}$ in Algorithm~\ref{alg:multi-agent io}, we have 
    \begin{align}
    L^{(i),k-1}(f^*, \tau^{1:k-1})-L^{(i),k-1}(f, \tau^{1:k-1}) \le -\sum_{s=1}^{k-1}\ell^{(i),s}(f) + \kappa_{\mathrm{conc}},\label{eq: concentration lemma model based eq}
    \end{align}
    where $\kappa_{\mathrm{conc}} = \max\{2H \log \frac{H\cB_\cF(1/K)}{\delta}, H \}$, where $\cB_\cF(1/K)$ is the $1/K$-bracketing number of the model class $\cF$. We also define the event $\cE_5$ as the situation when \eqref{eq: concentration lemma model based eq} holds.
\end{lemma}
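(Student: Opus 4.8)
The statement is the standard maximum-likelihood concentration bound, and the plan is to prove it by the exponential-supermartingale (Chernoff-type) argument for log-likelihood ratios, made uniform over the model class $\cF$ by a $1/K$-bracketing cover, together with one extra elementary martingale step that converts the empirical per-step Hellinger errors produced by the argument into the population quantity $\sum_{s}\ell^{(i),s}(f)$ of \eqref{def:modelbased discrepancy}. Observe first that the transition model carries no agent index, so $L^{(i),k-1}$ and $\ell^{(i),s}$ do not actually depend on $i$, and that the policy and initial-state factors cancel in the likelihood ratio, so that by \eqref{eq:model based L},
\[
L^{(i),k-1}(f^*,\tau^{1:k-1})-L^{(i),k-1}(f,\tau^{1:k-1})=\sum_{s=1}^{k-1}\sum_{h=1}^H\log\frac{\PP_{h,f}(s_{h+1}^s\mid s_h^s,a_h^s)}{\PP_{h,f^*}(s_{h+1}^s\mid s_h^s,a_h^s)}.
\]

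The first step is a one-step inequality. Condition on the $\sigma$-algebra generated by the data up to and including $(s_h^s,a_h^s)$, under which $s_{h+1}^s\sim\PP_{h,f^*}(\cdot\mid s_h^s,a_h^s)$; the Bhattacharyya identity $\EE_{y\sim p^*}[\sqrt{q(y)/p^*(y)}]=1-D_{\mathrm H}^2(q\,\Vert\,p^*)$ and $1-x\le e^{-x}$ then give that the conditional expectation of $\exp\big(\tfrac12\log(\PP_{h,f}/\PP_{h,f^*})+D_{\mathrm H}^2(\PP_{h,f}(\cdot\mid s_h^s,a_h^s)\,\Vert\,\PP_{h,f^*}(\cdot\mid s_h^s,a_h^s))\big)$ given this $\sigma$-algebra is at most $1$. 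Multiplying these conditional bounds over all $(s,h)$ with $s\le k-1$, in an order for which the conditioning is nested, produces a nonnegative exponential supermartingale, and Ville's maximal inequality yields, for a fixed $f$ with probability at least $1-\delta'$,
\[
\tfrac12\big(L^{(i),k-1}(f^*,\tau^{1:k-1})-L^{(i),k-1}(f,\tau^{1:k-1})\big)+\sum_{s=1}^{k-1}\sum_{h=1}^H D_{\mathrm H}^2\big(\PP_{h,f}(\cdot\mid s_h^s,a_h^s)\,\Vert\,\PP_{h,f^*}(\cdot\mid s_h^s,a_h^s)\big)\le\log(1/\delta').
\]

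Next I would turn the empirical per-step Hellinger sum into $\sum_s\ell^{(i),s}(f)$: since $(s_h^s,a_h^s)$ is distributed as $\pi_h^s$ given the pre-episode history, the conditional mean of $\sum_h D_{\mathrm H}^2(\PP_{h,f}(\cdot\mid s_h^s,a_h^s)\,\Vert\,\PP_{h,f^*}(\cdot\mid s_h^s,a_h^s))$ equals $\ell^{(i),s}(f)$, and this sum lies in $[0,H]$; applying $e^{-t}\le 1-t/2$ on $[0,1]$ to $t$ equal to $1/H$ times it gives a second exponential supermartingale over $s$, whence with probability at least $1-\delta''$, $\sum_s\ell^{(i),s}(f)\le 2\sum_{s,h}D_{\mathrm H}^2(\cdots)+2H\log(1/\delta'')$; combined with the display above this is exactly the claimed bound with $\kappa_{\mathrm{conc}}$ replaced by $2\log(1/\delta')+2H\log(1/\delta'')$ for that single $f$. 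To make it uniform over $f\in\cF$ I would pass to a $1/K$-bracketing cover $\{g_1^\iota,g_2^\iota\}$ of size $\cB_\cF(1/K)$ and run the argument stepwise with the upper bracket $g_2^\iota\ge\PP_f$, which only enlarges the log-ratio; since $g_2^\iota$ need not be a density, its moment-generating factor is controlled by Cauchy--Schwarz, $\EE[\sqrt{g_2^\iota/\PP_{f^*}}\mid\cdot]\le(\int g_2^\iota)^{1/2}\le(1+1/K)^{1/2}$ (using $\int g_2^\iota\le\int g_1^\iota+1/K\le 1+1/K$), so over the at most $KH$ rounds the discretization costs only an additive $\cO(H)$, while the lower bound $\EE[\sqrt{g_2^\iota/\PP_{f^*}}\mid\cdot]\ge 1-(\text{Hellinger term})$ is preserved. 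A union bound over the brackets and over $h$ and the choice $\delta',\delta''\asymp\delta/(H\cB_\cF(1/K))$ collect everything into $\kappa_{\mathrm{conc}}=\max\{2H\log(H\cB_\cF(1/K)/\delta),H\}$, the final $\max\{\cdot,H\}$ absorbing the $\cO(H)$ discretization slack; this good event is $\cE_5$.

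I expect the main obstacle to be the bracketing step: the bracket functions are not normalized densities (so the clean identity $\EE[\sqrt{q/p^*}]=1-D_{\mathrm H}^2$ degrades to a one-sided Cauchy--Schwarz estimate), they are specified at the trajectory level whereas the natural martingale decomposition is per step, and one must verify carefully that the $1/K$ discretization error and the union-bound cost contribute only the additive $\cO(H)$ and the logarithmic $\log(\cB_\cF(1/K)/\delta)$ terms present in $\kappa_{\mathrm{conc}}$; by contrast the two exponential-supermartingale arguments themselves are routine.
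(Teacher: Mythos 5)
Your proposal is correct and follows the same core route as the paper's proof: write $L^{(i),k-1}(f^*,\tau^{1:k-1})-L^{(i),k-1}(f,\tau^{1:k-1})$ as a sum of per-step log-likelihood ratios, control $\exp\bigl(\tfrac12\log(\PP_{h,f}/\PP_{h,f^*})\bigr)$ via the Bhattacharyya identity $\EE_{p^*}[\sqrt{q/p^*}]=\int\sqrt{qp^*}$, apply the exponential-martingale (Ville-type) inequality that the paper states as Lemma~\ref{lemma:foster}, and make the bound uniform by a union bound over a bracketing cover. The one structural difference is your choice of filtration: by conditioning on the $\sigma$-algebra that already contains $(s_h^s,a_h^s)$, your supermartingale's compensator is the \emph{empirical} Hellinger sum $\sum_{s,h}D_{\mathrm H}^2\bigl(\PP_{h,f}(\cdot\mid s_h^s,a_h^s)\,\Vert\,\PP_{h,f^*}(\cdot\mid s_h^s,a_h^s)\bigr)$, which forces your second martingale argument (costing a factor of $2$ and the extra $2H\log(1/\delta'')$) to reach the population quantity $\sum_{s}\ell^{(i),s}(f)$. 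The paper instead conditions on the pre-episode filtration $\cG_{j-1}$, so that $\EE[\exp(-\tfrac12 X^j_{h,f})\mid\cG_{j-1}]$ integrates over $(s_h^j,a_h^j)\sim\pi^j$ as well and equals $1-c\,\EE_{(s_h,a_h)\sim\pi^j}[D_{\mathrm H}^2(\cdots)]$, i.e.\ one minus (a constant times) exactly the population term appearing in $\ell^{(i),j}(f)$; combined with $\log x\le x-1$, the desired bound then drops out in a single stage and your second step is unnecessary. Both treatments leave the bracketing step at a comparable level of detail --- the paper dispatches it in one sentence by citation --- and you correctly identify it as the only genuinely delicate point (the upper bracket $g_2^{\iota}$ is not a normalized density and is specified at the trajectory level rather than per step), so your sketch there is no less complete than the paper's.
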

\begin{proof}
   See \S\ref{sec: proof of lemma concentration model based} for detailed proof.
   \end{proof}
 By Lemma \ref{lemma:concentration:model-based}, for any $i \in [n]$, 
 \begin{small}
\begin{align}
     &\EE_{\upsilon\sim \mu^{(i),\pi^k}}\Big[V^{(i),\upsilon}(\rho)\Big] - \EE_{\upsilon\sim \pi^k}\Big[V^{(i),\upsilon}(\rho) \Big]\nonumber\\&\quad=\underbrace{\EE_{\upsilon\sim \mu^{(i),\pi^k}}\left[V_{f^*}^{(i),\upsilon}(\rho)\right]- \EE_{\upsilon\sim \pi^k}\left[V_{\hat{f}^{(i),\upsilon}}^{(i),\upsilon}(\rho)\right]}_{\displaystyle \text{(a)}}+ \EE_{\upsilon\sim \pi^k}\left[V_{\hat{f}^{(i),\upsilon}}^{(i),\upsilon}(\rho)\right] -\EE_{\upsilon\sim \pi^k}\left[V^{(i),\upsilon}(\rho)\right].\label{model connect 3}
     \end{align}
     \end{small}
Now substitute into equation \eqref{CCE similar},  
\begin{align}
    \text{(a)} &\le \eta L^{(i),k-1}(f^{*},\tau^{1:k-1}) - \eta \EE_{\upsilon\sim \pi^k}\left[L^{(i),k-1}(\hat{f}^{(i),\upsilon},\tau^{1:k-1})\right]\nonumber\\
    &= \EE_{\upsilon\sim \pi^k}\Big[\eta L^{(i),k-1}(f^{*},\tau^{1:k-1})-\eta L^{(i),k-1}(\hat{f}^{(i),\upsilon},\tau^{1:k-1}) \Big].\label{model connect4}
\end{align}
Hence, combining with \eqref{model connect 3} and \eqref{model connect4}, we can get
     \begin{align}
     &\EE_{\upsilon\sim \mu^{(i),\pi^k}}\Big[V^{(i),\upsilon}(\rho)\Big] - \EE_{\upsilon\sim \pi^k}\Big[V^{(i),\upsilon}(\rho) \Big]\nonumber\\&\quad \le \text{(a)}+ \EE_{\upsilon\sim \pi^k}\left[V_{\hat{f}^{(i),\upsilon}}^{(i),\upsilon}(\rho)\right] -\EE_{\upsilon\sim \pi^k}\left[V^{(i),\upsilon}(\rho)\right]\nonumber\\
     &\quad \le \EE_{\upsilon\sim \pi^k}\left[\eta L^{(i),k-1}(f^*, \tau^{1:k-1})-\eta L^{(i),k-1}(\hat{f}^{(i),\upsilon},\tau^{1:k-1}) + V_{\hat{f}^{(i),\upsilon}}^{(i),\upsilon}(\rho)-V^{(i),\upsilon}(\rho)\right]. \label{ineq:dif model-based}
\end{align}
By summing over $k \in [K]$ and $I \in [n]$, the regret can be obtained by
\begin{small}
\begin{align}
    &\mathrm{Reg}(K) \nonumber\\&\quad\le \sum_{i=1}^n \sum_{k=1}^K \EE_{\upsilon\sim \pi^k}\Big[\eta L^{(i),k-1}(f^*, \tau^{1:k-1})-\eta L^{(i),k-1}(\hat{f}^{(i),\upsilon},\tau^{1:k-1}) + V_{\hat{f}^{(i),\upsilon}}^{(i),\upsilon}(\rho)-V^{(i),\upsilon}(\rho)\Big]. \label{eq:regret model based}
\end{align}
\end{small}
Now we want to use Azuma-Hoeffding's inequality to transform $\upsilon\sim \pi^k$ to executed policy $\zeta^k$. To achieve this goal, note that by  Lemma~\ref{lemma:concentration:model-based},  under event $\cE_5,$ we have 
\begin{align}
     L^{(i),k-1}(f^*, \tau^{1:k-1})- L^{(i),k-1}(\hat{f}^{(i),\upsilon},\tau^{1:k-1}) \le \kappa_{\mathrm{conc}}.\label{lemma hold}
\end{align}
Moreover, since $\hat{f}^{(i),\upsilon}$ achieves the maximum value of the regularized value function $\hat{V}^{(i),\pi}(f) = V^{(i),\upsilon}_f(\rho) - L^{(i),k-1}(f^*, \tau^{1:k-1})$, we have
\begin{align*}
    L^{(i),k-1}(f^*, \tau^{1:k-1})- L^{(i),k-1}(\hat{f}^{(i),\upsilon},\tau^{1:k-1})&\ge \EE_{\upsilon\sim \mu^{(i),\pi^k}}\Big[V_{f^*}^{(i),\upsilon}(\rho)\Big]-\EE_{\upsilon\sim \pi^k}\Big[V_{f^{(i),\upsilon}}^{(i),\upsilon}(\rho)\Big]\\&\ge -R \ge -\kappa_{\mathrm{conc}}.
\end{align*}
Thus, if we define \begin{align*}
\cL_{\varepsilon}^{(i),\upsilon}&=\left(L^{(i),k-1}(f^*, \tau^{1:k-1})- L^{(i),k-1}(\hat{f}^{(i),\upsilon},\tau^{1:k-1})\right)\\&\hspace{10em}\cdot \II\Big\{|L^{(i),k-1}(f^*, \tau^{1:k-1})- L^{(i),k-1}(\hat{f}^{(i),\upsilon},\tau^{1:k-1}) |\le \varepsilon\Big\},  
\end{align*}
we can have $|\cL_{\kappa_{\mathrm{conc}}}^{(i),\upsilon}|
\le \kappa_{\mathrm{conc}}$ is bounded under event $\cE_5$. Then, with probability at least $1-\delta$, $\cL_{\kappa_{\mathrm{conc}}}^{(i),\upsilon} = L^{(i),k-1}(f^*, \tau^{1:k-1})- L^{(i),k-1}(\hat{f}^{(i),\upsilon},\tau^{1:k-1})$.
Then, we can apply Azuma-Hoeffding's inequality to transform the expectation to the executed policy $\zeta^k$.
\begin{align}
    \label{eq:azuma apply zeta^k}
    \Bigg| \sum_{k=1}^K \cL_{\kappa_{\mathrm{conc}}}^{(i),\zeta^k} -\sum_{k=1}^K\EE_{\upsilon\sim \pi^k}\Big[\cL_{\kappa_{\mathrm{conc}}}^{(i),\upsilon}\Big]\Bigg| = \cO(\kappa_{\mathrm{conc}}\cdot \log K).
\end{align}
Now by taking the union bound of Azuma-Hoeffding's inequality and event $\cE_5$, with probability at least $1-2\delta$, 
\begin{small}
\begin{align}
    \mathrm{Reg}(K) &\le \sum_{i=1}^n\sum_{k=1}^K \EE_{\upsilon\sim \pi^k}\Big[\eta L^{(i),k-1}(f^*, \tau^{1:k-1})-\eta L^{(i),k-1}(\hat{f}^{(i),\upsilon},\tau^{1:k-1}) + V_{\hat{f}^{(i),\upsilon}}^{(i),\upsilon}(\rho)-V^{(i),\upsilon}(\rho)\Big]\nonumber\\
    &= \sum_{i=1}^n\sum_{k=1}^K \EE_{\upsilon\sim \pi^k}\Big[\eta \cL_{\kappa_{\mathrm{conc}}}^{\upsilon} + V_{\hat{f}^{(i),\upsilon}}^{(i),\upsilon}(\rho)-V^{(i),\upsilon}(\rho)\Big]\nonumber\\
    &\le \underbrace{\sum_{i=1}^n\sum_{k=1}^K \left(\eta \cL_{\kappa_{\mathrm{conc}}}^{\zeta^k} + V_{\hat{f}^{(i),\zeta^k}}^{(i),\zeta^k}(\rho)-V^{(i),\zeta^k}(\rho)\right)}_{\displaystyle \text{(b)}} + \widetilde{\cO}(n\kappa_{\mathrm{conc}}),\nonumber
\end{align}
\end{small}
where the first inequality holds by \eqref{eq:regret model based}, the equality holds under event $\cE_5$, and the last inequality holds by \eqref{eq:azuma apply zeta^k}. Then, by Lemma \ref{lemma:concentration:model-based}, under event $\cE_5$, we have 
\begin{align}
    \text{(b)} &= \sum_{i=1}^n\sum_{k=1}^K \left(\eta L^{(i),k-1}(f^*, \tau^{1:k-1})- \eta L^{(i),k-1}(\hat{f}^{(i),\zeta^k},\tau^{1:k-1}) + V_{\hat{f}^{(i),\zeta^k}}^{(i),\upsilon}(\rho)-V^{(i),\zeta^k}(\rho)\right)\nonumber\\&\le \sum_{i=1}^n\sum_{k=1}^K \left(-\eta \sum_{s=1}^{k-1}\ell^{(i),s}(\hat{f}^{(i),\zeta^k}) + \eta \kappa_{\mathrm{conc}}  + V_{\hat{f}^{(i),\zeta^k}}^{(i),\upsilon}(\rho)-V^{(i),\zeta^k}(\rho)\right).\nonumber
\end{align}
Then, by Assumption \ref{assum:decoup coeff}, (b) can be further upper bounded by 
\begin{align*}
    \text{(b)}&\le \sum_{i=1}^n\sum_{k=1}^K \left(-\eta \sum_{s=1}^{k-1}\ell^{(i),s}(\hat{f}^{(i),\zeta^k}) + \eta \kappa_{\mathrm{conc}}  + V_{\hat{f}^{(i),\zeta^k}}^{(i),\upsilon}(\rho)-V^{(i),\zeta^k}(\rho)\right)\\
    &\le n \eta K \kappa_{\mathrm{conc}} + \frac{n}{\eta} d_{\mathrm{MADC}} + 6nd_{\mathrm{MADC}}H\\
    &= \widetilde{\cO}(n\kappa_{\mathrm{conc}}\sqrt{K} + nd_{\mathrm{MADC}}\sqrt{K} + nd_{\mathrm{MADC}}H).
\end{align*}
The first inequality holds by Lemma \ref{lemma:concentration:model-based}.
 The last equality holds by 
 $\eta = 4/\sqrt{K}$.
 Finally, the regret can be bounded by 
 \begin{align}
     \text{Reg}(K) \le \text{(b)} + \widetilde{\cO}(n\kappa_{\mathrm{conc}}) = \widetilde{\cO}(n\kappa_{\mathrm{conc}}\sqrt{K} + nd_{\mathrm{MADC}}\sqrt{K} + nd_{\mathrm{MADC}}H).\nonumber
 \end{align}
 Thus, we complete the proof by noting that $\kappa_{\mathrm{conc}} = \cO(H )$

\paragraph{Correlated Equilibrium}
Similar to model-free problems, we only need to replace the best response with strategy modification. 
\begin{align}
     &\EE_{\upsilon\sim \pi^k}\Big[V_{\hat{f}^{(i),\upsilon}}^{(i),\upsilon}(\rho)-\eta L^{(i),k-1}(\hat{f}^{(i),\upsilon},\tau^{1:k-1})\Big]\nonumber\\
     &\quad= \EE_{\upsilon \sim \pi^k}\Big[\sup_{f \in \cF}\hat{V}^{(i),\upsilon}(f)\Big]\nonumber\\
     &\quad= \max_{\phi_i}\EE_{\upsilon \sim \pi^k}\Big[\sup_{f \in \cF}\hat{V}^{(i),\phi_i(\upsilon)}(f)\Big].\nonumber
     \end{align}
     The last equality uses the property that $\pi^k$ is a CE with respect to the payoff function $\sup_{f \in \cF}\hat{V}^{(i),\upsilon}(f).$ Then, since $f^* \in \cF$, we can further derive 
    \begin{align}
    &\EE_{\upsilon\sim \pi^k}\Big[V_{\hat{f}^{(i),\upsilon}}^{(i),\upsilon}(\rho)-\eta L^{(i),k-1}(\hat{f}^{(i),\upsilon},\tau^{1:k-1})\Big]\nonumber\\
     &\quad\ge \max_{\phi_i}\EE_{\upsilon \sim \pi^k}\Big[\hat{V}^{(i),\phi_i(\upsilon)}(f^*)\Big]\nonumber\\
     &\quad= \max_{\phi_i}\EE_{\upsilon \sim \pi^k}\Big[V_{f^*}^{(i),\phi_i(\upsilon)}(\rho)- \eta L^{(i),k-1}(f^*, \tau^{1:k-1})\Big].\label{CE connect 1 model basd}
\end{align}
The second equality holds by the property of CE.  
 Thus, we have 
 \begin{align}
&\max_{\phi_i}\EE_{\upsilon\sim \pi^k}[V_{f^*}^{(i),\phi_i(\upsilon)}(\rho)]-\EE_{\upsilon\sim \pi^k}[V_{\hat{f}^{(i),\upsilon}}^{(i),\upsilon}(\rho)] \nonumber\\&\quad\le \eta  L^{(i),k-1}(f^*,\tau^{1:k-1}) - \eta \EE_{\upsilon\sim \pi^k}L^{(i),k-1}(\hat{f}^{(i),\upsilon},\tau^{1:k-1}).\label{CE connect 2 model based}
\end{align}

Hence, combining with \eqref{CE connect 1 model basd} and \eqref{CE connect 2 model based}, we can upper bound the regret of the agent $i$ at $k$-th episode as 
\begin{align}
    &\max_{\phi_i}\EE_{\upsilon\sim \pi^k}\Big[V^{(i),\phi_i(\upsilon^{(i)})\times \upsilon^{(-i)}}(\rho)\Big] - \EE_{\upsilon\sim \pi^k}\Big[V^{(i),\upsilon}(\rho)\Big] \nonumber\\&\quad= \max_{\phi_i}\EE_{\upsilon\sim\pi^k}\Big[V^{(i),\phi_i(\upsilon)}_{f^{(i),\phi_i(\upsilon)}}(\rho)\Big]-\EE_{\upsilon\sim \pi^k}\Big[V^{(i),\upsilon}(\rho)\Big]\nonumber\\&\quad=\max_{\phi_i}\EE_{\upsilon\sim\pi^k}\Big[V^{(i),\phi_i(\upsilon)}_{f^{(i),\phi_i(\upsilon)}}(\rho)\Big]- \EE_{\upsilon\sim \pi^k}\Big[V^{(i),\upsilon}_{\hat{f}^{(i),\upsilon}}(\rho)\Big] + \EE_{\upsilon\sim \pi^k}\Big[V^{(i),\upsilon}_{\hat{f}^{(i),\upsilon}}(\rho)\Big] -\EE_{\upsilon\sim \pi^k}\Big[V^{(i),\upsilon}(\rho)\Big]\nonumber\\
    &\quad\le \eta \varepsilon_{\mathrm{conc}} - \eta \EE_{\upsilon\sim \pi^k}L^{(i),k-1}(\hat{f}^{(i),\upsilon},\upsilon,\tau^{1:k-1}) + \EE_{\upsilon\sim \pi^k}\Big[V^{(i),\upsilon}_{\hat{f}^{(i),\upsilon}}(\rho)\Big] -\EE_{\upsilon\sim \pi^k}\Big[V^{(i),\upsilon}(\rho)\Big]\nonumber.
\end{align}
 The rest of the proof is the same as NE/CCE after \eqref{CCE similar}.
\end{proof}


\section{Proof of Theorems and Lemmas}
\subsection{Proof of Theorem~\ref{thm:multi BE in MADC}}\label{appendix:proof of BE}
\begin{proof}
The proof follows Proposition 3 in \cite{dann2021provably}.  First, we provide the following lemma in \cite{dann2021provably}.
\begin{lemma}\label{lemma:dann}
    For any positive real number sequence $x_1,\cdots, x_n$, we have 
    \begin{align*}
        \frac{\sum_{i=1}^n x_i}{\sqrt{\sum_{i=1}^n ix_i^2}}\le \sqrt{1+\log n}.
    \end{align*}
\end{lemma}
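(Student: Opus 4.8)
\textbf{Proof plan (Lemma~\ref{lemma:dann}).} The inequality is a short consequence of the Cauchy--Schwarz inequality together with the standard integral bound on the harmonic sum, so the plan is simply to spell this out. The key idea is to split $x_i$ as $x_i = (\sqrt{i}\,x_i)\cdot\tfrac{1}{\sqrt{i}}$ before applying Cauchy--Schwarz, so that the weighting $\sum_i i\,x_i^2$ appearing in the denominator of the target quantity is exactly reproduced.

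Concretely, I would write
\begin{align*}
\sum_{i=1}^n x_i \;=\; \sum_{i=1}^n \big(\sqrt{i}\,x_i\big)\cdot\frac{1}{\sqrt{i}} \;\le\; \left(\sum_{i=1}^n i\,x_i^2\right)^{1/2}\left(\sum_{i=1}^n \frac{1}{i}\right)^{1/2},
\end{align*}
where the inequality is Cauchy--Schwarz applied to the vectors $(\sqrt{i}\,x_i)_i$ and $(1/\sqrt{i})_i$ (both well defined since $x_i>0$ and $i\ge 1$). Dividing through by $\big(\sum_{i=1}^n i\,x_i^2\big)^{1/2}$, which is strictly positive, reduces the claim to showing $\sum_{i=1}^n 1/i \le 1+\log n$. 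This last step follows by comparison with an integral: for each $i\ge 2$ and $x\in[i-1,i]$ one has $1/i\le 1/x$, hence $\sum_{i=1}^n 1/i \le 1 + \sum_{i=2}^n\int_{i-1}^{i}\frac{dx}{x} = 1+\int_1^n\frac{dx}{x} = 1+\log n$. Combining the two displays gives $\big(\sum_i x_i\big)\big/\big(\sum_i i\,x_i^2\big)^{1/2}\le\sqrt{1+\log n}$, which is the assertion.

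There is no genuine obstacle here; both ingredients are elementary, so the only real task is to present them cleanly. The one point worth noting is that the Cauchy--Schwarz step is essentially tight when $x_i\propto 1/i$ (so that $i\,x_i^2\propto 1/i$), which is precisely the configuration that produces the logarithmic factor — the same phenomenon that makes this lemma the natural device for converting the eluder-dimension ``independence counts'' into $\sqrt{\,\cdot\,}$-type bounds in the proof of Theorem~\ref{thm:multi BE in MADC} and, more generally, in \cite{dann2021provably}.
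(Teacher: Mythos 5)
Your proof is correct and is exactly the standard argument: the paper itself does not prove this lemma but simply imports it from \cite{dann2021provably}, whose proof is the same Cauchy--Schwarz split $x_i=(\sqrt{i}\,x_i)\cdot(1/\sqrt{i})$ followed by the integral bound $\sum_{i=1}^n 1/i\le 1+\log n$. Nothing to add.
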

Now denote $E_{\varepsilon} = \mathrm{dim}_{\mathrm{MABE}}(\cF,\varepsilon)$. We fix  $i \in [n]$, and ignore both $h$ and $i$ for simplicity. Also denote $\hat{e}^{s,k}_h = \EE_{\pi_s}[\phi_t]$ and $e^{s,k}_h = \hat{e}^{s,k}_h \cdot \II\{\hat{e}^{s,k}_h >\varepsilon\}$, where $\phi_t = (I - \cT_h^{(i),\pi^k})f_h \in \cF_h^{(i)}$.
We initialize $K$ buckets $B_h^0, \cdots, B_h^{K-1}$, and we want to add element $e^{k,k}_h$  for $k \in [K]$ into these buckets one by one. The rule for adding elements is as follows:
If $e^{k,k}_h = 0$, we do not add it to any buckets. Otherwise we go through all buckets from $B_h^0$ to $B_h^{K-1}$, and add $e^{k,k}_h$ to $B_h^i$ whenever 
\begin{align*}
    \sum_{s\le t-1, s \in B_h^i}(e_h^{s,k})^2 < (e_h^{k,k})^2.
\end{align*}
Now assume we add $e^{k,k}_h$ into the bucket $b_h^k$. 
Then, for all $1\le i\le b_h^k-1$, we have  $(e_h^{k,k})^2 \le \sum_{s\le k-1, s \in B_h^i}(e_h^{s,k})^2$. Thus, 
\begin{align}
    \sum_{k=1}^K \sum_{s=1}^k(e_h^{s,k})^2 \ge \sum_{k=1}^K \sum_{0\le i\le b_h^k-1}\sum_{s\le t-1, s \in B_h^i}(e_h^{s,k})^2\ge \sum_{k=1}^K b_h^k(e_h^{k,k})^2.\label{connect 0 BE dimension}
\end{align}
Now note that by the definition of $\varepsilon$-independent sequence,
for the measures in $B_h^i$ $\{\pi^{k_1}, \cdots, \pi^{k_j}\}$, $\pi^{k}$ is a $\varepsilon'$-independent from all predecessors $\pi^{k_1},\cdots,\pi^{k_{j-1}}$ such that $\varepsilon'>\varepsilon$. (We can choose $\varepsilon' = e_h^{k,k}-c$ for enough small $c$ such that $\sqrt{\sum_{s\le t-1, s \in B_h^i} (e_h^{s,k})^2}\le \varepsilon'$ and $\varepsilon'>\varepsilon$ by $e_h^{k,k}>\varepsilon$.) Thus, from the definition of BE dimension, the size of each bucket cannot exceed $E_\varepsilon.$ Now by Jensen's inequality, we can get 
\begin{align}
    \sum_{k=1}^K b_h^k(e_h^{k,k})^2  = \sum_{i=1}^{K-1} i \cdot \sum_{s \in B_h^i}(e_h^{s,s})^2 \ge \sum_{i=1}^{K-1} i|B_h^i|\left(\sum_{s \in B_h^i}\frac{e_h^{s,s}}{|B_h^i|}\right)^2\ge \sum_{i=1}^{K-1} iE_\varepsilon\left(\sum_{s \in B_h^i}\frac{e_h^{s,s}}{E_\varepsilon}\right)^2, \label{connect 2 BE dimension}
\end{align}
where the last inequality uses the fact that $|B_h^i|\le E_\varepsilon.$ Let $x_i = \sum_{s \in B_h^i}(e_h^{s,s})$. By  Lemma~\ref{lemma:dann}, we have 
\begin{align}
    \sum_{i=1}^{K-1} iE_\varepsilon\left(\sum_{s \in B_h^i}\frac{e_h^{s,s}}{E_\varepsilon}\right)^2 = \frac{1}{E_\varepsilon} \sum_{i=1}^{K-1}i \cdot \left(\sum_{s \in B_h^i}e_h^{s,s}\right)^2 \ge \frac{1}{E_\varepsilon(1+\log K)} \left(\sum_{s \in [K]\setminus B_h^0}e_h^{s,s}\right)^2.\label{connect 1 BE dimension}
\end{align}
Hence, combining \eqref{connect 0 BE dimension}, \eqref{connect 2 BE dimension} and \eqref{connect 1 BE dimension}, we can get 
\begin{align}
    \sum_{s \in [K]\setminus B_h^0}e_h^{s,s} &\le \left(E_\varepsilon (1+\log K) \sum_{k=1}^K b_h^k (e_h^{k,k})^2\right)^{1/2}\nonumber\\&\le \left(E_\varepsilon (1+\log K) \sum_{k=1}^K\sum_{s=1}^k  (e_h^{s,k})^2\right)^{1/2}.\nonumber
\end{align}
Now by the definition $e^{s,k}_h = \hat{e}^{s,k}_h \cdot \II\{\hat{e}^{s,k}_h >\varepsilon\}$ and the fact that $|B_h^0| \le E_\varepsilon$,  we can have 
\begin{align}
    \sum_{h=1}^H \sum_{k=1}^K \hat{e}_h^{k,k} &\le HK\varepsilon + \sum_{h=1}^H \sum_{k=1}^K e_h^{k,k} \nonumber\\&\le HK\varepsilon + \min\{HE_\varepsilon, HK\} + \sum_{h=1}^H \sum_{s \in [K]\setminus B_h^0}e_h^{s,s}.\nonumber
\end{align}
Then, by \eqref{connect 1 BE dimension}, we can further bounded it by
\begin{align}
\sum_{h=1}^H \sum_{k=1}^K \hat{e}_h^{k,k}
    &\le HK\varepsilon + \min\{HE_\varepsilon, HK\} + \sum_{h=1}^H \left( E_\varepsilon(1+\log K)\sum_{k=1}^K\sum_{s=1}^{k-1}(e_h^{s,k})^2\right)^{1/2}\nonumber\\
    &\le HK\varepsilon + \min\{HE_\varepsilon, HK\}+\left( E_\varepsilon H(1+\log K)\sum_{h=1}^H\sum_{k=1}^K\sum_{s=1}^{k-1}(e_h^{s,k})^2\right)^{1/2}.\label{ineq:ehkk}
\end{align}
The last inequality uses the Jensen's inequality
Now we can use a similar technique in \citet{xie2021bellman}. Define $(r'_h)^{(i)}(s,a) = f_h^k(s,a) - \EE_{s'\sim \PP_h(\cdot \mid s,a)}\langle f_{h+1}^k(s',\cdot), \pi_{h+1}^k (\cdot \mid s')\rangle$. Then, we have
\begin{align*}
    \EE_{s_1\sim \rho}\left[f_1^k(s_1,\pi_1^k(s_1))\right] &= \EE_{\pi^k}\left[\sum_{h=1}^H \left(f_h^k(s_h,\pi_{h}^k(s_h))-f_{h+1}^k(s_{h+1},\pi_{h+1}^k (s_{h+1}))\right)\right]\\
    &=\EE_{\pi^k}\left[\sum_{h=1}^H \left(f_h^k(s_h,\pi_{h}^k(s_h))-\EE_{s'\sim \PP_h(\cdot \mid s,a)}\langle f_{h+1}^k(s',\cdot), \pi_{h+1}^k (\cdot \mid s')\rangle\right)\right]\\
    &= \EE_{\pi^k}\left[ \sum_{h=1}^H (r'_h)^{(i)}(s,a)\right].
\end{align*}
Hence, we can rewrite the regret of the $k$-th episode as 
\begin{align}
   \EE_{s_1\sim \rho}\left[(f_1^k(s_1,\pi_1^k(s_1))-V^{(i),\pi^k}(s_1))\right]
   = \EE_{\pi^k}\left[\sum_{h=1}^H ((r'_h)^{(i)}(s,a) - r_h^{(i)}(s,a))\right].\label{transform mdp BE dimension}
   \end{align}
   The last inequality uses the fact that 
   Then, substitute into the definition of $(r'_h)^{(i)}$, we can get 
   \begin{align}&\EE_{s_1\sim \rho}\left[(f_1^k(s_1,\pi_1^k(s_1))-V^{(i),\pi^k}(s_1))\right]\nonumber\\&\quad=
   \EE_{\pi^k}\left[\sum_{h=1}^H (f_h^k(s,a) - \EE_{s'\sim \PP_h(\cdot \mid s,a)}\langle f_{h+1}^k(s',\cdot), \pi_{h+1}^k (\cdot \mid s')\rangle - r_h^{(i)}(s,a))\right]\nonumber\\
   &\quad= \EE_{\pi^k}\left[\sum_{h=1}^H (I-\cT_h^{(i),\pi^k})(f_h^k)\right] = \sum_{h=1}^H \hat{e}_h^{k,k},\nonumber
\end{align}
where the first equality holds by the definition of $M_k$, the second equality holds by decomposing the value function to the expected cumulative sum of the reward function, and the last equality is derived by the definition of $\hat{e}_h^{k,k}.$
Now we can get
\begin{align}
\text{Reg}(K) \nonumber&\le \sum_{k=1}^K \sum_{h=1}^H \hat{e}_h^{k,k} \nonumber\\& \le HK\varepsilon + \min\{HE_\varepsilon, HK\}+\left( (E_\varepsilon H\cdot 2\log K)\sum_{h=1}^H\sum_{k=1}^K\sum_{s=1}^{k-1}(e_h^{s,k})^2\right)^{1/2}.\label{BE dimension connect 3}
\end{align}
The last inequality holds by \eqref{ineq:ehkk}. Now by the definition $e_h^{s,k} \le \hat{e}_h^{s,k} = \EE_{\pi^s}[(I-\cT_h^{(i),\pi^k})(f_h)]$ and the basic inequality $\sqrt{ab} \le \mu a + b/\mu$ for $\mu >0$, we can derive 
\begin{align}
    &\text{Reg}(K)\nonumber\\ 
    &\quad\le HK\varepsilon + \min\{HE_\varepsilon, HK\} + \mu \cdot (E_\varepsilon H\cdot 2\log K) + \frac{1}{\mu}\sum_{h=1}^H \sum_{k=1}^K \sum_{s=1}^{k-1}\left(\EE_{\pi^s}\left[(I-\cT_h^{(i),\pi^k})(f_h)\right]\right)^2\nonumber\\
    &\quad\le HK\varepsilon + \min\{HE_\varepsilon, HK\} + \mu \cdot (E_\varepsilon H\cdot 2\log K) + \frac{1}{\mu}\sum_{h=1}^H \sum_{k=1}^K \sum_{s=1}^{k-1}\EE_{\pi^s}\left[\left(I-\cT_h^{(i),\pi^k})(f_h)\right)^2\right].\nonumber
\end{align}
The last inequality holds by $(\EE[X])^2 \le \EE[X^2]$.  
Thus, by choosing $\varepsilon = 1/K$, we can derive 
\begin{align}
    \text{Reg}(K) &\le H + HE_{1/K} + \mu (E_{1/K} H\cdot 2 \log K) + \frac{1}{\mu}\sum_{h=1}^H \sum_{k=1}^K \sum_{s=1}^{k-1}\EE_{\pi^s}\left[\left(I-\cT_h^{(i),\pi^k})(f_h)\right)^2\right]\nonumber\\
    &\le 6 d_{\mathrm{MADC}}H + \mu d_{\mathrm{MADC}} + \frac{1}{\mu}\sum_{h=1}^H \sum_{k=1}^K \sum_{s=1}^{k-1}\EE_{\pi^s}\left[\left(I-\cT_h^{(i),\pi^k})(f_h)\right)^2\right],\nonumber
\end{align}
where $d_{\mathrm{MADC}} = \max \{2E_{1/K}H \log K,1\} = \cO(2\text{dim}_{\text{MABE}}(\cF,1/K)H\log K)$. 
The last inequality uses the fact that $\log K \ge 1$ and $H \ge 1.$
\end{proof}
\subsection{Proof of Theorem \ref{thm:low eluder dimension}}\label{appendix:low eluder dimension}
\begin{proof}
    For any policy $\pi$ and $i \in [n]$, assume $\delta_{z_1},\cdots,\delta_{z_m}$ is an $\varepsilon$-independent sequence with respect to $\bigcup_{\pi \in \Pi^{\text{pur}}} \cF_h^{(i),\pi}= \bigcup_{\pi \in \Pi^{\text{pur}}}(I-\cT^{(i),\pi})\cF$, where $\delta_{z_1}, \delta_{z_2},\cdots,\delta_{z_m} \in \cD_\Delta$, i.e. $\delta_{z_i}$ is a Dirichlet probability measure over $\cS\times \cA$ that  $\delta_{z_i}=\delta_{(s,a)}(\cdot)$. Then, for each $j \in [m]$, there exist function $f^j \in \cF^{(i)}$ and policy $\pi^j \in \Pi$ such that 
    $|(I-\cT^{(i),\pi^j})f_j(z_j)|>\varepsilon$ and $\sqrt{\sum_{p=1}^{j-1} |(f_h^p - \cT^{(i),\pi^p}f_{h+1}^p)(z_p)|^2} \le \varepsilon$. Define $g_h^j = \cT^{(i),\pi^j}f_{h+1}^j$, by Assumption~\ref{assum: realizability}, we have $g_h^j \in \cF_h^{(i)}\subseteq \cF_h.$ Thus, $|(f_h^j-g_h^j)(z_j)| > \varepsilon$ and $\sqrt{\sum_{p=1}^j |(f_h^p - g_h^p)(z_p)|^2}<\varepsilon.$ Thus, by the definition of eluder dimension, we have $m\le \mbox{dim}_{\mathrm{E}}(\cF_h, \varepsilon)$. Hence, for all $i$ and policy $\pi$, $$\dim_{\mathrm{E}}(\cF_h,\varepsilon) \ge m \ge \max_{h \in [H]} \max_{i \in [n]}\mbox{dim}_{\mathrm{DE}}\left(\bigcup_{\pi \in \Pi^{\text{pur}}}\cF_h^{(i),\pi}, \cD_{h,\Delta}, \varepsilon\right),$$
which concludes the proof.  
\end{proof}

\subsection{Proof of Theorem~\ref{thm:bilinear class}}\label{appendix: bilinear class}
\begin{proof}
 First,  by the elliptical potential lemma  introduced  in Lemma  \ref{lemma:elliptical }, if we define $\Lambda_{k,h} = \varepsilon I + \sum_{s=1}^{k-1}x_{k,h}x_{k,h}^T$, for any $\{x_{k,h}\}_{k=1}^K \in \cX_h$ we  have  
\begin{align}\label{ineq:information gain}
    \sum_{k=1}^K \sum_{h=1}^H \min\left\{1, \Vert x_{k,h}\Vert^2_{\Lambda_{k,h}^{-1}}\right\}\le \sum_{h=1}^H 2\log \det\left(I + \frac{1}{\varepsilon}\sum_{k=1}^K x_{k,h}x_{k,h}^T\right) = 2\gamma_K(\varepsilon,\cX).
\end{align}
Now denote $\Sigma_{k,h} = \varepsilon I + \sum_{s=1}^{k-1}X_h(\pi^s)X_h(\pi^s)^T$.
Similar to Section \ref{appendix:proof of BE},  define $(r'_h)^{(i)}(s,a) = f_h^k(s,a) - \EE_{s'\sim \PP_h(\cdot \mid s,a)}\langle f_{h+1}^k(s',\cdot), \pi_{h+1}^k (\cdot \mid s')\rangle \in [-1,1]$, then we can have 
\begin{align*}
   \EE_{s_1\sim \rho}\left[f_1^k(s_1,\pi_1^k(s_1))-V^{(i),\pi^k}(s_1)\right]= \EE_{\pi^k}\left[\sum_{h=1}^H ((r'_h)^{(i)}(s,a) - r_h^{(i)}(s,a))\right].\end{align*}
   Then, we can substitute the definition of $(r'_h)^{(i)}$ and derive \begin{align*}&\EE_{s_1\sim \rho}\left[f_1^k(s_1,\pi_1^k(s_1))-V^{(i),\pi^k}(s_1)\right]\\&\quad=
   \EE_{\pi^k}\left[\sum_{h=1}^H (f_h^k(s,a) - \EE_{s'\sim \PP_h(\cdot \mid s,a)}\langle f_{h+1}^k(s',\cdot), \pi_{h+1}^k (\cdot \mid s')\rangle - r_h^{(i)}(s,a))\right]\\
   &\quad= \sum_{h=1}^H\min\left\{\left|\langle W_h^{(i)}(f^k,\pi^k) - W_h^{(i)}(f^{\mu^{(i),\pi^k}},\mu^{(i),\pi^k}), X_h(\pi^k)\rangle_{\cV}\right|,2R\right\}.
\end{align*}
Then, by $\min\{x,2R\} \le 2R\min\{x,1\}$, we have 
\begin{small}
\begin{align}
    \text{Reg}(K) &=\sum_{k=1}^K \EE_{s_1\sim \rho}\left[(f_1^k(s_1,\pi_1^k(s_1))-V^{(i),\pi^k}(s_1))\right] \nonumber\\&\le 2R\sum_{k=1}^K\sum_{h=1}^H\min\left\{\left|\langle W_h^{(i)}(f^k,\pi^k) - W_h^{(i)}(f^{\mu^{(i),\pi^k}},\mu^{(i),\pi^k}), X_h(\pi^k)\rangle_{\cV}\right|,1\right\}\nonumber\\
    &= 2R\sum_{k=1}^K\sum_{h=1}^H\min\left\{\left|\langle W_h^{(i)}(f^k,\pi^k) - W_h^{(i)}(f^{\mu^{(i),\pi^k}},\mu^{(i),\pi^k}), X_h(\pi^k)\rangle_{\cV}\right|,1\right\}\nonumber\\&\qquad\cdot \left(\II\left\{\left\Vert X_h(\pi^k)\right\Vert_{\Sigma_{k,h}^{-1}}\le 1\right\} +\II\left\{\left\Vert X_h(\pi^k)\right\Vert_{\Sigma_{k,h}^{-1}}> 1\right\} \right)\label{decompose two terms}.\end{align}\end{small}
    The last inequality is because $1 = \II\{\cE\} + \II\{\lnot E\}$ for any event. 
    Now we decompose the \eqref{decompose two terms} into two terms $A+ B$, where
    \begin{small}
    \begin{align}
        A &= 2R\sum_{k=1}^K\sum_{h=1}^H\min\left\{\left|\langle W_h^{(i)}(f^k,\pi^k) - W_h^{(i)}(f^{\mu^{(i),\pi^k}},\mu^{(i),\pi^k}), X_h(\pi^k)\rangle_{\cV}\right|,1\right\}\cdot \II\left\{\left\Vert X_h(\pi^k)\right\Vert_{\Sigma_{k,h}^{-1}}\le 1\right\},\\
        B &= 2R\sum_{k=1}^K\sum_{h=1}^H\min\left\{\left|\langle W_h^{(i)}(f^k,\pi^k) - W_h^{(i)}(f^{\mu^{(i),\pi^k}},\mu^{(i),\pi^k}), X_h(\pi^k)\rangle_{\cV}\right|,1\right\}\cdot \II\left\{\left\Vert X_h(\pi^k)\right\Vert_{\Sigma_{k,h}^{-1}}> 1\right\}.\label{def of B bilinear}
    \end{align}
    \end{small}
    Now we bound $A$ and $B$ respectively. For $A$, we can use Cauchy's inequality and get 
    \begin{align}
    A&\le 2R\sum_{k=1}^K\sum_{h=1}^H\left\Vert W_h^{(i)}(f^k,\pi^k) - W_h^{(i)}(f^{\mu^{(i),\pi^k}},\mu^{(i),\pi^k})\right\Vert_{\Sigma_{k,h}}\cdot\left\Vert X_h(\pi^k)\right\Vert_{\Sigma_{k,h}^{-1}}\cdot\II\left\{\left\Vert X_h(\pi^k)\right\Vert_{\Sigma_{k,h}^{-1}}\le 1\right\} 
    \nonumber\\
    &\quad\le 2R\sum_{k=1}^K\sum_{h=1}^H\left\Vert W_h^{(i)}(f^k,\pi^k) - W_h^{(i)}(f^{\mu^{(i),\pi^k}},\mu^{(i),\pi^k})\right\Vert_{\Sigma_{k,h}}\cdot\min\left\{\left\Vert X_h(\pi^k)\right\Vert_{\Sigma_{k,h}^{-1}},1\right\}.\label{eq: A bound bilinear}
\end{align}
The first inequality holds by Cauchy's inequality that $|\langle X, Y\rangle| \le \Vert X\Vert_{\Sigma} \Vert Y\Vert_{\Sigma^{-1}}.$

Now by the definition $\Sigma_{k,h} = \varepsilon I + \sum_{s=1}^{k-1}X_h(\pi^s)X_h(\pi^s)^T$, we expand the term $\Vert W_h^{(i)}(f^k,\pi^k) - W_h^{(i)}(f^{\mu^{(i),\pi^k}},\mu^{(i),\pi^k})\Vert_{\Sigma_{k,h}}$ as 
\begin{small}
\begin{align*}
    &\left\Vert W_h^{(i)}(f^k,\pi^k) - W_h^{(i)}(f^{\mu^{(i),\pi^k}},\mu^{(i),\pi^k})\right\Vert_{\Sigma_{k,h}} \\&\quad= \left[\varepsilon \cdot \Big\Vert W_h^{(i)}(f^k,\pi^k) - W_h^{(i)}(f^{\mu^{(i),\pi^k}},\mu^{(i),\pi^k})\Big\Vert_2^2 + \sum_{s=1}^{k-1}\Big|\langle W_h^{(i)}(f^k,\pi^k) - W_h^{(i)}(f^{\mu^{(i),\pi^k}},\mu^{(i),\pi^k}),X_h(\pi^s)\rangle\Big|^2\right]^{1/2}\\
    &\quad\le 2\sqrt{\varepsilon}B_W+\left[\sum_{s=1}^{k-1}\Big|\langle W_h^{(i)}(f^k,\pi^k) - W_h^{(i)}(f^{\mu^{(i),\pi^k}},\mu^{(i),\pi^k}),X_h(\pi^s)\rangle\Big|^2\right]^{1/2}.
\end{align*}
\end{small}
The last inequality holds by $\sqrt{a+b} \le \sqrt{a} + \sqrt{b}.$
Then, we can get 
\begin{align}
    &\sum_{k=1}^K\sum_{h=1}^H\Vert W_h^{(i)}(f^k,\pi^k) - W_h^{(i)}(f^{\mu^{(i),\pi^k}},\mu^{(i),\pi^k})\Vert_{\Sigma_{k,h}}\min\Big\{\Vert X_h(\pi^k)\Vert_{\Sigma_{k,h}^{-1}},1\Big\}\nonumber\\&\quad\le \sum_{k=1}^K\sum_{h=1}^H\left(2\sqrt{\varepsilon}B_W+\left[\sum_{s=1}^{k-1}|\langle W_h^{(i)}(f^k,\pi^k) - W_h^{(i)}(f^{\mu^{(i),\pi^k}},\mu^{(i),\pi^k}),X_h(\pi^s)\rangle|^2\right]^{1/2}\right)\nonumber\\&\qquad\quad\cdot\min\Big\{\Vert X_h(\pi^k)\Vert_{\Sigma_{k,h}^{-1}},1\Big\} \le A_1+A_2. \label{eq: C, D bilinear}\end{align}
    where $A_1$ and $A_2$ are defined as follows:
    \begin{small}
    \begin{align}
        A_1 &= \left(\sum_{k=1}^K\sum_{h=1}^H 4\varepsilon B_W^2\right)^{1/2}\cdot \left(\sum_{k=1}^K\sum_{h=1}^H \min\Big\{\Vert X_h(\pi^k)\Vert^2_{\Sigma_{k,h}^{-1}},1\Big\}\right)^{1/2}\nonumber\\
        A_2 &= \left(\sum_{k=1}^K\sum_{h=1}^H \sum_{s=1}^{k-1}|\langle W_h^{(i)}(f^k,\pi^k) - W_h^{(i)}(f^{\mu^{(i),\pi^k}},\mu^{(i),\pi^k}),X_h(\pi^s)\rangle|^2\right)^{1/2}\nonumber\\& \hspace{10em} \cdot \left(\sum_{k=1}^K\sum_{h=1}^H \min\{\Vert X_h(\pi^k)\Vert^2_{\Sigma_{k,h}^{-1}},1\}\right)^{1/2},\nonumber
    \end{align}
    \end{small}
    Now we bound $A_1$ and $A_2$ respectively. First, for $A_1$, using \eqref{ineq:information gain}, we have 
    \begin{align}
        A_1 \le \sqrt{4\varepsilon KH B_W^2 \cdot 2\gamma_K(\varepsilon,\cX)}.\nonumber
    \end{align}
    Then, for $A_2$, we have  
    \begin{align}
        A_2 &= \left(\sum_{k=1}^K\sum_{h=1}^H \sum_{s=1}^{k-1}|\langle W_h^{(i)}(f^k,\pi^k) - W_h^{(i)}(f^{\mu^{(i),\pi^k}},\mu^{(i),\pi^k}),X_h(\pi^s)\rangle|^2\right)^{1/2}\sqrt{  2\gamma_K(\varepsilon,\cX)}\nonumber\\
        &=\left(\sum_{k=1}^K\sum_{s=1}^{k-1}\ell^{(i),s}(f^k,\pi^k)\right)^{1/2}\sqrt{ 2\gamma_K(\varepsilon,\cX)}.\nonumber
    \end{align}
    The equality holds by the definition of $\ell^{(i),s}$ and the definition of multi-agent bilinear class \eqref{eq: ma bilinear class}.

    Then, since $\sqrt{ab} \le a\mu + b/\mu$ for any $\mu>0$, we can further derive 
    \begin{align}
        A_2 \le 2R\mu \cdot 2\gamma_K(\varepsilon,\cX) + \frac{1}{2R\mu}\sum_{k=1}^K \sum_{s=1}^{k-1}\ell^{(i),s}(f^k,\pi^k).\nonumber
    \end{align}
    
Now by adding $A_1$ and $A_2$ and combining with \eqref{eq: A bound bilinear} and \eqref{eq: C, D bilinear}, we can finally get
\begin{align}
    A \le 2R(A_1 + A_2) &\le \sqrt{4\varepsilon K HB_W^2 \cdot 8R\gamma_K(\varepsilon,\cX)} + \mu \cdot 8R^2\gamma_K(\varepsilon, \cX) + \frac{1}{\mu} \sum_{k=1}^K \sum_{s=1}^{k-1} \ell^{(i),s}(f^k,\pi^k) \nonumber\\
    & \le 32R B_W^2 \varepsilon HK + \gamma_K(\varepsilon,\cX) + \mu \cdot 8R^2\gamma_K(\varepsilon,\cX) + \frac{1}{\mu} \sum_{k=1}^K \sum_{s=1}^{k-1} \ell^{(i),s}(f^k,\pi^k).\nonumber
\end{align}
Now we have complete the bound of $A$. For $B$, by \eqref{ineq:information gain}, since $\II\{x > 1\} \le \min\{1, x^2\}$, we know that 
\begin{align}
    \sum_{k=1}^K \sum_{h=1}^H \II\left\{\|X_h(\pi^k)\|_{\Sigma_{k,h}^{-1}}>1\right\} \le \sum_{k=1}^K \sum_{h=1}^H \min\left\{1,\|x_{k,h}\|^2_{\Sigma_{k,h}^{-1}}\right\} \le 2\gamma_K(\varepsilon,\cX).\label{B upper bound bilinear step1}
\end{align}
Thus, by the definition of $B$ in \eqref{def of B bilinear}, we can derive 
\begin{align}
    B\le 2 \sum_{k=1}^K \sum_{h=1}^H \II\left\{\|X_h(\pi^k)\|_{\Sigma_{k,h}^{-1}}>1\right\} \le 2\gamma_K(\varepsilon,\cX).\nonumber
\end{align}
Now note that $\text{Reg}(K) \le A + B$, then by choosing $\varepsilon = 1/32RKB_W^2$ and $d_{\mathrm{MADC}} = \max\{1,8R^2\gamma_K(\varepsilon,\cX)\}$   we can derive 
\begin{align}
    \text{Reg}(K) &\le A+ B\nonumber\\
    &\le 32RB_W^2 \varepsilon HK + 3\gamma_K(\varepsilon,\cX) + \mu \cdot 8R^2\gamma_K(\varepsilon,\cX) + \frac{1}{\mu}\sum_{k=1}^K \sum_{s=1}^{k-1}\ell^{(i),s}(f^k,\pi^k)\nonumber\\
    &= H + 3\gamma_K(\varepsilon,\cX) + \mu \cdot 8R^2\gamma_K(\varepsilon,\cX) + \frac{1}{\mu}\sum_{k=1}^K \sum_{s=1}^{k-1}\ell^{(i),s}(f^k,\pi^k)\nonumber\\
    &\le 6d_{\mathrm{MADC}}H + \mu d_{\mathrm{MADC}} + \frac{1}{\mu}\sum_{k=1}^K \sum_{s=1}^{k-1}\ell^{(i),s}(f^k,\pi^k).\nonumber
\end{align}
The last inequality uses the fact that $d_{\mathrm{MADC}}\ge 1, H\ge 1$. Hence, we complete the proof.


\end{proof}
\subsection{Proof of Theorem~\ref{thm:witness rank}}\label{appendix:proof of witnessrank}
\begin{proof}
In this subsection, we  give a detailed proof of Theorem~\ref{thm:witness rank}. First, similar to the performance difference lemma in \cite{jiang2017contextual}, we have
\begin{align}
    &\EE_{s_1\sim \rho}\left[V_{1,f}^{(i),\pi^k}(s_1)-V_{1}^{(i),\pi^k}(s_1)\right]\nonumber\\
    &\quad = \EE_{\pi^k}\left[Q_{1,f}^{(i),\pi^k}(s_1,a_1)\right]-\EE_{\pi^k}\left[\sum_{h=1}^H r_h^{(i)}(s_h,a_h)\right]\nonumber\\
    & \quad = \EE_{\pi^k}\left[\sum_{h=1}^H \left(Q_{h,f}^{(i),\pi^k}(s_h,a_h)-r_h^{(i)}(s_h,a_h)-Q_{h+1,f}^{(i),\pi^k}(s_{h+1}, a_{h+1})\right)\right]\label{eq: decompose sum}.
\end{align}
The last equality holds by splitting the term. 
Now, since \begin{align}\EE_{\pi^k}\left[Q_{h+1,f}^{(i),\pi^k}(s_{h+1},a_{h+1})\right] = \EE_{\pi^k}\left[V_{h+1,f}^{(i),\pi^k}(s_{h+1})\right] =\EE_{\pi^k} \left[\EE_{s_{h+1}\sim \PP_{h,f^*}(\cdot \mid s_h,a_h)}\left[V_{h+1,f}^{(i),\pi^k}(s_{h+1})\right]\right], \end{align}
we can rewrite \eqref{eq: decompose sum} as 
\begin{align}
    &\EE_{s_1\sim \rho}\left[V_{1,f}^{(i),\pi^k}(s_1)-V_{1}^{(i),\pi^k}(s_1)\right]\nonumber\\
    &= \EE_{\pi^k}\left[\sum_{h=1}^H \left(Q_{h,f}^{(i),\pi^k}(s_h,a_h)-r_h^{(i)}(s_h,a_h)-\EE_{s_{h+1}\sim \PP_{h,f^*}(\cdot \mid s_h,a_h)}V_{h+1,f}^{(i),\pi^k}(s_{h+1})\right)\right]\nonumber\\
    &=\sum_{k=1}^K\EE_{\pi^k}\left[\sum_{h=1}^H (\EE_{s_{h+1}\sim \PP_{h,f^k}(\cdot \mid s_h,a_h)}-\EE_{s_{h+1}\sim \PP_{h,f^*}(\cdot \mid s_h,a_h)})\left[V_{h+1,f^k}^{(i),\pi^k}(s_{h+1})\right]\right]\label{eq: witness step 1}.
\end{align}
Then, combining \eqref{eq: witness step 1} and the definition of multi-agent witness rank \eqref{condition: witness rank 2}, we can derive 
\begin{small}
\begin{align}
    &\sum_{k=1}^K \EE_{s_1\sim \rho}\left[V_{1,f^k}^{(i),\pi^k}(s_1)-V_{1}^{(i),\pi^k}(s_1)\right]\nonumber\\
    &\quad\le \sum_{k=1}^K\sum_{h=1}^H \min\left\{R,\frac{1}{\kappa_{\mathrm{wit}}}|\langle W_h(f^k), X_h(\pi^k)\rangle |\right\}\nonumber\\
    & \quad\le \sum_{k=1}^K\sum_{h=1}^H \min\left\{R,\frac{1}{\kappa_{\mathrm{wit}}}|\langle W_h(f^k), X_h(\pi^k)\rangle |\right\} \left(\II\Big\{\Vert X_h(\pi^k)\Vert_{\Sigma_{k,h}^{-1}}\le 1\Big\} + \II\Big\{\Vert X_h(\pi^k)\Vert_{\Sigma_{k,h}^{-1}}\ge 1\Big\}\right).\label{eq: witness step 3}
\end{align}
\end{small}
Now note that 
\begin{align*}
    \sum_{k=1}^K \min\left\{1,\Vert X_h(\pi^k)\Vert_{\Sigma_{k,h}^{-1}}^2\right\} \le 2d\log \left(\frac{\varepsilon+K}{\varepsilon}\right) \triangleq \cD(\varepsilon).
\end{align*}
and $ \II\{x>1\}\le \min\{1,x^2\}$, we can derive 
\begin{align}
    \sum_{k=1}^K \sum_{h=1}^H \II \{\Vert X_h(\pi^k)\Vert_{\Sigma_{k,h}^{-1}}> 1\} \le \cD(\varepsilon)H.\label{eq:witness step 2}
\end{align}
Then, combining \eqref{eq: witness step 3} and \eqref{eq:witness step 2}, we can get 
\begin{small}
\begin{align}
    &\sum_{k=1}^K \EE_{s_1\sim \rho}\left[V_{1,f^k}^{(i),\pi^k}(s_1)-V_{1}^{(i),\pi^k}(s_1)\right]\nonumber\\&\quad\le R\sum_{k=1}^K\sum_{h=1}^H \min\left\{1,\frac{1}{\kappa_{\mathrm{wit}}}|\langle W_h(f^k), X_h(\pi^k)\rangle |\right\}\cdot \left(\II\{\Vert X_h(\pi^k)\Vert_{\Sigma_{k,h}^{-1}}\le 1\} + \II\{\Vert X_h(\pi^k)\Vert_{\Sigma_{k,h}^{-1}}> 1\}\right)\nonumber\\
    &\quad\le R\sum_{k=1}^K\sum_{h=1}^H \min\left\{1,\frac{1}{\kappa_{\mathrm{wit}}}|\langle W_h(f^k), X_h(\pi^k)\rangle |\right\}\cdot \left(\II\{\Vert X_h(\pi^k)\Vert_{\Sigma_{k,h}^{-1}}\le 1\}\right) + \cD(\varepsilon)HR\nonumber\\
    &\quad\le R\underbrace{\sum_{k=1}^K\sum_{h=1}^H \frac{1}{\kappa_{\mathrm{wit}}}\Vert W_h(f^k)\Vert_{\Sigma_{k,h}}\min\left\{1,\Vert X_h(\pi^k)\Vert^2_{\Sigma_{k,h}^{-1}}\right\}}_{\displaystyle (A)} + \cD(\varepsilon)HR.\label{regret decompose step 1}
\end{align}
\end{small}
The last inequality uses the Cauchy's inequality $\langle X, Y \rangle \le \|X\|_A \|Y\|_{A^{-1}}$ and the fact that $x \cdot \II\{x\le 1\} \le \min\{1,x^2\}$.
Further, by the definition of $\Sigma_{k,h}$, we decompose the first term as 
\begin{align*}
    (A)&\le \frac{1}{\kappa_{\mathrm{wit}}}\sum_{k=1}^K\sum_{h=1}^H\left[\varepsilon\cdot \Vert W_h(f^k)\Vert_2^2 + \sum_{s=1}^k |\langle W_h(f^k), X_h(\pi^s)\rangle|^2\right]^{1/2}\min\left\{1,\Vert X_h(\pi^k)\Vert_{\Sigma_{k,h}^{-1}}\right\}\\
    &\le \frac{1}{\kappa_{\mathrm{wit}}}\sum_{k=1}^K\sum_{h=1}^H\left(\sqrt{\varepsilon}B_W  + \left[\sum_{s=1}^k |\langle W_h(f^k), X_h(\pi^s)\rangle|^2\right]^{1/2}\right)\min\left\{1,\Vert X_h(\pi^k)\Vert_{\Sigma_{k,h}^{-1}}\right\}.
\end{align*}
The second inequality is derived by the inequality $\Vert W_h(f^k)\Vert \le B_W$ and $\sqrt{a+b}\le \sqrt{a}+ \sqrt{b}$.
Now sum over $k \in [K]$ and $h \in [H]$, we can get
\begin{align}
    (A)&\le \sum_{k=1}^K\sum_{h=1}^H \frac{1}{\kappa_{\mathrm{wit}}}\left(\sqrt{\varepsilon}B_W  + \left[\sum_{s=1}^k |\langle W_h(f^k), X_h(\pi^s)\rangle|^2\right]^{1/2}\right)\min\left\{1,\Vert X_h(\pi^k)\Vert_{\Sigma_{k,h}^{-1}}\right\}\nonumber\\
    &\le \underbrace{\frac{1}{\kappa_{\mathrm{wit}}}\sum_{k=1}^K \sum_{h=1}^H \sqrt{\varepsilon} B_W \min\left\{1,\Vert X_h(\pi^k)\Vert_{\Sigma_{k,h}^{-1}}\right\}}_{\displaystyle (X)} \nonumber\\ &\qquad + \underbrace{\frac{1}{\kappa_{\mathrm{wit}}}\sum_{k=1}^K \sum_{h=1}^H \left[\sum_{s=1}^k |\langle W_h(f^k), X_h(\pi^s)\rangle|^2\right]^{1/2}\min\left\{1,\Vert X_h(\pi^k)\Vert_{\Sigma_{k,h}^{-1}}\right\}}_{\displaystyle (Y)}.\label{A decompose to X and Y}
    \end{align}
First, we try to give an upper bound for (X). By Cauchy's inequality and \eqref{eq:witness step 2}, we can derive 
\begin{align}
   \displaystyle (X) &\le \frac{1}{\kappa_{\mathrm{wit}}}\left(\sum_{k=1}^K \sum_{h=1}^H \varepsilon B_W^2 \right)^{1/2} \left(\sum_{k=1}^K \sum_{h=1}^H \min\left\{1,\Vert X_h(\pi^k)\Vert_{\Sigma_{k,h}^{-1}}^2\right\}\right)^{1/2}\nonumber\\
    &\le \frac{1}{\kappa_{\mathrm{wit}}}\sqrt{HK\varepsilon B_W^2 \cdot \cD(\varepsilon) H}\le \frac{HK\varepsilon B_W^2}{\kappa_{\mathrm{wit}}^2} + \cD(\varepsilon)H.\label{eq: X upper bound}
\end{align}
On the other hand, for (Y), we can bound it using Cauchy's inequality that $\sum_{a,b} \sqrt{ab} \le \sqrt{(\sum_a a)\cdot (\sum_b b)}$, 
\begin{align}
   \displaystyle (Y) &\le \frac{1}{\kappa_{\mathrm{wit}}}\left(\left(\sum_{k=1}^K \sum_{h=1}^H \sum_{s=1}^k |\langle W_h(f^k), X_h(\pi^s)\rangle|^2\right)\left(\sum_{k=1}^K \sum_{h=1}^H \min\left\{1,\Vert X_h(\pi^k)\Vert_{\Sigma_{k,h}^{-1}}^2\right\}\right)\right)^{1/2}\nonumber\\
    &\le \frac{1}{\kappa_{\mathrm{wit}}}\sqrt{\cD(\varepsilon)H\left(\sum_{k=1}^K \sum_{h=1}^H \sum_{s=1}^k |\langle W_h(f^k), X_h(\pi^s)\rangle|^2\right)}.\nonumber
\end{align}
The last inequality holds by the definition of $\cD(\varepsilon)$ in \ref{eq:witness step 2}.
Now by the definition of multi-agent witness rank \ref{condition: witness rank 1}, we note that 
\begin{align}
    &|\langle W_h(f^k), X_h(\pi^s)\rangle|^2\nonumber\\& \quad\le \left(\max_{v \in \cV_h}\EE_{(s_h,a_h)\sim \pi}[(\EE_{s_{h+1}\sim  \PP_{h,f^k}(\cdot \mid s_h,a_h)}-\EE_{s_{h+1}\sim \PP_{h,f^*}(\cdot \mid s_h,a_h)})v(s_h,a_h,s_{h+1})] \right)^2\nonumber\\&\quad\le \max_{v \in \cV_h}
     \EE_{(s_h,a_h)\sim \pi^s}\left[\left((\EE_{s_{h+1}\sim  \PP_{h,f^k}(\cdot \mid s_h,a_h)}-\EE_{s_{h+1}\sim \PP_{h,f^*}(\cdot \mid s_h,a_h)})v(s_h,a_h,s_{h+1})\right)^2\right]\nonumber\\
     &\quad\le 
     \EE_{(s_h,a_h)\sim \pi^s}\left[\max_{v \in \cV_h}\left((\EE_{s_{h+1}\sim  \PP_{h,f^k}(\cdot \mid s_h,a_h)}-\EE_{s_{h+1}\sim \PP_{h,f^*}(\cdot \mid s_h,a_h)})v(s_h,a_h,s_{h+1})\right)^2\right]\nonumber
\end{align}
The last two inequalities use Jensen's inequality. Hence, by the definition of total variation distance, we can get
\begin{align}
     |\langle W_h(f^k), X_h(\pi^s)\rangle|^2
     &\le \mathrm{TV}\left(\PP_{h,f^k}(\cdot \mid s_h,a_h), \PP_{h,f^*}(\cdot \mid s_h,a_h)\right)^2\label{eq:TV distance}\\
     &\le 2D_\mathrm{H}^2 \left(\PP_{h,f^k}(\cdot \mid s_h,a_h), \PP_{h,f^*}(\cdot \mid s_h,a_h)\right),\label{ineq: DH}
\end{align}
where the $\mathrm{TV}(\cdot,\cdot)$ denotes the total variation distance and $D_{\mathrm{H}}$ denotes the Hellinger divergence. The inequality \eqref{eq:TV distance} holds by the fact that $v(s_h,a_h,s_{h+1}) \in [0,1]$, and the \eqref{ineq: DH} holds by the relationship between TV distance and Hellinger distance.
Then, we can substitute the inequality \eqref{ineq: DH} and get 
\begin{align}
    (Y)&\le \frac{1}{\kappa_{\mathrm{wit}}}\sqrt{\cD(\varepsilon)H \left(\sum_{k=1}^K \sum_{h=1}^H \sum_{s=1}^k \EE_{(s_h,a_h)\sim \pi^s}2D_\mathrm{H}^2 \left(\PP_{h,f^k}(\cdot \mid s_h,a_h), \PP_{h,f^*}(\cdot \mid s_h,a_h)\right)\right)}\nonumber\\
    &\le \mu R\cdot \frac{2\cD(\varepsilon)H}{\kappa_{\mathrm{wit}}^2} + \frac{1}{\mu R}\left(\sum_{k=1}^K \sum_{h=1}^H \sum_{s=1}^k \EE_{(s_h,a_h)\sim \pi^s}D_\mathrm{H}^2 \left(\PP_{h,f^k}(\cdot \mid s_h,a_h), \PP_{h,f^*}(\cdot \mid s_h,a_h)\right)\right)\label{eq: Y upper bound}
\end{align}
Hence, combining \eqref{regret decompose step 1}, \eqref{A decompose to X and Y}, \eqref{eq: X upper bound} and \eqref{eq: Y upper bound}, we can get
\begin{align}
    \text{Reg}(K) &= \sum_{k=1}^K \EE_{s_1\sim \rho}\left[V_{1,f^k}^{(i),\pi^k}(s_1)-V_{1}^{(i),\pi^k}(s_1)\right]\nonumber\\
    &\le R\cdot A + \cD(\varepsilon) HR\nonumber\\
    &\le R(X+Y) + \cD(\varepsilon) HR\nonumber\\
    &\le HKR \varepsilon B_W^2/\kappa_{\mathrm{wit}}^2 + \cD(\varepsilon) HR \nonumber\\&\qquad \quad+ \mu R^2\cdot \frac{2\cD(\varepsilon)H}{\kappa_{\mathrm{wit}}^2} + \frac{1}{\mu}\left(\sum_{k=1}^K \sum_{h=1}^H \sum_{s=1}^k \EE_{(s_h,a_h)\sim \pi^s}D_\mathrm{H}^2 \left(\PP_{h,f^k}(\cdot \mid s_h,a_h), \PP_{h,f^*}(\cdot \mid s_h,a_h)\right)\right)\label{final regret: witness}.
\end{align}
Now by the definition of $\ell^{(i),s}$ of the model-based problem in \eqref{def:modelbased discrepancy}, choosing $\varepsilon=\kappa_{\mathrm{wit}}^2/HKB_W^2$ and $d_{\mathrm{MADC}} = \frac{2R^2\cD(\varepsilon)H}{\kappa_{\mathrm{wit}}^2},$ 
we can get 
\begin{align}
    \text{Reg}(K) 
    &\le 6d_{\mathrm{MADC}}H  + \mu \cdot d_{\mathrm{MADC}} + \frac{1}{\mu} \sum_{k=1}^K \sum_{s=1}^{k-1} \ell^{(i),s}(f^k,\pi^k)\nonumber
\end{align}

complete the proof by $\cD(\kappa_{\mathrm{wit}}^2/HKB_W^2) = \widetilde{\cO}(d)$.
\end{proof}
\subsection{Proof of Theorem \ref{thm:linear mixture}}\label{appendix:proof of linear mixture}
\begin{proof}
First, we fix an index $i \in [n]$. Similar to Section \ref{appendix:proof of witnessrank}, we can get 
\begin{align*}
    &\sum_{k=1}^K \EE_{s_1\sim \rho}\left[V_{1,f^k}^{(i),\pi^k}(s_1)-V_{1}^{(i),\pi^k}(s_1)\right]\\
    &\quad=\sum_{k=1}^K\EE_{\pi^k}\left[\sum_{h=1}^H (Q_{h,f^k}^{(i),\pi^k}(s_h,a_h)-r_h^{(i)}(s_h,a_h)-\EE_{s_{h+1}\sim \PP_{h,f^*}(\cdot \mid s_h,a_h)}V_{h+1,f^k}^{(i),\pi^k}(s_{h+1}, a_{h+1}))\right]\\
    &\quad= \sum_{k=1}^K\EE_{\pi^k}\left[\sum_{h=1}^H (\EE_{s_{h+1}\sim \PP_{h,f^k}(\cdot \mid s_h,a_h)}-\EE_{s_{h+1}\sim \PP_{h,f^*}(\cdot \mid s_h,a_h)})[V_{h+1,f^k}^{(i),\pi^k}(s_{h+1})]\right]\\
    &\quad= \sum_{k=1}^K \sum_{h=1}^H (\theta_{h,f^k}-\theta_h^*)^T\EE_{\pi^k}\left[\int_\cS \phi_h(s'\mid s,a) V_{h+1,f^k}^{(i),\pi^k}(s') \mathrm{d}s \right],
\end{align*}
where the last equality is because of the property of the linear mixture MG.

Now we denote \begin{gather}W_{h}(f) = R(\theta_{h,f}-\theta_h^*)\label{def of W in linear mixture}\\ X_{h}(f,\pi) = \EE_{\pi}\left[\frac{\int_\cS \phi_h(s'\mid s,a) V_{h+1,f}^{(i),\pi}(s') \mathrm{d}s}{R} \right].\label{def of X in linear mixture}\end{gather} Then, we have  $\Vert W_h(f)\Vert \le 2\sqrt{d}$, $\Vert X_h(f,\pi)\Vert \le 1$ and  
\begin{align*}
    \sum_{k=1}^K \EE_{s_1\sim \rho}\left[V_{1,f^k}^{(i),\pi^k}(s_1)-V_{1}^{(i),\pi^k}(s_1)\right] \le \sum_{k=1}^K \sum_{h=1}^H \min\{\langle W_{h}(f^k) , X_h(f^k,\pi^k)\rangle,R\}.
\end{align*}
Now similar to Section \ref{appendix:proof of witnessrank}, if we replace $X_h(\pi^k)$ to $X_h(f^k,\pi^k)$, from \eqref{eq: X upper bound} and \eqref{eq: Y upper bound} with $B_W = 2\sqrt{d}R$ we can 
get 
\begin{align}
    &\sum_{k=1}^K \EE_{s_1\sim \rho}\left[V_{1,f^k}^{(i),\pi^k}(s_1)-V_{1}^{(i),\pi^k}(s_1)\right]\nonumber\\
    &\quad\le HKR\varepsilon 4d R^2 +  \cD(\varepsilon) H R + \mu R^4 \cdot 2\cD(\varepsilon)H + \frac{1}{\mu R^2}\left(\sum_{k=1}^K \sum_{h=1}^H \sum_{s=1}^k \langle W_h(f^k), X_h(f^s,\pi^s)\rangle^2\right),
 \label{ineq:linear mixture}
\end{align}
where $\cD(\varepsilon) = 2d\log \left(\frac{\varepsilon+K}{\varepsilon}\right)$.
Moreover, by \eqref{def of W in linear mixture} and \eqref{def of X in linear mixture}, note that 
\begin{align*}
    \langle W_h(f^k), X_h(f^s,\pi^s)\rangle &=(\theta_{h,f^k}-\theta_h^*)^T \EE_{\pi^s}\left[\int_\cS \phi_h(s'\mid s,a) V_{h+1,f^s}^{(i),\pi^s}(s')\mathrm{d}s\right]\\
    &= \EE_{\pi^s}\left[(\EE_{s_{h+1}\sim \PP_{h,f^k}(\cdot \mid s_h,a_h)}-\EE_{s_{h+1}\sim \PP_{h,f^*}(\cdot \mid s_h,a_h}) [V_{h+1,f^s}^{(i),\pi^s}(s_{h+1})]\right]\\
    &\le \EE_{\pi^s}\left[2\Vert V_{h+1,f^s}^{(i),\pi^s}(\cdot) \Vert_\infty \cdot d_{\mathrm{TV}}(\PP_{h,f^k}(\cdot \mid s_h,a_h)\Vert \PP_{h,f^*}(\cdot \mid s_h,a_h))\right]\\
    &\le \EE_{\pi^s}\left[2\sqrt{2}R D_{\mathrm{H}}(\PP_{h,f^k}(\cdot \mid s_h,a_h)\Vert \PP_{h,f^*}(\cdot \mid s_h,a_h))\right].
\end{align*}
Hence, from \eqref{ineq:linear mixture} and Jensen's inequality that $(\EE[X])^2 \le \EE[X^2]$, we can have 
\begin{align*}
    \text{Reg}(K) &\le \sum_{k=1}^K \EE_{s_1\sim \rho}\left[V_{1,f^k}^{(i),\pi^k}(s_1)-V_{1}^{(i),\pi^k}(s_1)\right]\\&  \le HKR\varepsilon 4d R^2 +  \cD(\varepsilon) H R + \mu R^4 \cdot 2\cD(\varepsilon)H \\&\qquad  + \frac{1}{\mu  R^2}\left(\sum_{k=1}^K \sum_{h=1}^H \sum_{s=1}^k \EE_{(s_h,a_h)\sim \pi^s}\left[8R^2D_\mathrm{H}^2 \left(\PP_{h,f^k}(\cdot \mid s_h,a_h), \PP_{h,f^*}(\cdot \mid s_h,a_h)\right)\right]\right).
\end{align*}
By the definition of discrepancy function $\ell^{(i),s}$ in \eqref{def:modelbased discrepancy}, and choosing $\varepsilon=1/HKd$, $d_{\mathrm{MADC}} = HR^4\cD(1/HKd) = \widetilde{\cO}(HdR^4)$, we can derive 
\begin{align}
    \text{Reg}(K) &\le 4R^3 + \cD(1/HKd) HR + \mu d_{\mathrm{MADC}} + \frac{1}{\mu} \sum_{k=1}^K \sum_{s=1}^{k-1}\ell^{(i),s}(f^k,\pi^k)\nonumber\\
    &\le 6d_{\mathrm{MADC}} H + \mu d_{\mathrm{MADC}} + \frac{1}{\mu} \sum_{k=1}^K \sum_{s=1}^{k-1}\ell^{(i),s}(f^k,\pi^k).\nonumber
\end{align}
Hence, we complete the proof.
\end{proof}
\subsection{Proof of Lemma~\ref{lemma:concentration}}\label{sec: proof of lemma concentration}
\begin{proof}
    The proof is modified from \cite{zhong2022gec}. Define $\cW_{j,h}$ be the filtration induced by $\{s_1^k, a_1^k, r_1^{(i),k},\cdots, s_{H}^k, a_H^k, r_H^{(i),k}\}_{k=1}^{j-1}$.
    First, for $h \in [H], i \in [n], f \in \cF^{(i)}$ and $ \pi \in \Pi$, we define the random variable 
    \begin{align*}
        Y_j^{(i)}(h,f,\zeta^k) = \Big(&f_h(s_h^j, a_h^j)-r_h^{(i)}(s_h^j,a_h^j)-\langle f_{h+1}(s_{h+1}^j\cdot), \zeta^k_{h+1}(\cdot \mid s_{h+1}^j)\rangle \Big)^2 \\&-\Big(\cT^{ (i),\zeta^k}_h(f)(s_h^j, a_h^j)-r_h^{(i)}(s_h^j,a_h^j)-\langle f_{h+1}(s_{h+1}^j\cdot), \zeta^k_{h+1}(\cdot \mid s_{h+1}^j)\rangle\Big)^2.
    \end{align*}
    By taking conditional expectation of $Y_j$ with respect to $a_h^j,s_h^j$, 
    we can get 
    \begin{align*}
        \EE[Y_j^{(i)}(h,f,\zeta^k)\mid \cW_{j,h}] &= \EE_{s_h,a_h\sim \zeta^j}[(f_h-\cT_h^{(i),\zeta^k}(f)) (s_h,a_h)]^2
    \end{align*}
    and 
    \begin{align*}
        \EE[(Y_j^{(i)}(h,f,\zeta^k))^2\mid \cW_{j,h}] &\le 2R^2 \EE[Y_j^{(i)}(h,f,\zeta^k)\mid \cW_{j,h}],
    \end{align*}
    where $3R\ge |
f_h(s_h^j, a_h^j) - r_h^{(i)}(s_h^j,a_h^j)-\langle f_{h+1}(s_{h+1}^j\cdot), \zeta^k_{h+1}(\cdot \mid s_{h+1}^j)\rangle )|$ is the constant upper bound. 
Denote $Z_j = Y_j^{(i)}(h,f,\zeta^k)-\EE_{s_{h+1}}[Y_j^{(i)}(h,f,\zeta^k)\mid \cW_{j,h}]$ with $|Z_j|\le 4R^2$. By the Freedman inequality, for any $0<\eta<\frac{1}{4R}$, with probability at least $1-\delta$,
\begin{align*}
    \sum_{j=1}^k Z_j &= \cO\left(\eta \sum_{j=1}^k \mbox{Var}[Y_j^{(i)}(h,f,\zeta^k)\mid \cW_{j,h}] + \frac{\log(1/\delta)}{\eta} \right)\\
    &\le \cO\left(\eta\sum_{j=1}^k \EE[(Y_j^{(i)}(h,f,\zeta^k))^2\mid \cW_{j,h}] + \frac{\log(1/\delta)}{\eta}\right)\\
    &\le \cO\left(\eta\sum_{j=1}^k 2R^2 \EE[Y_j^{(i)}(h,f,\zeta^k)\mid \cW_{j,h}] + \frac{\log(1/\delta)}{\eta}\right).
\end{align*}
By choosing $\eta = \min\left\{\frac{1}{4R}, \frac{\sqrt{\log (1/\delta)}}{\sqrt{2}R\sqrt{\sum_{j=1}^k\EE[Y_j^{(i)}(h,f,\zeta^k)\mid \cW_{j,h}]}}\right\}$, we will have 
\begin{align*}
    \sum_{j=1}^k Z_j &=\cO\left(R\sqrt{\sum_{j=1}^k\EE[Y_j^{(i)}(h,f,\zeta^k)\mid \cW_{j,h}]\log(1/\delta)} + R^2\log(1/\delta)\right).
\end{align*}
Similarly, if we apply the Freedman's inequality with $-\sum_{j=1}^k Z_j$, with probability at least $1-2\delta$,
\begin{align*}
    \Bigg|\sum_{j=1}^k Z_j \Bigg|&=\cO\left(R\sqrt{\sum_{j=1}^k\EE[Y_j^{(i)}(h,f,\zeta^k)\mid \cW_{j,h}]\log(1/\delta)} + R^2\log(1/\delta)\right).
\end{align*}
Denote the $\rho$-covering set of $\cF^{(i)}$ as $\cC_{\cF^{(i)}}(\rho)$, 
then for any $f \in \cF^{(i)}, \zeta \in \Pi_i^{\text{pur}}$, there exists a pair $\tilde{f}  \in \cC_{\cF^{(i)}}(\rho)$ such that 
\begin{align*}
    \Big|\Big(f_h(s_h, a_h)-&r_h^{(i)}(s_h,a_h)-\langle f_{h+1}(s_{h+1},\cdot),\zeta_{h+1}(\cdot\mid s_{h+1})\rangle\Big)\\&-\Big(\tilde{f}_h(s_h, a_h)-r_h^{(i)}(s_h,a_h)-\langle\tilde{f}_{h+1}(s_{h+1},\cdot),\zeta_{h+1}(\cdot \mid s_{h+1})\rangle\Big)\Big|\le 3\rho
\end{align*}
for all $(s_h, a_h, s_{h+1}) \in \cS \times \cA\times \cS.$ Now by taking a union bound over $\cC_{\cF^{(i)}}(\rho)$, we have that with probability at least $1-\delta$, for all $\tilde{f} \in \cC_{\cF^{(i)}}(\rho)$,
\begin{align}
    &\ \ \ \ \ \Bigg|\sum_{j=1}^k \tilde{Y}_j^{(i)}(h,\tilde{f}, \zeta) -\sum_{j=1}^k \EE[\tilde{Y}_j^{(i)}(h,\tilde{f}, \zeta)\mid \cW_{j,h}]\Bigg|\nonumber\\&=\cO\left(R\sqrt{\sum_{j=1}^k\EE[\tilde{Y}_j^{(i)}(h,\tilde{f},\zeta)\mid \cW_{j,h}]\iota} + R^2\iota\right),\label{ineq:loose}
\end{align}
where $\iota = 2\log(HK|\cC_{\cF^{(i)}}(\rho)|/\delta) \le 2\log(HK\cN_{\cF^{(i)}}(\rho))$.

Now note that for all $f \in \cF^{(i)}, \zeta \in \Pi_i^{\mathrm{pur}}$, we have 
\begin{align*}
   &\sum_{h=1}^H \sum_{j=0}^{k-1} Y_j^{(i)}(h,f,\zeta) \\&\quad= \sum_{h=1}^H \sum_{j=0}^{k-1}(f_h(s_h^j, a_h^j)-r_h^{(i)}(s_h^j,a_h^j)-\langle f_{h+1}(s_{h+1}^j,\cdot),\zeta_{h+1}(\cdot \mid s_{h+1}^j)\rangle)^2 \\&\qquad\quad-(\cT^{(i),\zeta}_h(f)(s_h^j, a_h^j)-r_h^{(i)}(s_h^j,a_h^j)-\langle f_{h+1}(s_{h+1}^j,\cdot),\zeta_{h+1}(\cdot \mid s_{h+1}^j)\rangle)^2\\
    &\quad\le \sum_{h=1}^H \sum_{j=0}^{k-1}(f_h(s_h^j, a_h^j)-r_h^{(i)}(s_h^j,a_h^j)-\langle f_{h+1}(s_{h+1}^j,\cdot),\zeta_{h+1}(\cdot \mid s_{h+1}^j)\rangle)^2 \\&\qquad\quad-\inf_{f'_h \in \cF_h^{(i)}}(\cT^{(i),\zeta^k}_h(f')(s_h^j, a_h^j)-r_h^{(i)}(s_h^j,a_h^j)-\langle f_{h+1}(s_{h+1}^j,\cdot),\zeta_{h+1}(\cdot \mid s_{h+1}^j)\rangle)^2\\&\quad=  L^{(i),k-1}(f,\zeta,\tau^{1:k-1}).
\end{align*}
Then, by \eqref{ineq:loose} we can get 
\begin{align*}
    \sum_{h=1}^H \sum_{j=0}^{k-1} \EE[\tilde{Y}_j^{(i)}(h,\tilde{f}, \zeta)\mid \cW_{j,h}] \le 4 L^{(i),k-1}(\tilde{f},\zeta,\tau^{1:k-1})+\cO(HR^2\iota).
\end{align*}

Now similar to \citep{jin2021bellman}, by the definition of $\rho$-covering number, for any $k \in [K]$, $f \in\cF^{(i)}$ and $\zeta \in \Pi_i^{\text{pur}}$, 
\begin{align*}
    \sum_{h=1}^H\sum_{j=0}^{k-1} \EE[Y_j^{(i)}(h,f, \zeta)\mid \cW_{j,h}] \le 4 L^{(i),k-1}(f,\zeta,\tau^{1:k-1})+\cO(HR^2\iota + HRk\rho).
\end{align*}
Now since $s_h^j,a_h^j \sim \zeta^j$, we can have 
\begin{align*}
    \sum_{j=0}^{k-1}\ell^{j,(i)}(f,\zeta^k)& = \sum_{j=0}^{k-1} \EE[Y_j^{(i)}(h,f, \zeta^k)\mid \cW_{j,h}]\\ &\le 4 L^{(i),k-1}(f,\zeta^k,\tau^{1:k-1})+\cO(HR^2\iota + HRk\rho).
\end{align*}
We complete the proof by choosing $\rho = 1/K$ and choose $\varepsilon_{\mathrm{conc}} = \cO(HR^2\iota + HRk\rho) = \cO(HR^2\iota)$.
\end{proof}
\subsection{Proof of Lemma~\ref{lemma:optimal concentration}}\label{sec: proof of lemma optimal concentration}
\begin{proof}
    First, for any $f \in \cF^{(i)}$ and $\pi \in \Pi^{\mathrm{pur}}$ we define  the random variable 
    \begin{align*}
        Q_j^{(i)}(h,f, \pi) = (f_h&(s_h^j,a_h^j)-r_h^{(i)}(s_h^j,a_h^j) - \langle f^*_{h+1}(s_{h+1}^j,\cdot),\pi_{h+1}(\cdot \mid s_{h+1}^j)\rangle)^2  \\&-(f_h^*(s_h^j,a_h^j)-r_h^{(i)}(s_h^j,a_h^j) - \langle f^*_{h+1}(s_{h+1}^j,\cdot),\pi_{h+1}(\cdot \mid s_{h+1}^j)\rangle)^2.
    \end{align*}
    Then, by similar derivations in Lemma~\ref{lemma:concentration}, we can get 
    \begin{align*}
        \EE[Q_j^{(i)}(h,f,\pi)\mid \cW_{j,h}] = \EE_{s_h,a_h\sim \zeta^j}[(f_h - \cT^{(i),\pi}(f^*))(s_h,a_h)]^2 \ge 0,\\
        \EE[(Q_j^{(i)}(h,f,\pi))^2\mid \cW_{j,h}] \le 2R^2   \EE[Q_j^{(i)}(h,f,\pi)\mid \cW_{j,h}].
    \end{align*}
    Then, by Freedman's inequality, with probability at least $1-\delta$,  for all elements in $\tilde{f}\in \cC_{\cF^{(i)}}(\rho)$, we have 
    \begin{align*}
        &\ \ \ \ \ \Bigg|\sum_{j=0}^{k-1} \tilde{Q}_j^{(i)}(h,\tilde{f}, \pi) -\sum_{j=0}^{k-1} \EE[\tilde{Q}_j^{(i)}(h,\tilde{f}, \pi)\mid \cW_{j,h}]\Bigg|\\&=\cO\left(R\sqrt{\sum_{j=0}^{k-1}\EE_{s_{h+1}}[\tilde{Q}_j^{(i)}(h,\tilde{f},\pi)\mid \cW_{j,h}]\iota} + R^2\iota\right),
    \end{align*}
    then we can have
    \begin{align*}
        \sum_{j=0}^{k-1} \tilde{Q}_j^{(i)}(h,\tilde{f},\pi) \ge -\cO(R^2\iota).
    \end{align*}
    Thus, by the definition of $\cC_{\cF^{(i)}}(\rho)$, for all $f \in \cF^{(i)}$ and $\pi \in \Pi_i^{\text{pur}}$, we have 
    \begin{align*}
        -\sum_{j=0}^{k-1} Q_j^{(i)}(h,f, \pi) \le \cO(R^2 \iota + Rk\rho).
    \end{align*}
    Thus, 
    \begin{align*}
        L^{(i),k}(f,\pi)  = \sum_{h=1}^H\left(-\inf_{f \in \cF^{(i)}}\sum_{j=0}^{k-1} Q_j^{(i)}(h,f, \pi)\right)\le \cO(HR^2\iota + HRk\rho) = \cO(HR^2\iota).
    \end{align*}
    Thus, we complete the proof.
\end{proof}
\subsection{Proof of Lemma \ref{lemma:concentration:model-based}}\label{sec: proof of lemma concentration model based}
\begin{proof}
For simplicity, we first assume $\cF$ is a finite class.
Given a model $f \in \cF$ and $h \in [H]$, we define $X_{h,f}^j = \log \frac{\PP_{h,f^*}(s_{h+1}^j\mid s_h^j, a_h^j)}{\PP_{h,f}(s_{h+1}^j\mid s_h^j, a_h^j)}$.
Thus, 
\begin{align}\label{eq:decompose L diff}
    L^{(i),k}(f^*, \tau^{1:k}) - L^{(i),k}(f,\tau^{1:k}) = -\sum_{h=1}^H \sum_{j=1}^k X_{h,f}^j.
\end{align}
Now we define the filtration $\cG_{j}$ as 
\begin{align*}
    \cG_{j} = \sigma(\{s_h^1,a_h^1,\cdots,s_h^j,a_h^j\}).
\end{align*}
Then, by Lemma~\ref{lemma:foster}
for all $f \in \cF$, with probability at least $1-\delta$, we have 
\begin{align*}
    -\sum_{j=1}^k X_{h,\bar{f}}^j \le \sum_{j=1}^k \log \EE\left[\exp\left\{-\frac{1}{2}X_{h,\bar{f}}^j\right\}\Bigg|\  \cG_{j-1}\right] + \log (H |\cF|/\delta).
\end{align*}
Now we decompose the first term at the right side as 
\begin{align*}
     &\EE\left[\exp\left\{-\frac{1}{2}X_{h,\bar{f}}^j\right\}\Bigg|\  \cG_{j-1}\right]\\
     &\quad= \EE\left[\sqrt{\frac{\log \PP_{h,f}(s_{h+1}^j\mid s_h^j,a_h^j)}{\PP_{h,f^*}(s_{h+1}^j\mid s_h^j,a_h^j)}}\Bigg|\  \cG_{j-1}\right]\\
     &\quad= \EE_{(s_h^j,a_h^j)\sim \pi^j}\EE_{s_{h+1}\sim \PP_{h,f^*}(\cdot \mid s_h^j,a_h^j)}\left[\sqrt{\frac{ \PP_{h,f}(s_{h+1}^j\mid s_h^j,a_h^j)}{\PP_{h,f^*}(s_{h+1}^j\mid s_h^j,a_h^j)}}\Bigg|\  \cG_{j-1}\right]\\&\quad= 
      \EE_{(s_h^j,a_h^j)\sim \pi^j}\left[\int\sqrt{ \PP_{h,f}(s_{h+1}^j\mid s_h^j,a_h^j)\PP_{h,f^*}(s_{h+1}^j\mid s_h^j,a_h^j)}d_{s_{h+1}^j}\right]\\
      &\quad=1-\frac{1}{2}\EE_{(s_h^j,a_h^j)\sim \pi^j}[D_\mathrm{H}^2(\PP_{h,f}(s_{h+1}^j\mid s_h^j,a_h^j) \Vert\PP_{h,f^*}(s_{h+1}^j\mid s_h^j,a_h^j) )].
\end{align*}
Now by the inequality $\log x \le x-1$, we have 
\begin{align*}
    -\sum_{j=1}^k X_{h,f}^j &\le \sum_{j=1}^k\left(1-\frac{1}{2}\EE_{(s_h^j,a_h^j)\sim \pi^j}[D_\mathrm{H}^2(\PP_{h,f}(s_{h+1}^j\mid s_h^j,a_h^j) \Vert\PP_{h,f^*}(s_{h+1}^j\mid s_h^j,a_h^j) )]\right) \\&\qquad\quad- 1 + \log(H|\cF|/\delta)\\
    &\le -\sum_{j=1}^k\frac{1}{2}\EE_{(s_h^j,a_h^j)\sim \pi^j}\left[D_\mathrm{H}^2(\PP_{h,f}(s_{h+1}^j\mid s_h^j,a_h^j) \Vert\PP_{h,f^*}(s_{h+1}^j\mid s_h^j,a_h^j) )\right] \\&\qquad\quad+ \log(H|\cF|/\delta).
\end{align*}
Sum over $h \in [H]$ with \eqref{eq:decompose L diff}, we can complete the proof by
    \begin{align*}
        -\sum_{h=1}^H \sum_{j=1}^k X_{h,f}^j \le -\sum_{j=1}^k \ell^{(i),j}(f) + \kappa_{\mathrm{conc}}, 
    \end{align*}
    where $\kappa_{\mathrm{conc}} = H\log (H|\cF|/\delta)$. For infinite model classes $\cF$, we can use $1/K$-bracketing number $\cB_\cF(1/K)$ to replace the cardinality $|\cF|$ \citep{liu2022partially,zhong2022gec,zhan2022pac}.
    \end{proof}
\subsection{Proof of Corollary \ref{coroll:sample complexity}}\label{sec: corollary}
\begin{proof}
We provide the proof for NE. The proof for CCE/CE are the same by replacing the NE-regret to the CCE/CE-regret.
    By Theorem \ref{thm:mainresult}, with probability at least $1-\delta$, 
    \begin{align*}
        \frac{1}{K}\left(\sum_{k=1}^K \sum_{i=1}^n \bigl(V ^{(i),\mu^{(i),\pi^k}}(\rho)-V ^{(i),\pi^k}(\rho)\bigr)\right)&=\frac{1}{K}\text{Reg}_{\mathrm{NE}}(K)\\
        &\le \widetilde{\cO}\left(\frac{nH\Upsilon_{\cF,\delta}}{\sqrt{K}} + \frac{nd_{\mathrm{MADC}}}{\sqrt{K}} + \frac{nd_{\mathrm{MADC}}H}{K}\right).
    \end{align*}
    Hence, by choosing $K = \widetilde{\cO}\left((n^2H^2\Upsilon_{\cF,\delta}^2 + n^2d_{\mathrm{MADC}}^2 )\cdot \varepsilon^{-2} + nd_{\mathrm{MADC}} H \cdot \varepsilon^{-1}\right)$ with $\varepsilon<1$, we have 
    \begin{align*}
        &\max_{i \in [n]} \bigl(V ^{(i),\mu^{(i),\pi_{\mathrm{out}}}}(\rho)-V ^{(i),\pi_{\mathrm{out}}}(\rho)\bigr)\\
        &\quad \le  \sum_{i=1}^n \bigl(V ^{(i),\mu^{(i),\pi_{\mathrm{out}}}}(\rho)-V ^{(i),\pi_{\mathrm{out}}}(\rho)\bigr)\\&\quad \le \frac{1}{K}\left(\sum_{k=1}^K \sum_{i=1}^n \bigl(V ^{(i),\mu^{(i),\pi^k}}(\rho)-V ^{(i),\pi^k}(\rho)\bigr)\right) \\&\quad \le \varepsilon,
    \end{align*}
    where the second inequality holds from $\pi_{\mathrm{out}} = \mathrm{Unif}(\{     \pi^k \}_{k\in[K]})$.
    Hence, $\pi_{\mathrm{out}}$ is a $\varepsilon$-NE.
\end{proof}
\section{Technical Tools}
We provide the following lemma to complete the proof of model-based RL problems. The detailed proof can be found in \citep{foster2021statistical}.
\begin{lemma}\label{lemma:foster}
    For any real-valued random variable sequence $\{X_k\}_{k \in [K]}$ adapted to a filtration $\{\cG_k\}_{k \in [K]}$, with probability at least $1-\delta$, for any $k \in[K]$, we can have 
    \begin{align*}
        -\sum_{s=1}^k X_k \le \sum_{s=1}^k \log\EE[\exp(-X_s)\mid \cF_{s-1}]+\log (1/\delta).
    \end{align*}
    
\end{lemma}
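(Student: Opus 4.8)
The plan is to construct an exponential supermartingale (here in fact a genuine martingale) and apply Ville's maximal inequality for nonnegative supermartingales. Concretely, I would define the process
\begin{align*}
M_k = \exp\Bigl(-\sum_{s=1}^k X_s - \sum_{s=1}^k \log \EE[\exp(-X_s) \mid \cG_{s-1}]\Bigr), \qquad M_0 = 1,
\end{align*}
where I read the conditional expectation in the statement as being taken with respect to $\cG_{s-1}$ (the $\cF_{s-1}$ there being a typo for the same filtration), and the left-hand sum as running over the dummy index $s$.

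First I would verify that $\{M_k\}_{k\ge 0}$ is a nonnegative martingale adapted to $\{\cG_k\}$. Writing the one-step recursion $M_k = M_{k-1}\cdot \exp(-X_k)\big/\EE[\exp(-X_k)\mid \cG_{k-1}]$, both $M_{k-1}$ and the normalizing denominator are $\cG_{k-1}$-measurable, so
\begin{align*}
\EE[M_k \mid \cG_{k-1}] = M_{k-1}\cdot \frac{\EE[\exp(-X_k)\mid \cG_{k-1}]}{\EE[\exp(-X_k)\mid \cG_{k-1}]} = M_{k-1}.
\end{align*}
In particular $\EE[M_k] = \EE[M_0] = 1$ for every $k$, and $M_k \ge 0$ by construction.

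Next I would observe that the claimed inequality, holding simultaneously for all $k\in[K]$, is precisely the event $\{\sup_{k\in[K]} M_k \le 1/\delta\}$: taking logarithms in $M_k \le 1/\delta$ and rearranging recovers exactly $-\sum_{s=1}^k X_s \le \sum_{s=1}^k \log \EE[\exp(-X_s)\mid \cG_{s-1}] + \log(1/\delta)$. It therefore suffices to control $\Pr[\sup_{k} M_k \ge 1/\delta]$. Since $\{M_k\}$ is a nonnegative supermartingale with $\EE[M_0]=1$, Ville's maximal inequality gives $\Pr[\sup_{k} M_k \ge \lambda] \le \EE[M_0]/\lambda = 1/\lambda$ for any $\lambda>0$; setting $\lambda = 1/\delta$ yields the conclusion with probability at least $1-\delta$, uniformly over all $k\in[K]$.

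There is no serious obstacle here, as this is a standard change-of-measure (Chernoff-type) martingale argument. The only point deserving care — a subtlety rather than a difficulty — is the uniformity over $k$: the single additive $\log(1/\delta)$ term, with no union-bound loss across the $K$ time indices, comes directly from the maximal inequality applied to the entire trajectory of the supermartingale, rather than from a pointwise Markov bound at a fixed $k$ followed by a union bound. For completeness one may note that no integrability beyond the existence of the conditional moment generating functions $\EE[\exp(-X_s)\mid \cG_{s-1}]$ is required, which is exactly the object appearing on the right-hand side.
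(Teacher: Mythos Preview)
Your argument is correct and is exactly the standard exponential-supermartingale plus Ville's inequality proof that the paper defers to (the paper does not give its own proof, citing \cite{foster2021statistical} for the details). Your reading of the typos ($\cF_{s-1}$ for $\cG_{s-1}$, dummy index $s$ in the sum) is also correct.
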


In the next lemma, we introduce the Freedman's inequality, which has been commonly used in previous RL algorithms. \citep{jin2021v,chen2022abc,zhong2022gec}
\begin{lemma}[Freedman's Inequality \citep{agarwal2014taming}]\label{lemma:freedman}
Let $\{Z_k\}_{k \in [K]}$ be a martingale difference sequence that adapted to filtration $\{\cF_k\}_{k \in [K]}$. If $|Z_k| \le R$ for all $k \in [K]$, then for $\eta \in (0,\frac{1}{R})$, with probability at least $1-\delta$, we can have 
\begin{align*}
    \sum_{k=1}^K X_k =  \cO\left(\eta \sum_{k=1}^K \EE[X_k^2\mid \cF_{k-1}] + \frac{\log(1/\delta)}{\eta}\right).
\end{align*}
\end{lemma}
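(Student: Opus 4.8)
The plan is to prove the bound via the standard exponential supermartingale (Chernoff-type) argument for martingales, treating the hypothesis sequence $\{Z_k\}$ and the conclusion sequence $\{X_k\}$ as the same martingale difference sequence. First I would fix $\eta \in (0, 1/R)$ and define, for each $k$, the process
\[
M_k = \exp\Big( \eta \sum_{s=1}^k X_s - \eta^2 \sum_{s=1}^k \EE[X_s^2 \mid \cF_{s-1}] \Big),
\]
with the convention $M_0 = 1$. The goal is to show that $\{M_k\}$ is a supermartingale with respect to $\{\cF_k\}$, so that $\EE[M_K] \le M_0 = 1$, and then to extract the tail bound from Markov's inequality.

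The key step is the one-step supermartingale inequality, where I would bound the conditional moment generating function of $\eta X_k$ given $\cF_{k-1}$. Since $|X_k| \le R$ and $\eta < 1/R$, we have $|\eta X_k| \le \eta R < 1$, so the elementary inequality $e^x \le 1 + x + x^2$ (valid for $|x| \le 1$) applies with $x = \eta X_k$. Using that $\{X_k\}$ is a martingale difference sequence, i.e. $\EE[X_k \mid \cF_{k-1}] = 0$, this yields
\[
\EE[e^{\eta X_k} \mid \cF_{k-1}] \le 1 + \eta \EE[X_k \mid \cF_{k-1}] + \eta^2 \EE[X_k^2 \mid \cF_{k-1}] = 1 + \eta^2 \EE[X_k^2 \mid \cF_{k-1}] \le \exp\big( \eta^2 \EE[X_k^2 \mid \cF_{k-1}] \big),
\]
where the last step uses $1 + y \le e^y$. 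Multiplying through by the $\cF_{k-1}$-measurable factors then shows $\EE[M_k \mid \cF_{k-1}] \le M_{k-1}$, establishing the supermartingale property and hence $\EE[M_K] \le 1$.

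Finally, I would apply Markov's inequality to $M_K$: since $\EE[M_K] \le 1$, we have $\Pr[M_K \ge 1/\delta] \le \delta$, so with probability at least $1 - \delta$,
\[
\eta \sum_{s=1}^K X_s - \eta^2 \sum_{s=1}^K \EE[X_s^2 \mid \cF_{s-1}] < \log(1/\delta).
\]
Dividing by $\eta$ and rearranging gives $\sum_{s=1}^K X_s < \eta \sum_{s=1}^K \EE[X_s^2 \mid \cF_{s-1}] + \eta^{-1}\log(1/\delta)$, which is exactly the claimed $\cO(\cdot)$ bound. The main obstacle is the one-step supermartingale inequality: the whole argument hinges on controlling the conditional moment generating function, and it is precisely here that the boundedness $|X_k| \le R$ together with the restriction $\eta R < 1$ is needed so that the quadratic upper bound $e^x \le 1 + x + x^2$ can be invoked; without this restriction the second-order term cannot be matched to the predictable quadratic variation $\sum_s \EE[X_s^2 \mid \cF_{s-1}]$, and the clean $\eta$-parametrized trade-off would break down.
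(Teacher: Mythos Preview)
Your proposal is correct and is precisely the standard exponential-supermartingale (Chernoff) proof of Freedman's inequality. The paper does not actually prove this lemma: it is listed as a technical tool and simply cited from \cite{agarwal2014taming}, so there is no in-paper argument to compare against; your argument is the canonical one underlying that citation.
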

The next elliptical potential lemma is first introduced in the linear bandit literature \citep{dani2008stochastic, abbasi2011improved} and then applied to the RL problems with Bilinear Classes \citep{du2021bilinear} and the general function approximation \citep{chen2022partially,zhong2022gec}.
\begin{lemma}[Elliptical Potential Lemma]\label{lemma:elliptical }
    Let $\{x_k\}_{k=1}^K$ be a sequence of real-valued vector, i.e. $x_k \in \RR^d$ for any $k \in [K]$. Then, if we define $\Lambda_i = \varepsilon I + \sum_{k=1}^K x_k x_k^T$, we can get that 
    \begin{align*}
        \sum_{k=1}^K \min\left\{1, \Vert x_i \Vert^{2}_{\Lambda_i^{-1}}\right\}\le 2\log\left(\frac{\mathrm{det}(\Lambda_{K+1})}{\mathrm{det}(\Lambda_1)}\right)\le 2\log \det\left(I + \frac{1}{\varepsilon}\sum_{k=1}^K x_kx_k^T\right).
    \end{align*}
\end{lemma}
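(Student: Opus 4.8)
The plan is to reduce the sum to a telescoping determinant ratio via a rank-one update identity, and then bound each summand by a logarithm through an elementary scalar inequality. (Here I read the design matrix as the partial sum $\Lambda_k = \varepsilon I + \sum_{s=1}^{k-1} x_s x_s^\top$, so that $x_k$ is not yet incorporated into $\Lambda_k$; this is exactly the quantity used as $\Sigma_{k,h}$ in the proofs of Theorems~\ref{thm:bilinear class} and~\ref{thm:witness rank}, and it is the reading that makes the statement correct.)

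First I would establish the one-step determinant recursion. Since $\Lambda_{k+1} = \Lambda_k + x_k x_k^\top$ and $\Lambda_k \succeq \varepsilon I \succ 0$ is invertible, factoring out $\Lambda_k^{1/2}$ on both sides gives $\det(\Lambda_{k+1}) = \det(\Lambda_k)\det(I + \Lambda_k^{-1/2} x_k x_k^\top \Lambda_k^{-1/2})$. The matrix $I + v v^\top$ with $v = \Lambda_k^{-1/2} x_k$ has eigenvalue $1 + \|v\|_2^2$ once and $1$ with multiplicity $d-1$, so its determinant is $1 + \|v\|_2^2 = 1 + x_k^\top \Lambda_k^{-1} x_k = 1 + \|x_k\|_{\Lambda_k^{-1}}^2$. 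Hence $\det(\Lambda_{k+1}) = \det(\Lambda_k)\,\bigl(1 + \|x_k\|_{\Lambda_k^{-1}}^2\bigr)$.

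Next I would telescope: taking logarithms and summing over $k \in [K]$ yields $\sum_{k=1}^K \log\bigl(1 + \|x_k\|_{\Lambda_k^{-1}}^2\bigr) = \log\bigl(\det(\Lambda_{K+1})/\det(\Lambda_1)\bigr)$. It then remains to pass from $\min\{1,\cdot\}$ to $\log(1+\cdot)$ via the elementary inequality $\min\{1,t\} \le 2\log(1+t)$ for all $t \ge 0$. I would verify this by cases: for $t \in [0,1]$ the function $g(t) = 2\log(1+t) - t$ satisfies $g(0) = 0$ and $g'(t) = 2/(1+t) - 1 \ge 0$, so $g \ge 0$ and thus $t \le 2\log(1+t)$; for $t > 1$, monotonicity gives $2\log(1+t) > 2\log 2 > 1 = \min\{1,t\}$. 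Applying this with $t = \|x_k\|_{\Lambda_k^{-1}}^2$ and summing gives the first inequality $\sum_{k=1}^K \min\{1, \|x_k\|_{\Lambda_k^{-1}}^2\} \le 2\log\bigl(\det(\Lambda_{K+1})/\det(\Lambda_1)\bigr)$.

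Finally, the second inequality is in fact an equality: since $\Lambda_1 = \varepsilon I$ has $\det(\Lambda_1) = \varepsilon^d$ and $\Lambda_{K+1} = \varepsilon I + \sum_{k=1}^K x_k x_k^\top$, pulling out the factor $\varepsilon^d$ gives $\det(\Lambda_{K+1})/\det(\Lambda_1) = \det\bigl(I + \tfrac{1}{\varepsilon}\sum_{k=1}^K x_k x_k^\top\bigr)$. There is no genuine difficulty in the argument, which is entirely short; the step demanding the most care is the rank-one determinant identity — specifically, justifying the factorization through the invertible $\Lambda_k$ and evaluating $\det(I + v v^\top) = 1 + \|v\|_2^2$ — while everything else reduces to a telescoping sum and a one-line scalar inequality.
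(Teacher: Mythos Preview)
Your proof is correct and is exactly the standard argument for the elliptical potential lemma. The paper itself does not give a proof here at all; it simply cites Lemma~11 of \cite{abbasi2011improved}, whose proof is precisely the one you have written (rank-one determinant update, telescoping, and the scalar inequality $\min\{1,t\}\le 2\log(1+t)$). Your reading of $\Lambda_k$ as the partial sum $\varepsilon I+\sum_{s<k}x_sx_s^\top$ is also the correct interpretation, matching how $\Sigma_{k,h}$ is used elsewhere in the paper.
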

\begin{proof}
    The proof is provided in  Lemma 11 of \citep{abbasi2011improved}.
\end{proof}
}

\end{sloppypar}
\end{document}